\def\@copyrightspace{}
\definecolor{maroon}{cmyk}{0,0.87,0.68,0.32}
\definecolor{yellow}{cmyk}{0,0,1,0}
\newtheorem{theorem}{Theorem}
\newtheorem{lemma}{Lemma}
\newtheorem{assumption}{Assumption}
\definecolor{mydarkgreen}{RGB}{39,130,67}
\definecolor{mydarkred}{RGB}{192,47,25}
\definecolor{mypurple}{RGB}{250, 150, 250}
\newcommand{\green}{\color{mydarkgreen}}
\newcommand{\red}{\color{mydarkred}}
\definecolor{mygreen}{rgb}{0.75,1,0.75}
\newcommand{\cmark}{{\green\ding{51}}}
\newcommand{\xmark}{{\red\ding{55}}}
\algrenewcommand{\algorithmiccomment}[1]{\textcolor{gray}{\scriptsize\itshape // #1}}
\newlist{assumlist}{enumerate}{1}
\setlist[assumlist,1]{
    label=(\alph*),
    ref=\theassumption(\alph*),
    align=left,
    leftmargin=0.4cm
}
\begin{document}

% If your paper is accepted and the title of your paper is very long,
% the style will print as headings an error message. Use the following
% command to supply a shorter title of your paper so that it can be
% used as headings.
%
\runningtitle{Variance Reduction Methods Do Not Need to Compute Full Gradients}

% If your paper is accepted and the number of authors is large, the
% style will print as headings an error message. Use the following
% command to supply a shorter version of the author names so that
% they can be used as headings (for example, use only the surnames)
%
\runningauthor{D. Medyakov, G. Molodtsov, S. Chezhegov, A. Rebrikov, A. Beznosikov}

\twocolumn[

\aistatstitle{Variance Reduction Methods Do Not Need to Compute Full Gradients: Improved Efficiency through Shuffling}

\aistatsauthor{
  Daniil~Medyakov\textsuperscript{1,2}\And
  Gleb~Molodtsov\textsuperscript{1,2}\And
  Savelii~Chezhegov\textsuperscript{1,2}}
\aistatsauthor{
  Alexey~Rebrikov\textsuperscript{2}\And
  Aleksandr~Beznosikov\textsuperscript{1,2}
}

\aistatsaddress{
  \textsuperscript{1}
  Federated Learning Problems Laboratory \\
  \textsuperscript{2}
  Basic Research of Artificial Intelligence Laboratory (BRAIn Lab)
}
]

\begin{abstract}
  Stochastic optimization algorithms are widely used for machine learning with large-scale data. However, their convergence often suffers from non-vanishing variance. Variance Reduction (VR) methods, such as \textsc{SVRG} and \textsc{SARAH}, address this issue but introduce a bottleneck by requiring periodic full gradient computations. In this paper, we explore popular VR techniques and propose an approach that eliminates the necessity for expensive full gradient calculations. To avoid these computations and make our approach memory-efficient, we employ two key techniques: the shuffling heuristic and the concept of \textsc{SAG/SAGA} methods. For non-convex objectives, our convergence rates match those of standard shuffling methods, while under strong convexity, they demonstrate an improvement. We empirically validate the efficiency of our approach and demonstrate its scalability on large-scale machine learning tasks including image classification problem on \textsc{CIFAR-10} and \textsc{CIFAR-100} datasets.
  % In today's world, machine learning is difficult to imagine without large training datasets and models. This has led to the application of stochastic methods for training, such as \textsc{SGD}. \textsc{SGD} provides weak theoretical guarantees of convergence related to non-decreasing variance; however, modifications such as \textsc{SVRG} and \textsc{SARAH} can mitigate this issue. These methods require occasional computation of the full gradient, which can be time-consuming. In this paper, we explore popular variance reduction algorithms that eliminate the necessity for тfull gradient computations. To make our approach memory-efficient and to avoid these computations, we employ two key techniques: the shuffling heuristic and the concept of \textsc{SAG/SAGA} methods. Consequently, we enhance existing estimates for variance reduction algorithms without requiring full gradient computations. For the non-convex objective function, our estimate matches that of classic shuffling methods, while for the strongly convex one, it represents an improvement. We conduct a comprehensive theoretical analysis and provide extensive experimental results to validate the efficiency and practicality of our methods for large-scale machine learning problems.
\end{abstract}

\section{INTRODUCTION}\label{sec:introduction}

The pursuit of enhanced performance in machine learning has resulted in increasingly large training datasets. The standard approach for scalable training in this context is to formulate the task as a finite-sum minimization problem:
\vspace{-1mm}
\begin{equation}
\label{eq:finite-sum}
% \vspace{-4mm}
\textstyle{\underset{x\in\mathbb R^d}{\min}\Bigl[f(x) = \frac{1}{n}\sum\limits_{i=1}^n f_i(x)\Bigr],}
% \vspace{-3mm}
\end{equation}
where $f_i: \mathbb R^d \rightarrow \mathbb R$ and the number of functions $n$ is large. When training machine learning models, $n$ represents the size of the training set, and $f_i(x)$ denotes the loss of the model on the $i$-th data point, where $x \in \mathbb{R}^d$ is the vector of model parameters.

Stochastic methods are widely used to solve this problem as they do not calculate full gradients at each iteration. This necessity is prohibitively expensive in real-world problems due to the large $n$ values. Most well-known stochastic methods for solving the problem are \textsc{SGD} \citep{robbins1951stochastic, moulines2011non} and its various modifications \citep{ghadimi2013stochastic, ghadimi2016mini, lan2020first}. At iteration $t$, the algorithm selects a single index $i_t$ from the set $\{1, \ldots, n\}$ and performs the following step with the stepsize $\gamma$.
%\vspace{-0.1cm}
\begin{equation*}
    x^{t+1} = x^t - \gamma\nabla f_{i_t} (x^t).
\end{equation*}
%\vspace{-0.1cm}

\paragraph{Variance reduction technique.} Despite the simplicity of the classic \textsc{SGD} method and the extensive study of its properties, it suffers from one significant drawback: the variance of its stochastic gradient estimators remains high throughout the learning process. Consequently, \textsc{SGD} with a constant learning rate achieves linear convergence only to a neighborhood of the optimal solution, whose size is proportional to the stepsize and the variance \citep{gower2020variance}. The variance reduction (VR) technique \citep{johnson2013accelerating} was proposed to address this issue. Some of the most popular methods based on this technique are \textsc{SVRG} \citep{johnson2013accelerating}, \textsc{SARAH} \citep{nguyen2017sarah, hu2019efficient}, \textsc{SAG} \citep{roux2012stochastic}, \textsc{SAGA} \citep{defazio2014saga}, \textsc{Finito} \citep{defazio2014finito}, \textsc{SPIDER} \citep{fang2018spider}. In this paper, we analyze two of these algorithms: \textsc{SVRG} and \textsc{SARAH}. 

We first outline the \textsc{SVRG} method, formalized as
%\vspace{-4mm}
\begin{align}
\label{eq:vr}
\begin{split}
    &v^t = \nabla f_{i_t} (x^t) - \nabla f_{i_t} (\omega^t) + \nabla f (\omega^t), \\
    &x^{t+1} = x^t - \gamma v^t,
\end{split}
\end{align}
%\vspace{-6mm}
where index $i_t$ is selected at the $t$-th iteration. Regarding the reference point $\omega^t$, it should be updated periodically, either after a fixed number of iterations (e.g., once per epoch) or probabilistically (as in loopless versions -- see \citep{kovalev2020don}). The goal of update mechanisms as \eqref{eq:vr} is to move beyond the limitations of naive gradient estimators. It employs an iterative process to construct and apply a gradient estimator with progressively diminished variance. This approach allows for the safe use of larger learning rates, thereby accelerating the training process. Now, we present another method, \textsc{SARAH}, which employs the recursive estimator update:
\begin{align}
\label{eq:sarah}
\begin{split}
    &v^t = \nabla f_{i_t} (x^t) - \nabla f_{i_t} (x^{t-1}) + v^{t-1}, \\&x^{t+1} = x^t - \gamma v^t. 
\end{split}
\end{align}
To converge to the optimal solution $x^*$, this methods requires periodical updates of $v^t$ using the full gradient \citep{nguyen2017sarah}. As in \textsc{SVRG}, this process restarts after either a fixed number of iterations or probabilistically \citep{li2020convergence}. The practical variant, \textsc{SARAH+} \citep{nguyen2017sarah}, outperforms \textsc{SVRG} by automating the full-gradient update schedule through a heuristic based on the ratio $\nicefrac{|v^t|}{|v^0|}$.

\paragraph{Shuffling heuristic.}
The choice of sample indices ($i_t$) at each iteration critically impacts the convergence and stability of stochastic optimization, yet is often overlooked. Below, we describe the heuristic used for this selection in our algorithms. In classic stochastic methods, the index $i_t$ is selected randomly and independently at each iteration. In turn, we take a more practical approach, utilizing the shuffling heuristic \citep{bottou2009curiously, mishchenko2020random, safran2020good}. We first permute the sequence of indices ${1, \ldots, n}$. Then, we select an index based on its position in the permutation during an epoch. There are several popular data shuffling methods. One of them is Random Reshuffle (RR), which involves shuffling the data before each epoch. Another approach is Shuffle Once (SO), where the data is shuffled only once at the beginning of training. There is also Cyclic Permutation, which accesses the data in a fixed, cyclic order without any randomness. In our study, we do not explore the differences between these approaches. Instead, we highlight one important common property among them: during one epoch, we calculate the stochastic gradient for each data sample exactly once. 

Let us formalize this setting. At each epoch $s$, we have a set of indices $\{\pi_s^0, \pi_s^1, \ldots, \pi_s^{n-1}\}$ that is a random permutation of the set $\{0, 1, \ldots, n-1\}$. Then, for example, the \textsc{SVRG} update \eqref{eq:vr} at the $t$-th iteration of this epoch transforms into
%\vspace{-1mm}
\begin{equation*}
    v^t = \nabla f_{\pi_s^t} (x_s^t) - \nabla f_{\pi_s^t} (\omega^t) + \nabla f (\omega^t).
\end{equation*}
%\vspace{-6mm}
Nevertheless, the analysis of shuffling methods has some specific details. The key difference between shuffling and independent choice is that shuffling methods lack one essential property: the unbiasedness of stochastic gradients derived from the i.i.d. sampling.
%\vspace{-0mm}
\begin{equation*}
\vspace{-1mm}
        \mathbb{E}_{\pi_s^t} \left[\nabla f_{\pi_s^t}(x_s^t)\right] \neq \frac{1}{n}\sum\limits_{i = 1}^n \nabla f_{i}(x_s^t) = \nabla f(x_s^t).
        \vspace{-1mm}
\end{equation*}
This restriction leads to a more complex analysis and non-standard proof techniques.

%This work was inspired by the list of following works, the SVRG method and Shuffling approach was investigated: \cite{mishchenko2020random, koloskova2023shuffle, gorbunov2020unified, li2020unified, malinovsky2023random}

\section{BRIEF LITERATURE REVIEW}
\paragraph{No full gradient methods.} \textsc{SVRG} \eqref{eq:vr} and \textsc{SARAH} \eqref{eq:sarah} are now the standard choices for solving finite-sum problems. Nevertheless, they necessitate the full gradient computation of the target function. There is a considerable interest in VR methods that avoid calculating full gradients. While \textsc{SAG} \citep{roux2012stochastic} and \textsc{SAGA} \citep{defazio2014saga} address this issue, they require additional memory usage, with a complexity of $\mathcal{O}(nd)$. There also were some attempts to eliminate the computation of the full gradient in \textsc{SARAH}. The first approach was proposed in \citep{nguyen2021inexact}. The authors introduced an inexact \textsc{SARAH} algorithm, where they replaced the computation of the full gradient by a mini-batch gradient estimate $\frac{1}{|S|}\sum_{i\in S} f_i(x), S\subset \{1, \ldots, n\}$. To converge to the solution with $\varepsilon$-accuracy, this algorithm requires a batch size $|S|\sim \mathcal{O}\left(\frac{1}{\varepsilon}\right)$ and a stepsize $\gamma\sim\mathcal{O}\left(\frac{\varepsilon}{L}\right)$. Another approach employs the recursive \textsc{SARAH} update of the full gradient estimator. In a set of works, the authors proposed a hybrid scheme without restarts:
\begin{equation*}
    v^t = \beta_t\nabla f_{i_t} (x^t) + (1-\beta_t)(\nabla f_{i_t} (x^t) - \nabla f_{i_t} (x^{t-1}) + v^{t-1}).
\end{equation*}
In the work \citep{liu2020optimal}, this scheme was used with a constant parameter $\beta_t = \beta$. The \textsc{STORM} method \citep{cutkosky2019momentum} considers $\beta_t$ decreasing to zero and \textsc{ZeroSARAH} \citep{li2021zerosarah} combines it with \textsc{SAG/SAGA}. 

%\vspace{-0.5cm}
\begin{table*}[t]
\centering
\small
\caption{Comparison of the Convergence Results.}
\label{table1}
\begin{threeparttable}
\begin{tabular}{|c|c|c|c|c|c|}
\hline
\textbf{Algorithm} & 
\begin{tabular}{@{}l@{}}
\textbf{No} \textbf{Full} \\  \textbf{Grad.?}
\end{tabular}
 & \textbf{Memory} & \textbf{Non-Convex} & \textbf{Strongly Convex} \\ \hline
SAGA \cite{park2020linear} & \cmark &  \red{\(\mathcal{O}(nd)\)} & \(\backslash\) & \(\mathcal{O}\left(n\frac{L^{\red{2}}}{\mu^{\red{2}}}\log\left(\frac{1}{\varepsilon}\right)\right)\) \\ \hline
IAG \cite{gurbuzbalaban2017convergence}& \cmark &  \red{\(\mathcal{O}(nd)\)} & \(\backslash\) & \(\mathcal{O}\left(n^{\red{2}}\frac{L^{\red{2}}}{\mu^{\red{2}}}\log\left(\frac{1}{\varepsilon}\right)\right)\) \\ \hline
PIAG \cite{vanli2016stronger} & \cmark &  \red{\(\mathcal{O}(nd)\)} & \(\backslash\) & \(\mathcal{O}\left(n\frac{L}{\mu}\log\left(\frac{1}{\varepsilon}\right)\right)\) \\ \hline
DIAG \cite{mokhtari2018surpassing} & \cmark &  \red{\(\mathcal{O}(nd)\)} & \(\backslash\) & \(\mathcal{O}\left(n\frac{L}{\mu}\log\left(\frac{1}{\varepsilon}\right)\right)\) \\ \hline
Prox-DFinito \cite{huang2021improved}& \cmark & \red{\(\mathcal{O}(nd)\)} & \(\backslash\) & \(\mathcal{O}\left(n\frac{L}{\mu}\log\left(\frac{1}{\varepsilon}\right)\right)\) \\ \hline
AVRG \cite{ying2020variance} & \cmark &  \(\mathcal{O}(d)\) & \(\backslash\) & \(\mathcal{O}\left(n\frac{L^{\red{2}}}{\mu^{\red{2}}}\log\left(\frac{1}{\varepsilon}\right)\right)\) \\ \hline
SVRG \cite{sun2019general} & \xmark &  \(\mathcal{O}(d)\) & \(\backslash\) & \(\mathcal{O}\left(n^{\red{3}}\frac{L^{\red{2}}}{\mu^{\red{2}}}\log\left(\frac{1}{\varepsilon}\right)\right)\) \\ \hline
SVRG \cite{malinovsky2023random} & \xmark &  \(\mathcal{O}(d)\) & \(\mathcal{O}\left(\frac{nL}{\varepsilon^2}\right)\) & \(\mathcal{O}\left(n\frac{L^{\red{\nicefrac{3}{2}}}}{\mu^{\red{\nicefrac{3}{2}}}}\log\left(\frac{1}{\varepsilon}\right)\right)^{\red{\text{(1)}}}\) \\ \hline
SARAH \cite{beznosikov2023random} & \cmark & \(\mathcal{O}(d)\) & \(\backslash\) & \(\mathcal{O}\left(n^{\red{2}}\frac{L}{\mu}\log\left(\frac{1}{\varepsilon}\right)\right)\) \\  \hline
\rowcolor{mygreen} SVRG (Algorithm \ref{alg2} in this paper) & \cmark & \(\mathcal{O}(d)\) & \(\mathcal{O}\left(\frac{nL}{\varepsilon^2}\right)\) & \(\mathcal{O}\left(n\frac{L}{\mu}\log\left(\frac{1}{\varepsilon}\right)\right)\) \\ \hline
\rowcolor{mygreen} SARAH (Algorithm \ref{alg2} in this paper) & \cmark & \(\mathcal{O}(d)\) & \(\mathcal{O}\left(\frac{nL}{\varepsilon^2}\right)\) & \(\mathcal{O}\left(n\frac{L}{\mu}\log\left(\frac{1}{\varepsilon}\right)\right)\) \\ \hline
\end{tabular}
% \vspace{-1mm}
\begin{tablenotes}
\small
%\hspace{-4mm}\begin{minipage}{\columnwidth}
    \item [] {\em Columns:} No Full Grad.?= whether the method computes full gradients, Memory = amount of additional memory.
    \item [] {\em Notation:} $\mu$ = constant of strong convexity, $L$ = smoothness constant, $n$ =  size of the dataset, $d$ = dimension of the problem, $\varepsilon$ = accuracy of the solution.
    \item [] {\red{(1)}}: In this work, there are also improved results that hold in the big data regime: $n \gg \mathcal{O}\left(\frac{L}{\mu}\right)$, but it is out of the scope of this work.
\end{tablenotes}    
\end{threeparttable}
%\vspace{-5mm}
\vspace{-4mm}
\end{table*}

We now examine methods that eliminate full-gradient computations and leverage shuffling strategies, as this combination aligns with the core objective of our work. Several studies have pursued this direction, including \textsc{IAG} \citep{gurbuzbalaban2017convergence}, \textsc{PIAG} \citep{vanli2016stronger}, \textsc{DIAG} \citep{mokhtari2018surpassing}, \textsc{SAGA} \citep{park2020linear, ying2020variance}, and \textsc{Prox-DFinito} \citep{huang2021improved}, which naively store stochastic gradients. 
Among them, \textsc{PIAG}, \textsc{DIAG}, \textsc{Prox-DFinito} provide the best oracle complexity for methods with the shuffling heuristic as of now \footnote{In \citep{malinovsky2023random}, the authors derived estimates that outperform those in the discussed works. However, these results were obtained under the big-data regime: $n \gg \nicefrac{L}{\mu}$, which is a specific assumption. For this reason, we do not consider them in the current work.}. Nevertheless, they still require $\mathcal{O}(nd)$ of extra memory, which is not optimal for large-scale problems. Methods \textsc{AVRG} \citep{ying2020variance} and the modification of \textsc{SARAH} \citep{beznosikov2023random} eliminate this issue. However, they deteriorate the oracle complexity. Thus, there are still no VR methods that obviate the need for computing full gradients while being optimal in terms of additional memory. We, in turn, address this gap.

\paragraph{Shuffling methods.} Given that our approach relies on a shuffling heuristic, we provide a review of existing shuffling techniques alongside a complexity comparison with our method. A key property of shuffling in this context is that it guarantees each component function's gradient is computed precisely once per epoch. It has been demonstrated that Random Reshuffle converges more rapidly than \textsc{SGD} on multiple practical tasks \citep{bottou2009curiously, recht2013parallel}.

Nevertheless, the theoretical estimates of shuffling methods remained significantly worse than those of \textsc{SGD}-like methods \citep{rakhlin2012making, drori2020complexity, nguyen2019tight}. A breakthrough was the work \citep{mishchenko2020random} which presented new proof techniques and approaches to interpret shuffling. In particular, the results for strongly convex problems coincided with those of \textsc{SGD} which utilized an independent choice of indices \citep{moulines2011non, stich2019unified}. However, in the non-convex case, the results remained inferior to those of classic \textsc{SGD} \citep{ghadimi2013stochastic}. Furthermore, a major issue was the requirement for a large number of epochs to obtain convergence estimates in the non-convex case. Since modern neural networks are trained on a relatively small number of epochs, this requirement is unnatural. The solution to this problem was presented in the work \citep{koloskova2024convergence}. The authors based their analysis on convergence over a shorter period, termed the correlation period, rather than over entire epoch. This technique helped to improve rate.

In these works, shuffling was analyzed primarily for vanilla \textsc{SGD}. However, \textsc{SGD} is suboptimal for finite-sum problems due to its variance \citep{zhang2020tight, allen2018katyusha}.
Shuffling was later extended to variational inequalities \citep{beznosikov2023smooth}, specifically, to \textsc{Extragradient} \citep{medyakov2024shuffling}, maintaining classic convergence rates.
Consequently, researchers focused on the shuffling heuristic in conjunction with variance reduction methods, achieving linear convergence for these methods.
% Most of all, the \textsc{SVRG} method was investigated.
% The first theoretical guarantees for the \textsc{Extragradient} method with the variance reduction scheme was obtained in \citep{medyakov2024shuffling}, who succeeded in obtaining a linear convergence estimate for methods with shuffling in the VI problem.
In finite-sum minimization, we pay special attention to the works \citep{malinovsky2023random, sun2019general}. However, the theoretical estimates presented in these papers remain significantly below those of methods with an independent choice of indices \citep{allen2016variance}. This paper improves the estimates for the strongly convex objective.

%\vspace{-1mm}
The convergence results from the aforementioned works are presented in Table \ref{table1}. Our results are compared with studies employing a shuffling setting that achieves linear convergence.

\section{CONTRIBUTIONS}\label{sec:contribution}
Our main contributions are summarized as follows.

$\bullet$ \textbf{Full gradient approximation.} We explore an approach whose key feature is approximating the full gradient of the objective function using a shuffling heuristic. Moreover, this method \textit{does not require} additional $\mathcal{O}(nd)$ memory. Instead, we iteratively construct the approximation during an epoch.\\
$\bullet$ \textbf{No need to compute full gradient in variance reduction algorithms.} We construct an analysis for the full gradient approximation and enable its application to two variance reduction methods:
\begin{itemize}
\vspace{-2mm}
    \item [(a)] Classic \textsc{SVRG} \citep{johnson2013accelerating},
    \item [(b)] \textsc{SARAH} in the closest form to one in \citep{beznosikov2023random}. We transform their algorithm to improve convergence rates.
\end{itemize}
$\bullet$ \textbf{Convergence results.} We obtain convergence results under various assumptions on the objective function. We consider both the non-convex case, which is of greatest interest in contemporary machine learning problems, and the strongly convex case. To the best of our knowledge, our convergence guarantees are superior to existing variance reduction methods that do not compute the full gradient. Furthermore, we enhance upper bounds of shuffling methods for the strongly convex case.\\
$\bullet$ \textbf{Experiments.} We empirically validate our methods on \textsc{CIFAR-10} and \textsc{CIFAR-100} using the ResNet-18 model, demonstrating that our methods achieve faster and more stable convergence than prior baselines.

\section{ASSUMPTIONS}\label{sec:setup}

We present a list of assumptions below.

% \vspace{-2mm}
\begin{assumption}\label{as1}
    Each function $f_i$ is $L$-smooth, i.e., it satisfies $\|\nabla f_i(x) - \nabla f_i(y)\| \leq L\|x - y\|$ for any $x, y \in \mathbb{R}^d$.
\end{assumption}
%\vspace{-2mm}
\begin{assumption}\label{as2}
    \ 
   %\vspace{-2mm}
    \begin{assumlist}
        \item \label{as2stronglyconvex}
        \text{Strong Convexity:} Each function $f_i$ is $\mu$-strongly convex, i.e., for any $x, y \in \mathbb{R}^d$, it satisfies 
        $$ f_i(y) \geqslant f_i(x) + \langle \nabla f_i(x), y - x \rangle + \frac{\mu}{2}\|y - x\|^2 .$$  
        %\vspace{-6mm}
        \item \label{as2nonconvex} 
        \text{Non-Convexity:}
        The function $f$ has a (may be not unique) finite minimum, i.e. $f^* = \inf\limits_{x \in \mathbb{R}^d} f(x) > - \infty$.  
    \end{assumlist}
\end{assumption}

\section{ALGORITHMS AND CONVERGENCE ANALYSIS}\label{sec:main}

\subsection{Full gradient approximation}\label{subsection:gradientapprox}

In this section, we introduce an approach based on the shuffling heuristic and \textsc{SAG/SAGA} ideas. It approximates the full gradient while optimizing memory usage by avoiding the storage of past gradient values.
The \textsc{SAG} algorithm is one of the early methods designed to improve the convergence speed of stochastic gradient methods by reducing the variance of gradient updates. In \textsc{SAG}, the update step can be written as
\begin{align}\label{eq:sag}
    \textstyle{x^{t+1} \!=
\!x^t \!-\! \frac{\gamma}{n} \Bigl( \nabla f_{i_t}(x^t) \!-\! \nabla f_{i_t}(\phi_{i_t}^t) \!+\!  \sum\limits_{j=1}^n \nabla f_j(\phi_j^t) \Bigr),\!}
\end{align}
where \(\phi_j^t\) represents the past iteration at which the gradients for the $j$-th function are considered. The core idea is to store old gradients for each function (essentially storing gradients $\nabla f_j(\phi_j)$ rather than points \(\phi_j\)), and update one component of this sum with a newly computed gradient at each iteration.
As \(i_t\) are sampled randomly, it is unclear when the gradient for a specific index was last computed. However, in the case of shuffling, we know that during an epoch, the gradient for each $\nabla f_j$ is computed. Thus, at the beginning of an epoch, we reliably approximate the gradient in the same way as in \textsc{SAG}:
\begin{align} \label{eq:withoutfullgrad}
    &\textstyle{v_{s+1} = \frac{1}{n} \sum\limits_{t=1}^n \nabla f_{\pi_s^t}(x_s^t),}
\end{align}
where \(\pi_s^t\) is the sequence of data points after shuffling at the beginning of epoch \(s\). This computation is performed without additional memory, using a simple moving average during the previous epoch:
\begin{align}
\label{eq:withoutfullgradmovingaverage}
\begin{split}
    \textstyle{\widetilde{v}_s^{t+1}} = \textstyle{\frac{t}{t+1} \widetilde{v}_s^{t} + \frac{1}{t+1} \nabla f_{\pi_s^t}(x_s^t); ~~ v_{s+1} = \widetilde{v}_s^n.}
\end{split}
\end{align}
It is straightforward to prove \eqref{eq:withoutfullgradmovingaverage} matches \eqref{eq:withoutfullgrad}, and we demonstrate this in the proof of Lemma \ref{ngl3}.

\subsection{SVRG without full gradients}\label{subsection:svrgalgorithm}

As previously mentioned, variance reduction methods face a significant obstacle: they require the computation of the full gradient once per epoch or necessitate additional memory of a larger size. To address this limitation, in this section, we propose a novel algorithm based on the classical \textsc{SVRG} that eliminates the need for full gradient computation and optimizes memory usage by avoiding the storage of past gradient values.

In Section \ref{subsection:gradientapprox}, we defined the technique to approximate the full gradient: \eqref{eq:withoutfullgrad}, \eqref{eq:withoutfullgradmovingaverage}. However, this approach introduces a question: how should we alter and use the gradient approximation \(v_s\) throughout an epoch? In \textsc{SAG} \eqref{eq:sag}, we added a new \(\nabla f_{i_t}(x^t)\) and removed its previous version $\nabla f_{i_t}(\phi_{i_t}^t)$, but this requires memory for all \(\nabla f_{i_t}\) (\(\phi_{i_t}^t\)). Building on the concept from \textsc{SVRG} \eqref{eq:vr}, rather than computing $\nabla f_{i_t}$ ($\phi_{i_t}^t$), our update uses the gradient at a reference point $\omega_s$. While \textsc{SVRG} uses the full gradient at this reference point as shown in \eqref{eq:vr}, we utilize an approximation provided by \eqref{eq:withoutfullgrad}:
\begin{align*}
    v_s^{t} &= \nabla f_{\pi_s^t}(x_s^t) - \nabla f_{\pi_s^t}(\omega_s) + v_s.
\end{align*}
Finally, we perform the step of the algorithm in Line \ref{alg2:line8}, where the approximation $v_s$ is calculated in the previous epoch as \eqref{eq:withoutfullgradmovingaverage} in Line \ref{alg2:line6}. We now present the formal description of \textsc{No Full Grad SVRG} (Algorithm \ref{alg2}). The outstanding issue is selecting the point $\omega_s$ (Line \ref{alg2:line11}). The approximation $v_s$, representing the full gradient at $\omega_s$, is derived from gradients evaluated at points between $x_{s-1}^1$ and $x_{s-1}^n$. A reasonable choice for $\omega_s$ appears to be the average of these points. However, during this epoch, we are continuously moving away from this point, computing \(\nabla f_{\pi_s^t}(x_s^t)\) and adding to the reduced gradient. Consequently, the average point changes over time.
By the end of the current epoch, it evolves into an average calculated from points ranging from $x_s^1$ to $x_s^n$. Thus, choosing \(\omega_s\) as the last point from the previous epoch is a logical compromise, since we estimate not only how far we can move during the past epoch but also how far we have moved during the current one (see Lemma \ref{ngl2}).
An intriguing question for future research is whether more frequent or adaptive updates of $\omega_s$ could further improve convergence rates \citep{allen2016variance}.

\begin{algorithm}[H]
\caption{\textsc{No Full Grad SVRG}}\label{alg2}
\begin{algorithmic}[1]
    \State \textbf{Input:} Initial points $x_0^0\in\mathbb{R}^d, \omega_0 = x_0^0$; Initial gradients $\widetilde{v}_0^0 = 0^d, v_0 = 0^d$
    \State \textbf{Parameter:} Stepsize $\gamma > 0$
    \For {epochs $s = 0, 1, 2, \ldots, S$}
    \State Sample a permutation $\pi^0_s, \dots, \pi^{n-1}_s$ of $\overline{0, n-1} $~~\Comment{Sampling depends on shuffling heuristic}
    \For {$t = 0, 1, 2, \ldots, n-1$}
    \State \label{alg2:line6} $\widetilde{v}_s^{t+1} = \frac{t}{t+1} \widetilde{v}_s^{t} + \frac{1}{t+1} \nabla f_{\pi_s^t}(x_s^t)$
    \State \label{alg2:line7} $v_s^{t} = \nabla f_{\pi_s^t}(x_s^t) - \nabla f_{\pi_s^t}(\omega_s) + v_s$
    \State \label{alg2:line8} $x_s^{t+1} = x_s^t - \gamma v_s^t$
    \EndFor
    \State $x_{s+1}^0 = x_s^n$
    \State \label{alg2:line11} $\omega_{s+1} = x_s^n$
    \State \label{alg2:line12} $\widetilde{v}_{s+1}^0 = 0^d$
    \State $v_{s+1} = \widetilde{v}_s^n$
    \EndFor
\end{algorithmic}
\end{algorithm}
\vspace{-4mm}

With this dynamic strategy, our approach improves the efficiency of gradient updates and optimizes memory usage, making it suitable for large-scale optimization problems. Now, we proceed to the theoretical analysis. The problem is examined in both non-convex and strongly convex settings.

\subsubsection{Non-convex setting}\label{subsection:svrgnonconvex}

For a more detailed analysis of this method, we examine the interim results. Our analysis is structured as follows. First, we dissect convergence over a single epoch. Next, we extend this recursively across all epochs. The crucial aspect here is understanding how gradients change within an epoch. To begin, we need to show that these changes depend on two critical factors. First, how well we approximate the true full gradient at the start of each epoch. Second, how far our updates deviate from this initial reference point as we progress through it. To obtain this, we present a lemma.

\begin{lemma}\label{ngl2}
Suppose Assumptions \ref{as1}, \ref{as2} hold. Then for Algorithm \ref{alg2} a valid estimate is
    \begin{align*}
    \textstyle{\left\| \nabla f(\omega_s) - \frac{1}{n}\sum\limits_{t=0}^{n-1} v_s^t\right\|^2 \hspace{-1mm}\leqslant}&\textstyle{ ~~2\|\nabla f(\omega_{s}) - v_s \|^2} \\
    & \textstyle{+ \frac{2L^2}{n}\sum\limits_{t=0}^{n-1} \|x_s^t - \omega_s\|^2.}
    \end{align*}
\end{lemma}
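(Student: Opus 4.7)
The plan is to rewrite the quantity inside the norm as a sum of two error terms, then apply the standard $\|a+b\|^2\leq 2\|a\|^2+2\|b\|^2$ inequality and control each piece separately.

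First I would plug in the definition from Line~\ref{alg2:line7}, namely $v_s^t = \nabla f_{\pi_s^t}(x_s^t) - \nabla f_{\pi_s^t}(\omega_s) + v_s$, and use the key shuffling identity: since $\pi_s^0,\dots,\pi_s^{n-1}$ is a permutation of $\{0,\dots,n-1\}$, one has $\frac{1}{n}\sum_{t=0}^{n-1}\nabla f_{\pi_s^t}(\omega_s)=\nabla f(\omega_s)$. Averaging the definition of $v_s^t$ over $t$ therefore gives
$$\frac{1}{n}\sum_{t=0}^{n-1} v_s^t = \frac{1}{n}\sum_{t=0}^{n-1}\nabla f_{\pi_s^t}(x_s^t) - \nabla f(\omega_s) + v_s.$$
Subtracting this from $\nabla f(\omega_s)$ and regrouping yields the decomposition
$$\nabla f(\omega_s) - \frac{1}{n}\sum_{t=0}^{n-1} v_s^t = \bigl(\nabla f(\omega_s) - v_s\bigr) + \bigl(\nabla f(\omega_s) - \tfrac{1}{n}\sum_{t=0}^{n-1}\nabla f_{\pi_s^t}(x_s^t)\bigr).$$

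Next I would apply $\|a+b\|^2\leq 2\|a\|^2+2\|b\|^2$. The first summand is exactly $2\|\nabla f(\omega_s)-v_s\|^2$, which is the first term in the target bound. For the second, I would again use the permutation identity to write $\nabla f(\omega_s) - \tfrac{1}{n}\sum_t \nabla f_{\pi_s^t}(x_s^t) = \tfrac{1}{n}\sum_t\bigl(\nabla f_{\pi_s^t}(\omega_s) - \nabla f_{\pi_s^t}(x_s^t)\bigr)$, apply Jensen's inequality (convexity of $\|\cdot\|^2$) to pull the square inside the average, and then invoke the $L$-smoothness from Assumption~\ref{as1} to obtain
$$2\Bigl\|\tfrac{1}{n}\sum_{t=0}^{n-1}\bigl(\nabla f_{\pi_s^t}(\omega_s)-\nabla f_{\pi_s^t}(x_s^t)\bigr)\Bigr\|^2 \leq \frac{2L^2}{n}\sum_{t=0}^{n-1}\|x_s^t-\omega_s\|^2,$$
which matches the second term in the claim.

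I do not anticipate any real obstacle here: the lemma is a purely deterministic algebraic identity combined with the standard smoothness-plus-Jensen argument, and nothing about the strong convexity part of Assumption~\ref{as2} is actually invoked (it is presumably included just because the lemma will be reused in the strongly convex section as well). The only thing to be slightly careful about is the sign bookkeeping when substituting the definition of $v_s^t$ and exploiting that $\{\pi_s^t\}$ is a permutation so that the average of $\nabla f_{\pi_s^t}(\omega_s)$ collapses to the true full gradient $\nabla f(\omega_s)$; once that is done correctly the decomposition into the two terms is immediate.
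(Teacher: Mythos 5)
Your proof is correct and is essentially the paper's own argument: both substitute the update rule for $v_s^t$, split the error into $(\nabla f(\omega_s)-v_s)$ plus $\frac{1}{n}\sum_t(\nabla f_{\pi_s^t}(\omega_s)-\nabla f_{\pi_s^t}(x_s^t))$, apply $\|a+b\|^2\le 2\|a\|^2+2\|b\|^2$, and finish with Cauchy--Schwarz (your Jensen step is equivalent) and $L$-smoothness. The only cosmetic difference is that you route through the permutation identity $\frac{1}{n}\sum_t\nabla f_{\pi_s^t}(\omega_s)=\nabla f(\omega_s)$ and then undo it, whereas the paper keeps the sum of per-component differences throughout; the decomposition and bounds are the same.
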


In contrast to classical SVRG, where only one term matters due to the exact computation of full gradients ($v_s = \nabla f(\omega_s)$), our algorithm avoids such computations. Instead, it introduces an additional term representing the errors in approximating these gradients. This error fundamentally reflects discrepancies between our approximation and the actual gradients at $\omega_s$. We note that $v_s$ averages stochastic gradients from previous epochs (as per Equation \ref{eq:withoutfullgrad}) with $\omega_s$ set as their final point. Thus, this error quantifies shifts among those points relative to further reference points. Beginning each epoch at $\omega_s$, we gauge both potential movements within upcoming epochs and the progress made during past ones. This perspective underscores a strategic balance involved when selecting $\omega_s$, aligning with discussions in Section \ref{subsection:svrgalgorithm}. We present estimates for these deviations through a subsequent lemma.
\begin{lemma}\label{ngl3}
Suppose Assumptions \ref{as1}, \ref{as2} hold. Let the stepsize $\gamma \leqslant \frac{1}{2Ln}$. Then for Algorithm \ref{alg2} a valid estimate is
\begin{align*}
        \textstyle{\left\| \nabla f(\omega_s) - \frac{1}{n}\sum\limits_{t=0}^{n-1} v_s^t\right\|^2 \leqslant~~}&\textstyle{ 8\gamma^2L^2n^2\|v_{s}\|^2}\\
        & \textstyle{+ 32\gamma^2L^2n^2\|v_{s-1}\|^2.}
\end{align*}
\end{lemma}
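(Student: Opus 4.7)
\textbf{Proof plan for Lemma \ref{ngl3}.} The strategy is to start from the decomposition given by Lemma \ref{ngl2} and bound each of the two terms separately, reducing everything to $\|v_s\|^2$ and $\|v_{s-1}\|^2$ via the stepsize restriction.

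First I would establish the key identity that the moving average in Line \ref{alg2:line6} produces exactly the average of per-sample gradients along the previous epoch: by induction on $t$ from $\widetilde v_s^0=0$ one sees $\widetilde v_s^{t+1}=\tfrac{1}{t+1}\sum_{k=0}^{t}\nabla f_{\pi_s^k}(x_s^k)$, so $v_s=\widetilde v_{s-1}^n=\tfrac{1}{n}\sum_{k=0}^{n-1}\nabla f_{\pi_{s-1}^k}(x_{s-1}^k)$. Since $\omega_s=x_{s-1}^n$ and $\{\pi_{s-1}^k\}$ is a permutation of $\{0,\dots,n-1\}$, also $\nabla f(\omega_s)=\tfrac{1}{n}\sum_{k}\nabla f_{\pi_{s-1}^k}(x_{s-1}^n)$. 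Jensen and Assumption \ref{as1} then give $\|\nabla f(\omega_s)-v_s\|^2\leqslant \tfrac{L^2}{n}\sum_{t=0}^{n-1}\|x_{s-1}^n-x_{s-1}^t\|^2$.

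Next I would unroll the iterate displacements using Line \ref{alg2:line8}: $x_{s-1}^n-x_{s-1}^t=-\gamma\sum_{j=t}^{n-1}v_{s-1}^j$ and $x_s^t-\omega_s=x_s^t-x_s^0=-\gamma\sum_{j=0}^{t-1}v_s^j$. Applying Cauchy--Schwarz and summing yields $\sum_{t=0}^{n-1}\|x_{s-1}^n-x_{s-1}^t\|^2\leqslant \gamma^2 n^2\sum_{j=0}^{n-1}\|v_{s-1}^j\|^2$ and analogously $\sum_{t=0}^{n-1}\|x_s^t-\omega_s\|^2\leqslant \gamma^2 n^2\sum_{j=0}^{n-1}\|v_s^j\|^2$. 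Plugging these into Lemma \ref{ngl2} I obtain a bound of the form $2L^2\gamma^2 n\bigl(\sum_j\|v_{s-1}^j\|^2+\sum_j\|v_s^j\|^2\bigr)$ (possibly with a constant that I can afford to inflate).

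The remaining step, and the one I expect to be the crux, is converting $\sum_{j=0}^{n-1}\|v_s^j\|^2$ into a multiple of $n\|v_s\|^2$. Expanding $v_s^j$ from Line \ref{alg2:line7} and using Assumption \ref{as1}, $\|v_s^j\|^2\leqslant 2L^2\|x_s^j-\omega_s\|^2+2\|v_s\|^2=2L^2\gamma^2\|\sum_{k=0}^{j-1}v_s^k\|^2+2\|v_s\|^2$. Summing in $j$ and applying Cauchy--Schwarz again produces a self-referential inequality of the form $S_s\leqslant c\, L^2\gamma^2 n^2 S_s+2n\|v_s\|^2$ with $S_s:=\sum_{j=0}^{n-1}\|v_s^j\|^2$. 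This is exactly where the stepsize condition $\gamma\leqslant\tfrac{1}{2Ln}$ enters: it guarantees $cL^2\gamma^2 n^2\leqslant\tfrac12$, letting us absorb $S_s$ on the left and conclude $S_s\leqslant 4n\|v_s\|^2$. The identical argument applied to epoch $s-1$ gives $S_{s-1}\leqslant 4n\|v_{s-1}\|^2$.

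Finally, substituting these two bounds into the intermediate inequality gives the claimed $\mathcal{O}(\gamma^2L^2n^2)$ estimate in $\|v_s\|^2$ and $\|v_{s-1}\|^2$; the asymmetry between the coefficients $8$ and $32$ in the statement should fall out of tracking the constants carefully through the two separate reductions (Jensen plus Cauchy--Schwarz applied twice to the term involving $v_{s-1}$, but only once at the iterate level for the term involving $v_s$).
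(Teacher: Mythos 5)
Your plan is sound and carries through: it establishes the moving-average identity for $v_s$, reduces both terms of Lemma \ref{ngl2} to sums of iterate displacements, unrolls via Line \ref{alg2:line8}, and closes a self-referential inequality using $\gamma \leqslant \frac{1}{2Ln}$ to absorb the recursive term. The key step $S_s := \sum_{j=0}^{n-1}\|v_s^j\|^2 \leqslant 4n\|v_s\|^2$ is correct, and plugging it back does give the claimed order.

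The route is, however, genuinely different from the paper's in two respects. First, the paper does not use $\omega_s = x_{s-1}^n$ to write $\|\nabla f(\omega_s)-v_s\|^2 \leqslant \frac{L^2}{n}\sum_t\|x_{s-1}^n - x_{s-1}^t\|^2$ directly; instead it bounds $\|x_{s-1}^t-\omega_s\|^2$ by a triangle split through $\omega_{s-1}$, producing the two pieces $\sum_t\|x_{s-1}^t-\omega_{s-1}\|^2$ and $\sum_t\|\omega_s-\omega_{s-1}\|^2$, each controlled separately. Second, the paper's self-referential quantity is $\sum_t\|x_s^t-\omega_s\|^2$ rather than your $S_s = \sum_j\|v_s^j\|^2$; these are essentially dual under the relation $x_s^t - \omega_s = -\gamma\sum_{j<t}v_s^j$, so both close the same way. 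A consequence of your choice is that the asymmetric coefficients $8$ and $32$ do not in fact arise from your bookkeeping: you get the symmetric bound $8\gamma^2L^2n^2(\|v_s\|^2+\|v_{s-1}\|^2)$, which is strictly tighter, and the asymmetry in the paper is an artifact of its extra triangle split. So your final sentence mischaracterizes where the $8$ versus $32$ comes from, but this does not affect validity, since a tighter estimate still implies the stated lemma.
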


Now we are ready to present the final result of this section.

\begin{theorem}\label{nfgt1}
   Suppose Assumptions \ref{as1}, \ref{as2nonconvex} hold. Then Algorithm \ref{alg2} with $\gamma\leqslant\frac{1}{20L n}$ to reach $\varepsilon$-accuracy, where $\varepsilon^2 = \frac{1}{S}\sum\nolimits_{s=1}^{S} \|\nabla f(\omega_s)\|^2$, needs $\mathcal{O} \left(\nicefrac{nL}{\varepsilon^2}\right)$ iterations and oracle calls.
\end{theorem}
%\vspace{-3mm}

Detailed proofs of the results obtained are in Appendix, Section \ref{nfgsvrg_appendix}. We present the first variance reduction method that does not require the calculation of the full gradient and is optimal concerning additional memory in the non-convex setting. Our results demonstrate that the score is, by an order of magnitude, inferior compared to the classical \textsc{SVRG} using independent sampling: $\mathcal{O}(n L)$ versus $\mathcal{O}(n^{\nicefrac{2}{3}}L)$ \citep{allen2016variance}. This outcome reflects that we approximate the full gradient at the reference point, rather than considering it at the current state. Regarding \textsc{Shuffle SVRG}, we replicate the current optimal estimate \citep{malinovsky2023random}. Considering that our method does not necessitate the calculation of full gradients, it is valid and merits further investigation. Finally, the development of the no-full-grad option for non-convex problems is a significant contribution, as this area has not been extensively explored previously (Table \ref{table1}).

\subsubsection{Strongly convex setting}\label{subsection:svrgstronglyconvex}

Let us analyze this algorithm for the strongly convex case. Based on the proof of Theorem \ref{nfgt1}, we construct an analysis that employs the Polyak-Lojasiewicz condition (see Appendix \ref{sec:basicineq}).
\begin{theorem}\label{nfgt2}
Suppose Assumptions \ref{as1}, \ref{as2stronglyconvex} hold. Then Algorithm \ref{alg2} with $\gamma\leqslant\frac{1}{20Ln}$ to reach $\varepsilon$-accuracy, where $\varepsilon = f(\omega_{S+1})-f(x^*)$, needs $\mathcal{O} \left(\nicefrac{nL}{\mu}\log \nicefrac{1}{\varepsilon}\right)$ iterations and oracle calls.
\end{theorem}
Our results for the \textsc{No Full Grad SVRG} algorithm under strong convexity conditions are similar to those observed in the non-convex setting. Moreover, it significantly outperforms existing estimates of no-full-grad methods. When comparing our results to those of other shuffling methods (see Table \ref{table1}), our algorithm improves convergence rates while maintaining optimal extra memory. Thus, it contributes to the entire class of shuffling algorithms.

\subsection{SARAH without full gradients}\label{subsection:sarahalgorithm}

We have previously discussed that \textsc{SARAH} was designed as a variance reduction method that outperforms \textsc{SVRG} in practice and has numerous interesting applications \citep{nguyen2017sarah}. This section discusses how to modify the \textsc{SARAH} method to avoid restarts. We consider two approaches: taking steps based on an accurate full gradient or developing a version of \textsc{SARAH} that does not require full gradient computations.

\begin{algorithm}[ht]
\caption{\textsc{No Full Grad SARAH}}\label{alg:sarah}
\begin{algorithmic}[1]
    \State \textbf{Input:} Initial points $x_0^0\in\mathbb{R}^d$; Initial gradients $\widetilde{v}_0^0 = 0^d, v_0 = 0^d$
    \State \textbf{Parameter:} Stepsize $\gamma > 0$
    \For {epochs $s = 0, 1, 2, \ldots, S$}
    \State Sample a permutation $\pi^1_s, \dots, \pi^{n}_s$ of $\overline{1, n}$ \Comment{Sampling depends on shuffling heuristic}
    \State $v_s^0 = v_s$ \label{vs}
    \State $x_s^1 = x_s^0 - \gamma v_s^0$
    \For {$t = 1, 2, 3, \ldots, n$}
    \State \label{algsarah:line8} $\widetilde{v}_s^{t+1} = \frac{t-1}{t} \widetilde{v}_s^{t} + \frac{1}{t} \nabla f_{\pi_s^t}(x_s^t)$
    \State \label{algsarah:line9} $v_s^{t} = \frac{1}{n}\left(\nabla f_{\pi_s^t}(x_s^t) - \nabla f_{\pi_s^t}(x_s^{t-1})\right) + v_s^{t-1}$
    \State \label{algsarah:line10} $x_s^{t+1} = x_s^t - \gamma v_s^t$
    \EndFor
    \State $x_{s+1}^0 = x_s^{n+1}$
    \State \label{tvs} $\widetilde{v}_{s+1}^1 = 0$
    \State $v_{s+1} = \widetilde{v}_s^{n+1}$
    \EndFor
\end{algorithmic}
\end{algorithm}

There exists a version of \textsc{SARAH} that obviates the need for the full gradient computation \citep{beznosikov2023random}, however its upper estimate, $\mathcal{O}\left(n^2\nicefrac{L}{\mu}\log\nicefrac{1}{\varepsilon}\right)$, significantly deviates from the one obtained for \textsc{SVRG} in the previous section. Let us demonstrate what can be modified in their method to improve this estimate. The authors in the mentioned work employed the same technique to approximate the full gradient as \eqref{eq:withoutfullgrad}. The algorithm applied the standard \textsc{SARAH} update formula \eqref{eq:sarah}. To initiate a new recursive cycle at the start of each epoch, an extra step was taken with an approximated full gradient. In this way, they also used the \textsc{SAG/SAGA} idea but provided a recursive reduced gradient update to avoid storing all stochastic gradients during the epoch.

To continue the analysis, we aim to shed light on the differences between \textsc{SAG} and \textsc{SAGA} algorithms, as this is crucial for our modifications. We discussed the \textsc{SAG} update \eqref{eq:sag}. The
\textsc{SAGA} update is almost the same, except for the absence of the factor $\nicefrac{1}{n}$. Thus, maintaining the notation used for \textsc{SAG}, we can express the SAGA step as
\begin{align}\label{eq:saga}
    \!\!\textstyle{x^{t+1} \!=\! x^t \!-\! \gamma\Bigl(\! \nabla f_{i_t}(x^t) \!-\! \nabla f_{i_t}(\phi_{i_t}^t) \!+\!  \frac{1}{n}\!\sum\limits_{j=1}^n \!\nabla f_j(\phi_j^t)\! \Bigr).}
\end{align}
The key difference hides in the reduction of the variance of the \textsc{SAG} update in $n^2$ times compared to \textsc{SAGA} with the same $\phi$'s, however, the payback for such a gain is a non-zero bias in \textsc{SAG}. The choice of unbiasedness was made in \textsc{SAGA} primarily to develop a simple and tight theory for variance reduction methods and to provide theoretical estimates for proximal operators \citep{defazio2014saga}.

Now we can specify and state that the idea behind \textsc{SAGA} was applied in \citep{beznosikov2023random}. Nevertheless, attempting to increase variance for the sake of zero bias appears illogical here because the shuffling heuristic remains in use, thereby inherently introducing bias. As a result, achieving convergence requires very small step sizes, leading to significantly worse estimates. In contrast, we propose leveraging the concept of \textsc{SAG}, similar to what we did in \textsc{No Full Grad SVRG}, and modifying the \textsc{SARAH} update during each epoch, as
\begin{align*}
    \textstyle{v_s^{t+1} = \frac{1}{n}\left(\nabla f_{\pi_s^t}(x_s^t) - \nabla f_{\pi_s^t}(x_s^{t-1})\right) + v_s^{t-1}.}
\end{align*}
This approach enables the use of larger steps and improves convergence rates. We provide the formal description of the \textsc{No Full Grad SARAH} method (Algorithm \ref{alg:sarah}).
One can observe that we slightly modify the coefficients in the full gradient approximation scheme (Line \ref{algsarah:line8} of Algorithm \ref{alg:sarah}) compared to Algorithm \ref{alg2}. The discrepancy stems solely from differences in indexing. In this case, the full gradient approximation begins at iteration $t=1$ rather than $t=0$, as we incorporate an extra restart step not present in \textsc{SVRG}. Therefore, we make this adjustment to prevent the factor from becoming $\nicefrac{1}{(n+1)}$ instead of $\nicefrac{1}{n}$ in \eqref{eq:withoutfullgrad}. Now we proceed to the theoretical analysis of Algorithm \ref{alg:sarah} under both non-convex and strongly convex assumptions on the objective function.

\subsubsection{Non-convex setting}\label{subsection:sarahnonconvex}

During the proof of the convergence estimates of  \textsc{SARAH}, we proceed similarly to our approach for \textsc{SVRG}. Initially, we focus on a single epoch and demonstrate convergence within it. To achieve this, we estimate the difference between the gradient at the start of the epoch and the average of the reduced gradients used for updates throughout the epoch.
\begin{lemma}\label{l1:sarahmain}
    Suppose that Assumptions \ref{as1}, \ref{as2} hold. Then for Algorithm \ref{alg:sarah} a valid estimate is
    \begin{align*}
        \textstyle{\left\| \nabla f(x_s^0) - \frac{1}{n+1}\sum\limits_{t=0}^n v_s^t\right\|^2 \leqslant~~} & \textstyle{2\|\nabla f(x_{s}^0) - v_s \|^2 }\\
        & \textstyle{+ \frac{2L^2}{n+1}\sum\limits_{t=1}^n \|x_s^t - x_s^{t-1}\|^2.}
    \end{align*}
\end{lemma}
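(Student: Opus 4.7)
The plan is to make the lemma follow from a single explicit identity for the average reduced gradient, and then apply Young's inequality with $L$-smoothness in the usual way.

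\textbf{Step 1: unroll the SARAH recursion.} By Line~\ref{algsarah:line9} of Algorithm~\ref{alg:sarah} and the initialization $v_s^0 = v_s$, a simple induction on $t$ gives
\[
v_s^t \;=\; v_s \;+\; \frac{1}{n}\sum_{k=1}^{t}\delta_k, \qquad \delta_k \;:=\; \nabla f_{\pi_s^k}(x_s^k) - \nabla f_{\pi_s^k}(x_s^{k-1}),
\]
for every $t = 0, 1, \ldots, n$, with the empty sum interpreted as zero.

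\textbf{Step 2: swap the double sum.} Summing over $t = 0, \ldots, n$ and exchanging the order of summation,
\[
\frac{1}{n+1}\sum_{t=0}^{n} v_s^t \;=\; v_s \;+\; \frac{1}{n(n+1)}\sum_{k=1}^{n}(n-k+1)\,\delta_k.
\]
Subtracting from $\nabla f(x_s^0)$ yields the key identity
\[
\nabla f(x_s^0) - \frac{1}{n+1}\sum_{t=0}^{n} v_s^t \;=\; \bigl(\nabla f(x_s^0) - v_s\bigr) \;-\; \frac{1}{n(n+1)}\sum_{k=1}^{n}(n-k+1)\,\delta_k.
\]

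\textbf{Step 3: split by Young's inequality.} Applying $\|a - b\|^2 \leq 2\|a\|^2 + 2\|b\|^2$ produces the first term of the lemma, $2\|\nabla f(x_s^0) - v_s\|^2$, plus the weighted sum term that remains to be bounded.

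\textbf{Step 4: bound the weighted sum.} Apply Cauchy--Schwarz (equivalently, Jensen's inequality with weights $w_k = n-k+1$) to the remaining term: since $w_k \leq n$ and $\sum_{k=1}^n w_k = n(n+1)/2$, a direct computation gives
\[
\Bigl\|\frac{1}{n(n+1)}\sum_{k=1}^{n}(n-k+1)\,\delta_k\Bigr\|^2 \;\leq\; \frac{1}{n+1}\sum_{k=1}^n \|\delta_k\|^2 \cdot (\text{constant} \leq \tfrac12).
\]
Using Assumption~\ref{as1} on each $f_{\pi_s^k}$ gives $\|\delta_k\|^2 \leq L^2\|x_s^k - x_s^{k-1}\|^2$, and the factor $2$ from Young's inequality turns this into exactly $\tfrac{2L^2}{n+1}\sum_{t=1}^n \|x_s^t - x_s^{t-1}\|^2$ (in fact tighter by a factor of $2$, so the bound stated in the lemma holds with room to spare).

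The only non-mechanical part of the argument is the index bookkeeping in Step~2, namely the swap $\sum_{t=0}^{n}\sum_{k=1}^{t} = \sum_{k=1}^{n}(n-k+1)\sum$; once that identity is in hand, the rest of the proof is a routine application of Young's inequality, Cauchy--Schwarz, and per-component $L$-smoothness. Note that the shuffling structure plays no role here — the lemma is a deterministic consequence of the recursion and smoothness, which is why it holds for any realized permutation $\pi_s$.
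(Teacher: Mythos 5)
Your proof is correct and follows essentially the same route as the paper's: unroll the SARAH recursion to get the identity $\sum_{t=0}^n v_s^t = (n+1)v_s + \frac{1}{n}\sum_{k=1}^n (n-k+1)\bigl(\nabla f_{\pi_s^k}(x_s^k) - \nabla f_{\pi_s^k}(x_s^{k-1})\bigr)$, split via Young's inequality, then bound the weighted sum by Cauchy--Schwarz and per-component $L$-smoothness. Your weighted-average handling of the coefficients in Step 4 is slightly cleaner than the paper's (which simply drops the factors $(n-k+1)/n \le 1$ before applying unweighted Cauchy--Schwarz) and even yields the sharper constant $L^2/(n+1)$, so the stated bound certainly holds.
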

The first term is identical to that previously encountered in the analysis of Algorithm \ref{alg2} -- the difference between the true full gradient and its approximation. Additionally, the second term conveys a similar meaning to its counterpart in Lemma \ref{ngl2}. It represents the difference between the current and reference points during an epoch, with the reference point consistently set as the previous one. Thus, we follow a similar approach to \textsc{SVRG} and proceed to the next lemma.
\begin{lemma}\label{l2:sarahmain}
    Suppose that Assumptions \ref{as1}, \ref{as2} hold. Let the stepsize $\gamma \leqslant \frac{1}{3L}$. Then for Algorithm \ref{alg:sarah} a valid estimate is
    \begin{align*}
            \textstyle{\left\| \nabla f(x_s^0) - \frac{1}{n+1}\sum\limits_{t=0}^n v_s^t\right\|^2 \leqslant~~}&\textstyle{ 9\gamma^2L^2\|v_{s}\|^2} \\
            & \textstyle{+ 36\gamma^2L^2n^2\|v_{s-1}\|^2.}
    \end{align*}
\end{lemma}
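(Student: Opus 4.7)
The plan is to pick up from Lemma~\ref{l1:sarahmain}, which already splits the quantity of interest into a restart-error term $2\|\nabla f(x_s^0)-v_s\|^2$ and a within-epoch drift term $\frac{2L^2}{n+1}\sum_{t=1}^n\|x_s^t-x_s^{t-1}\|^2$. It remains to bound each by an expression of the form $O(\gamma^2 L^2)\|v_s\|^2$ and $O(\gamma^2 L^2 n^2)\|v_{s-1}\|^2$, respectively.

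The central technical ingredient is a uniform-in-$t$ bound on $\|v_s^t\|$ throughout one epoch. From the update in Line~\ref{algsarah:line9}, $L$-smoothness of $f_{\pi_s^t}$, and $x_s^t-x_s^{t-1}=-\gamma v_s^{t-1}$ (Line~\ref{algsarah:line10} together with the restart step), one obtains
\begin{equation*}
    \|v_s^t-v_s^{t-1}\|\leqslant \frac{L}{n}\|x_s^t-x_s^{t-1}\|=\frac{L\gamma}{n}\|v_s^{t-1}\|,
\end{equation*}
and hence $\|v_s^t\|\leqslant (1+\tfrac{L\gamma}{n})\|v_s^{t-1}\|$ by the triangle inequality. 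Iterating for $t\leqslant n$ and invoking $\gamma\leqslant \tfrac{1}{3L}$, this yields $\|v_s^t\|^2\leqslant (1+\tfrac{1}{3n})^{2n}\|v_s\|^2\leqslant e^{2/3}\|v_s\|^2$, which can be absorbed into a small absolute constant; the same argument applied one epoch earlier gives $\|v_{s-1}^j\|^2\leqslant e^{2/3}\|v_{s-1}\|^2$ for every $j\in\{0,\dots,n\}$.

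With this uniform bound in hand, the drift term becomes $\tfrac{2L^2\gamma^2}{n+1}\sum_{t=0}^{n-1}\|v_s^t\|^2\leqslant O(L^2\gamma^2)\|v_s\|^2$, which is exactly the $9\gamma^2L^2\|v_s\|^2$ contribution (up to the absolute constant). For the restart term, I would unroll the $\widetilde v$ recursion exactly as in the proof of Lemma~\ref{ngl3} to identify $v_s=\widetilde v_{s-1}^{n+1}=\tfrac{1}{n}\sum_{k=1}^n\nabla f_{\pi_{s-1}^k}(x_{s-1}^k)$. Since $\pi_{s-1}$ is a permutation and $x_s^0=x_{s-1}^{n+1}$, the shuffling identity $\nabla f(x_s^0)=\tfrac{1}{n}\sum_{k=1}^n\nabla f_{\pi_{s-1}^k}(x_s^0)$ combined with Jensen and $L$-smoothness gives
\begin{equation*}
    \|\nabla f(x_s^0)-v_s\|^2\leqslant \frac{L^2}{n}\sum_{k=1}^n\|x_{s-1}^{n+1}-x_{s-1}^k\|^2.
\end{equation*}
Writing $x_{s-1}^{n+1}-x_{s-1}^k=-\gamma\sum_{j=k}^n v_{s-1}^j$, then applying Cauchy--Schwarz and the uniform bound on $\|v_{s-1}^j\|$, each distance is at most $O(\gamma^2 n^2)\|v_{s-1}\|^2$; summation over $k$ and the factor $2$ from Lemma~\ref{l1:sarahmain} yield the $36\gamma^2L^2n^2\|v_{s-1}\|^2$ term.

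The main obstacle is the uniform-in-$t$ control $\|v_s^t\|\leqslant(1+L\gamma/n)^t\|v_s\|$ across a full epoch: the stepsize restriction $\gamma\leqslant 1/(3L)$ is precisely what keeps $(1+1/(3n))^n$ below an absolute constant, and this constant is what ultimately pins down the coefficients $9$ and $36$ in the statement. Everything else---Jensen, smoothness, Cauchy--Schwarz, and the permutation identity---is routine bookkeeping that closely mirrors the SVRG counterpart (Lemma~\ref{ngl3}).
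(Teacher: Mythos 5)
Your proof is correct and reaches the stated bounds with room to spare, but it follows a genuinely different and simpler route than the paper's. The paper establishes control of the inner iterates via an \emph{additive} expansion: using \eqref{ineq:square} with $\beta=t$ it derives $\|v_s^t\|^2\leq e\|v_s\|^2+\frac{eL^2}{n}\sum_{k=1}^t\|x_s^k-x_s^{k-1}\|^2$, substitutes this into $\sum_{t=1}^n\|x_s^t-x_s^{t-1}\|^2=\gamma^2\sum_{t=0}^{n-1}\|v_s^t\|^2$, and then solves the resulting self-bounding inequality to isolate $\sum_t\|x_s^t-x_s^{t-1}\|^2\leq\frac{9}{2}\gamma^2 n\|v_s\|^2$. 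You instead observe the \emph{multiplicative} contraction $\|v_s^t\|\leq(1+\frac{L\gamma}{n})\|v_s^{t-1}\|$, which iterates cleanly to the uniform bound $\|v_s^t\|^2\leq e^{2/3}\|v_s\|^2$ without ever having to solve a recursion -- this is a tidier and in fact tighter estimate. A second divergence is in the restart term: the paper decomposes $\|x_{s-1}^t-x_s^0\|^2\leq 2\|x_{s-1}^t-x_{s-1}^0\|^2+2\|x_s^0-x_{s-1}^0\|^2$ and bounds the two pieces separately (picking up an extra factor of two), whereas you telescope $x_s^0-x_{s-1}^k=-\gamma\sum_{j=k}^n v_{s-1}^j$ directly and apply your uniform bound on $\|v_{s-1}^j\|$. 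Both routes validate the coefficients $9$ and $36$; yours actually delivers smaller absolute constants (roughly $2e^{2/3}\approx 3.9$ in each term), so the sentence claiming those specific coefficients are ``pinned down'' by the computation is a slight overstatement, but the lemma certainly follows. One small bookkeeping note: the multiplicative step relies on $x_s^t-x_s^{t-1}=-\gamma v_s^{t-1}$ for all $t\geq 1$, which does hold because of the explicit restart update $x_s^1=x_s^0-\gamma v_s^0$ in Algorithm~\ref{alg:sarah}; worth stating to avoid an off-by-one worry at $t=1$.
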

Obtaining this crucial lemma, we can now present the final result of this section.
\begin{theorem}\label{th1:sarahmain}
   Suppose Assumptions \ref{as1}, \ref{as2nonconvex} hold. Then Algorithm \ref{alg:sarah} with $\gamma\leqslant\frac{1}{20L(n+1)}$ to reach $\varepsilon$-accuracy, where $\varepsilon^2 = \frac{1}{S}\sum\nolimits_{s=1}^{S} \|\nabla f(x_s^0)\|^2$, needs $\mathcal{O} \left(\nicefrac{nL}{\varepsilon^2}\right)$ iterations and oracle calls.
\end{theorem}

We obtain the expected result. Notably, the upper bound for the convergence of \textsc{No Full Grad SARAH} aligns with that of \textsc{No Full Grad SVRG}, as stated in Theorem \ref{nfgt1}. Our comparison with previous estimates is consistent and detailed in Section \ref{subsection:svrgnonconvex}.

\subsubsection{Strongly convex setting}\label{subsection:sarahstringlyconvex}
We extend our analysis on the strongly convex objective function using \eqref{PL} (see Appendix \ref{sec:basicineq}).
\begin{theorem}\label{th2:sarahmain}
    Suppose Assumptions \ref{as1}, \ref{as2stronglyconvex} hold. Then Algorithm \ref{alg:sarah} with $\gamma\leqslant\frac{1}{20L(n+1)}$ to reach $\varepsilon$-accuracy, where $\varepsilon = f(x_{S+1}^0)-f(x^*)$, needs $\mathcal{O} \left(\nicefrac{nL}{\mu}\log \nicefrac{1}{\varepsilon}\right)$ iterations and oracle calls.
\end{theorem}

\section{LOWER BOUNDS}\label{sec:lower_bound_main}
A natural question arises: is Algorithm \ref{alg2} optimal? We address its optimality in the non-convex case (in fact, we can also consider the strongly convex case, but the concept remains the same). A comprehensive explanation requires understanding the essence of \textit{smoothness} assumptions, which are used to study variance-reduced schemes. 

As a result, we present the lower bound for the non-convex finite-sum problem \eqref{eq:finite-sum} under Assumption \ref{as1}. Furthermore, we provide an explanation of why it is impossible to construct a lower bound that matches the result of Theorem \ref{nfgt1}.
\begin{theorem}[\textbf{Lower bound}]
\label{thm:lower}
    For any $L > 0$, there exists a problem \eqref{eq:finite-sum} which satisfies Assumption \ref{as1}, such that for any output of a first-order algorithm, number of oracle calls $N_c$ required to reach $\varepsilon$-accuracy is lower bounded as $N_c = \Omega\left(\nicefrac{L\Delta}{\varepsilon^2}\right).$
\end{theorem}
In fact, this result has already been stated (see Theorem 4.7 in \citep{zhou2019lower}). However, to enhance the clarity of the lower bound, we construct it in a different form. Furthermore, the interpretation of the obtained result (see Remark 4.8 in \citep{zhou2019lower}) is not entirely correct. For example, the comparison with the upper bound from \citep{fang2018spider} is inconsistent, as the smoothness parameters are considered different. Consequently, the problem classes \textit{do not coincide}.
\begin{theorem}[\textbf{Non-optimality}]
\label{thm: quest}
    For any $L > 0$, there is \textbf{no} problem \eqref{eq:finite-sum} which satisfies Assumption \ref{as1}, such that for any output of first-order algorithm, number of oracle calls $N_c$ required to reach $\varepsilon$-accuracy is lower bounded with $p > \frac{1}{2}: N_c = \Omega\left(\nicefrac{n^pL\Delta}{\varepsilon^2}\right).$
    \end{theorem}
The theorem shows that the best result that can potentially be obtained in terms of lower bounds is $\Omega\left(\nicefrac{\sqrt{n}L\Delta}{\varepsilon^2}\right)$. Therefore, the results for the upper bound (Theorems \ref{nfgt1} and \ref{th1:sarahmain}) are non-optimal, and the lower bound (Theorem \ref{thm:lower}) could be non-optimal in the class of problems induced by Assumption \ref{as1}.
Theorem \ref{thm: quest} signifies that despite superior performance compared to existing results (Table \ref{table1}), a gap remains between the upper and lower bounds. For details, see Appendix \ref{sec:lower_bound}.
%\vspace{-2mm}

\section{EXPERIMENTS}\label{sec:experiments}
% \textbf{ResNet-18 on CIFAR-10 classification.}
To assess the efficiency of our No Full Gradient (NFG) modifications to SVRG and SARAH, we perform experiments on image-classification benchmarks: \textsc{CIFAR-10} and \textsc{CIFAR-100} datasets \citep{krizhevsky2009learning} using the \textsc{ResNet18} model \citep{he2016deep}. 
In all experiments, the model is trained for 200 epochs with a batch size of 128. We apply weight decay $\lambda_{1}=5\times10^{-4}$ and a cosine learning-rate schedule, initializing at 0.1 and decaying to $10^{-3}$. For each algorithm, we record training and test curves for (a) cross-entropy loss and (b) accuracy. Curves are plotted against cumulative full-gradient computations, highlighting efficiency gains.
% Let 
% $f(w, x, y)$  denote the loss function, where $w \in \mathbb{R}^d$ represents the model parameters, $x \in \mathbb R^n$ is the input, and  $y \in \mathbb R$ is the corresponding label. We consider the following optimization problem:
% \begin{equation*}
%     \min \limits_{w} \frac{1}{M} \sum \limits_{i = 1}^M f(w, x_i, y_i) + \frac{\lambda_1}{2}\|w\|^2,
% \end{equation*}
% where $f$ is the cross-entropy loss function and $\lambda_1$ is a regularization parameter.
\subsection{Results on \textsc{CIFAR-10}}Figures~\ref{fig:svrg_10}, \ref{fig:sarah_10} present training and test metrics on \textsc{CIFAR-10}.

\begin{figure}[H]
\centering
% \small{\texttt{a9a}} & \hspace{2mm}\small{\texttt{w8a}} & \hspace{3mm}\small{\texttt{ijcnn1}} & \hspace{4mm}\small{\texttt{skin-nonskin}} \\
\begin{subfigure}[b]{0.48\columnwidth}
\includegraphics[width=\textwidth]{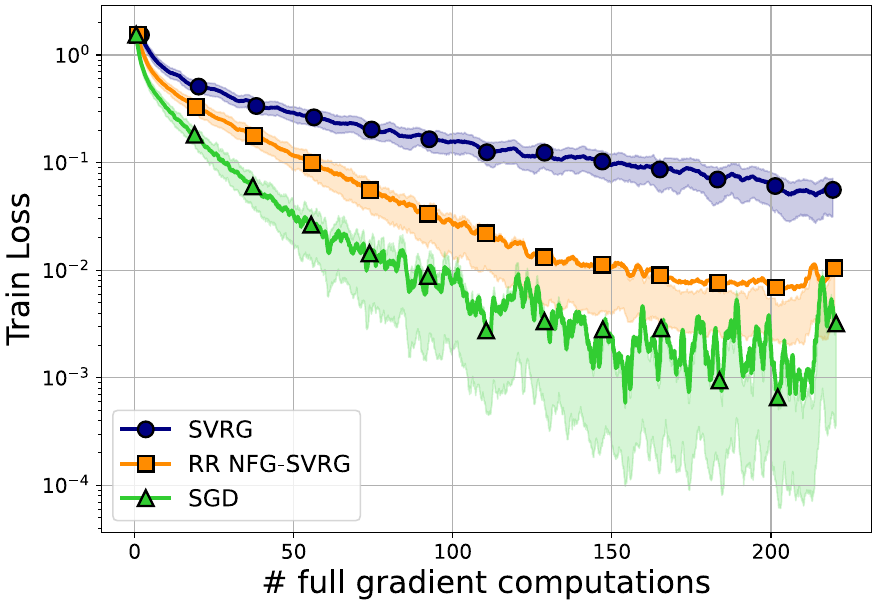}
\caption{train loss}
\end{subfigure}
\hfill
\begin{subfigure}[b]{0.48\columnwidth}
\includegraphics[width=\textwidth]{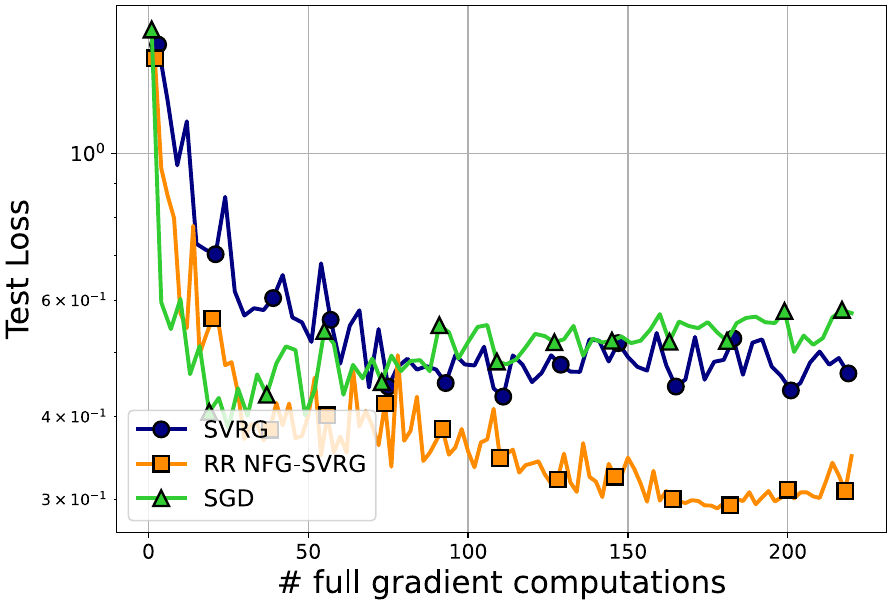}
\caption{test loss}
\end{subfigure}
\begin{subfigure}[b]{0.48\columnwidth}
\includegraphics[width=\textwidth]{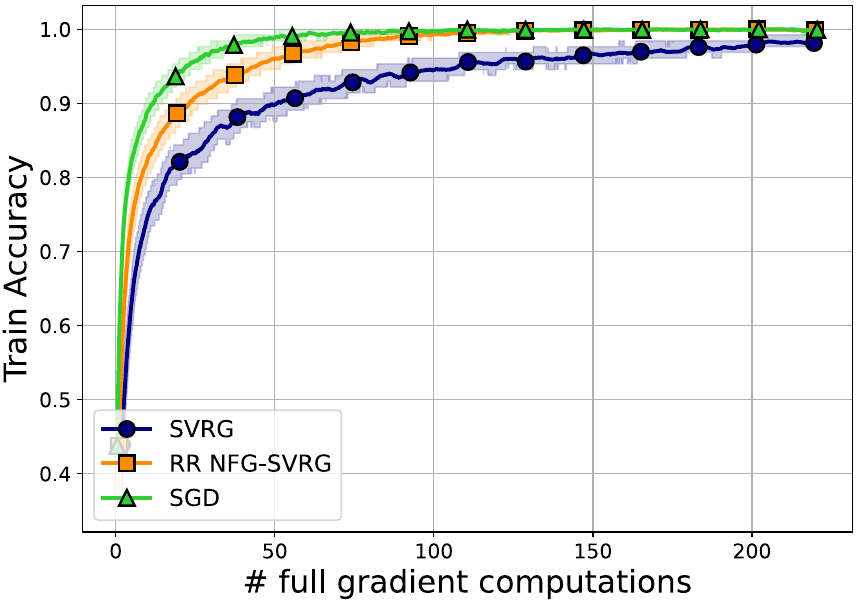}
\caption{train accuracy}
\end{subfigure}
\begin{subfigure}[b]{0.48\columnwidth}
\includegraphics[width=\textwidth]{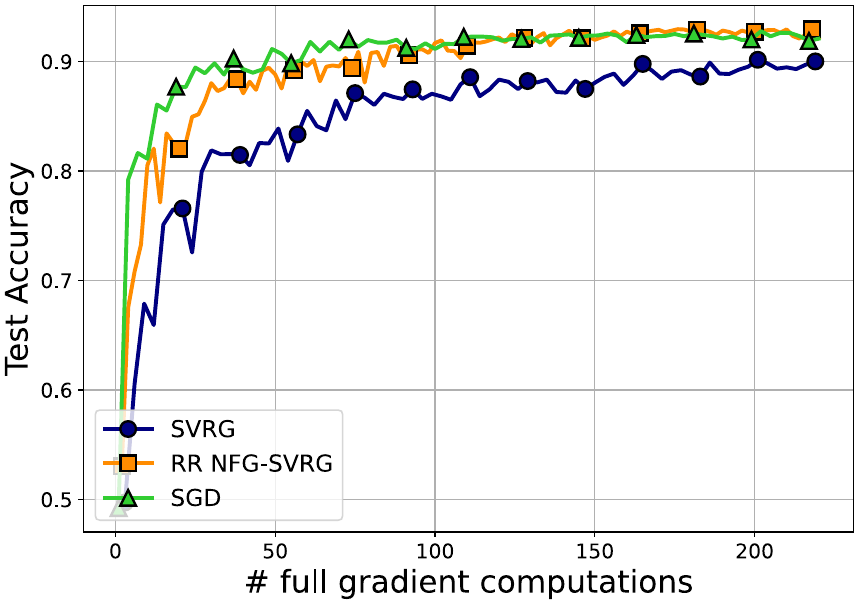}
\caption{test accuracy}
\end{subfigure}
\caption{\textsc{No Full Grad SVRG} and \textsc{SVRG}.}
\vspace{-4mm}
\label{fig:svrg_10}
\end{figure}
\textsc{NFG SVRG} reduces training loss oscillations compared to \textsc{SGD}, particularly in low-diversity datasets. Despite batch fluctuations, convergence remains smooth. On the test set, \textsc{NFG SVRG} shows better loss reduction, and test accuracy exceeds that of SGD, stabilizing from epoch 150.

\begin{figure}[H]
\centering
% \small{\texttt{a9a}} & \hspace{2mm}\small{\texttt{w8a}} & \hspace{3mm}\small{\texttt{ijcnn1}} & \hspace{4mm}\small{\texttt{skin-nonskin}} \\
\begin{subfigure}[b]{0.48\columnwidth}
\includegraphics[width=\textwidth]{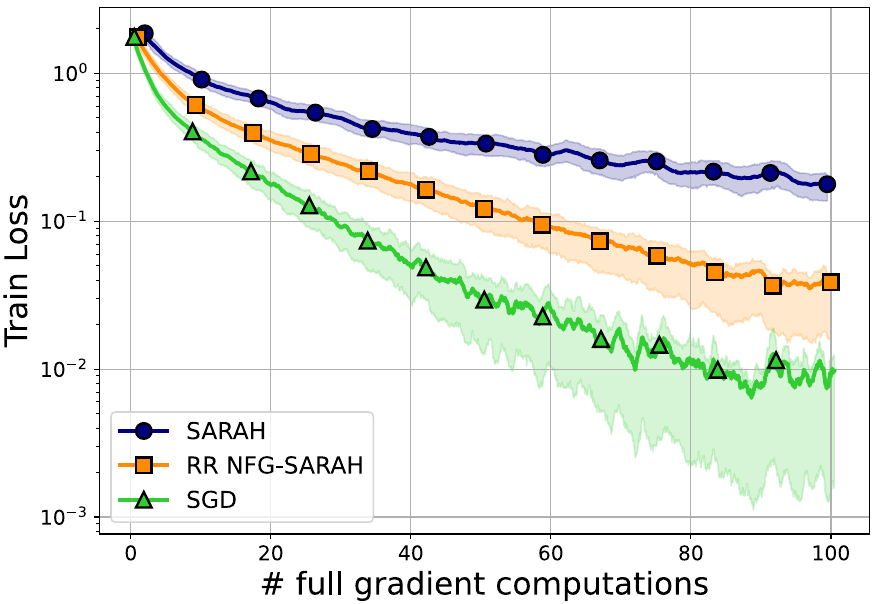}
\caption{train loss}
\end{subfigure}
\hfill
\begin{subfigure}[b]{0.48\columnwidth}
\includegraphics[width=\textwidth]{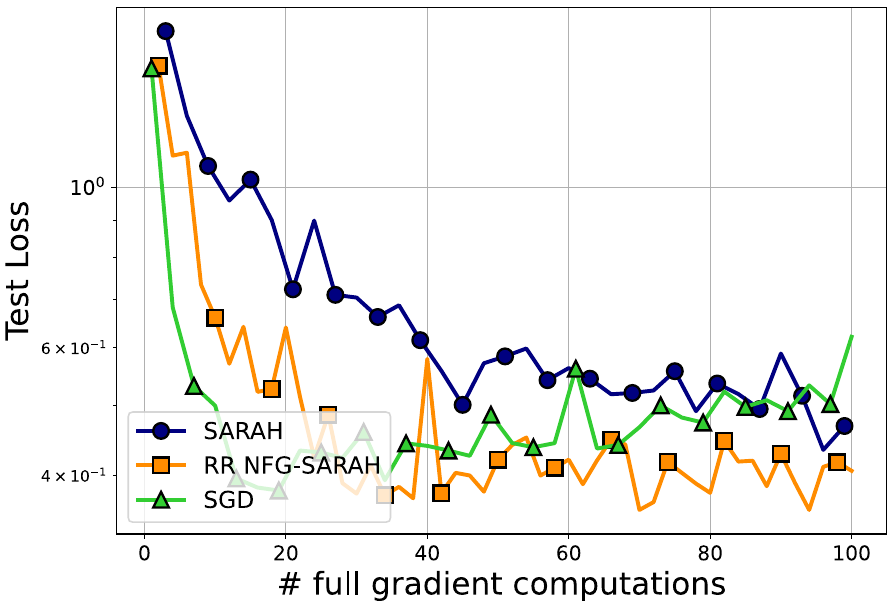}
\caption{test loss}
\end{subfigure}
\begin{subfigure}[b]{0.48\columnwidth}
\includegraphics[width=\textwidth]{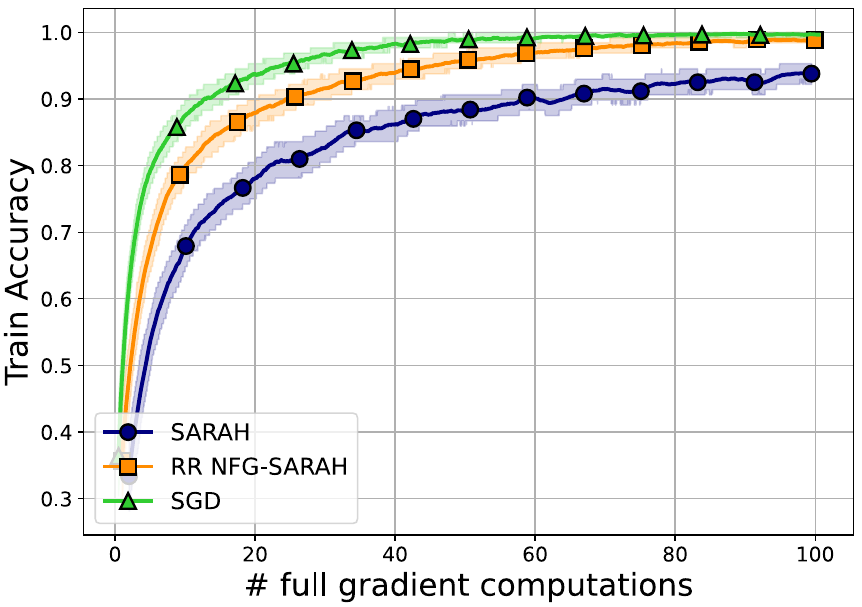}
\caption{train accuracy}
\end{subfigure}
\begin{subfigure}[b]{0.48\columnwidth}
\includegraphics[width=\textwidth]{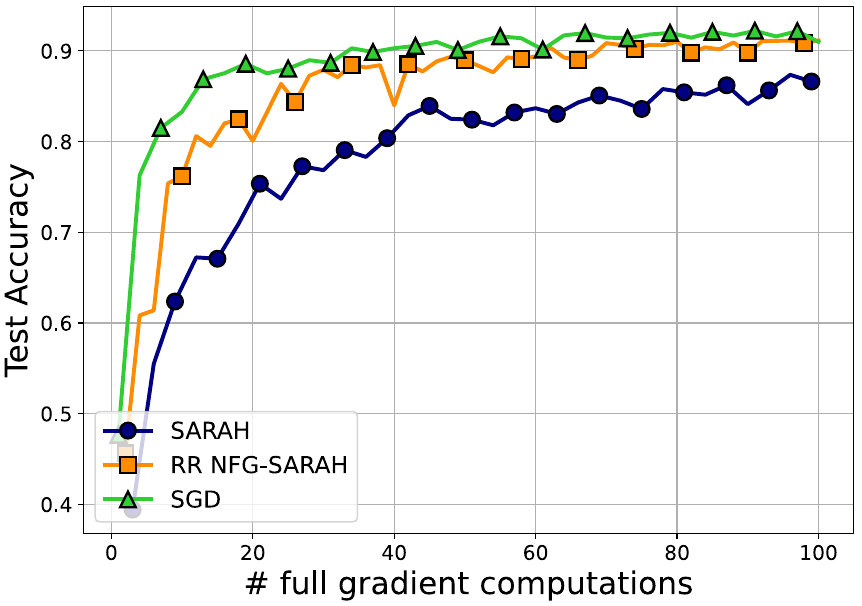}
\caption{test accuracy}
\end{subfigure}
\caption{\textsc{No Full Grad SARAH} and \textsc{SARAH}.}
\vspace{-4mm}
\label{fig:sarah_10}
\end{figure}
\textsc{NFG SARAH} ensures stable training loss convergence, surpassing the standard \textsc{SARAH} in speed concerning full gradient computations. On the test set, the loss reaches a comparable minimum to that of SGD but continues to decrease, while SGD begins to fluctuate. The final test loss is lower, and test accuracy progressively improves, with enhancement in the later stages.

We also extend the experimental validation of our methods. We present experiments on CIFAR-100 and fine-tune a Swin Transformer \citep{swin} on Tiny ImageNet \citep{tiny_imagenet} in Appendix \ref{sec:additionalexp}.

\section{CONCLUSION}
This paper introduces an approach that eliminates the necessity of full gradient computations in variance-reduced stochastic methods. Our technique approximates the full gradient via a moving average of stochastic gradients throughout an epoch, theoretically enabled by a shuffling heuristic. We establish upper convergence bounds by integrating this technique into both \textsc{SVRG} and \textsc{SARAH}. Furthermore, we provide lower bounds for the class of stochastic first-order methods employing shuffling.

While this work establishes both upper and lower complexity bounds, a complete picture requires closing the gap between them. Future research could aim to achieve this, for instance, through a refined analysis incorporating mini-batching strategies.

\section*{Acknowledgments}

The work was done in the Laboratory of Federated Learning Problems (Supported by Grant App. No. 2 to Agreement No. 075-03-2024-214).

\bibliography{aistats2026}

@article{mishchenko2020random,
  title={Random reshuffling: Simple analysis with vast improvements},
  author={Mishchenko, Konstantin and Khaled, Ahmed and Richt{\'a}rik, Peter},
  journal={Advances in Neural Information Processing Systems},
  volume={33},
  pages={17309--17320},
  year={2020}
}

@inproceedings{malinovsky2023random,
  title={Random reshuffling with variance reduction: New analysis and better rates},
  author={Malinovsky, Grigory and Sailanbayev, Alibek and Richt{\'a}rik, Peter},
  booktitle={Uncertainty in Artificial Intelligence},
  pages={1347--1357},
  year={2023},
  organization={PMLR}
}

@article{vanli2016stronger,
  title={A stronger convergence result on the proximal incremental aggregated gradient method},
  author={Vanli, Nuri Denizcan and Gurbuzbalaban, Mert and Ozdaglar, Asu},
  journal={arXiv preprint arXiv:1611.08022},
  year={2016}
}

@article{ying2020variance,
  title={Variance-reduced stochastic learning under random reshuffling},
  author={Ying, Bicheng and Yuan, Kun and Sayed, Ali H},
  journal={IEEE Transactions on Signal Processing},
  volume={68},
  pages={1390--1408},
  year={2020},
  publisher={IEEE}
}

@article{sun2019general,
  title={General proximal incremental aggregated gradient algorithms: Better and novel results under general scheme},
  author={Sun, Tao and Sun, Yuejiao and Li, Dongsheng and Liao, Qing},
  journal={Advances in Neural Information Processing Systems},
  volume={32},
  year={2019}
}

@article{mokhtari2018surpassing,
  title={Surpassing gradient descent provably: A cyclic incremental method with linear convergence rate},
  author={Mokhtari, Aryan and Gurbuzbalaban, Mert and Ribeiro, Alejandro},
  journal={SIAM Journal on Optimization},
  volume={28},
  number={2},
  pages={1420--1447},
  year={2018},
  publisher={SIAM}
}

@article{huang2021improved,
  title={An improved analysis and rates for variance reduction under without-replacement sampling orders},
  author={Huang, Xinmeng and Yuan, Kun and Mao, Xianghui and Yin, Wotao},
  journal={Advances in Neural Information Processing Systems},
  volume={34},
  pages={3232--3243},
  year={2021}
}

@article{zhang2020tight,
  title={Tight Lower Complexity Bounds for Strongly Convex Finite-Sum Optimization},
  author={Zhang, Min and Shu, Yao and He, Kun},
  journal={arXiv preprint arXiv:2010.08766},
  year={2020}
}

@article{stich2019unified,
  title={Unified optimal analysis of the (stochastic) gradient method},
  author={Stich, Sebastian U},
  journal={arXiv preprint arXiv:1907.04232},
  year={2019}
}

@article{beznosikov2023random,
  title={Random-reshuffled SARAH does not need full gradient computations},
  author={Beznosikov, Aleksandr and Tak{\'a}{\v{c}}, Martin},
  journal={Optimization Letters},
  pages={1--23},
  year={2023},
  publisher={Springer}
}

@inproceedings{bottou2009curiously,
  title={Curiously fast convergence of some stochastic gradient descent algorithms},
  author={Bottou, L{\'e}on},
  booktitle={Proceedings of the symposium on learning and data science, Paris},
  volume={8},
  pages={2624--2633},
  year={2009},
  organization={Citeseer}
}

@article{recht2013parallel,
  title={Parallel stochastic gradient algorithms for large-scale matrix completion},
  author={Recht, Benjamin and R{\'e}, Christopher},
  journal={Mathematical Programming Computation},
  volume={5},
  number={2},
  pages={201--226},
  year={2013},
  publisher={Springer}
}

@article{rakhlin2012making,
  title={Making Gradient Descent Optimal for Strongly Convex Stochastic Optimization},
  author={Rakhlin, Alexander and Shamir, Ohad and Sridharan, Karthik},
  year={2012}
}

@inproceedings{drori2020complexity,
  title={The complexity of finding stationary points with stochastic gradient descent},
  author={Drori, Yoel and Shamir, Ohad},
  booktitle={International Conference on Machine Learning},
  pages={2658--2667},
  year={2020},
  organization={PMLR}
}

@inproceedings{nguyen2019tight,
  title={Tight dimension independent lower bound on the expected convergence rate for diminishing step sizes in sgd},
  author={Nguyen, PH and Nguyen, LM and van Dijk, M},
  booktitle={33rd Annual Conference on Neural Information Processing Systems, NeurIPS 2019},
  year={2019},
  organization={Neural information processing systems foundation}
}

@article{gurbuzbalaban2017convergence,
  title={On the convergence rate of incremental aggregated gradient algorithms},
  author={Gurbuzbalaban, Mert and Ozdaglar, Asuman and Parrilo, Pablo A},
  journal={SIAM Journal on Optimization},
  volume={27},
  number={2},
  pages={1035--1048},
  year={2017},
  publisher={SIAM}
}

@article{park2020linear,
  title={Linear convergence of cyclic SAGA},
  author={Park, Youngsuk and Ryu, Ernest K},
  journal={Optimization Letters},
  volume={14},
  number={6},
  pages={1583--1598},
  year={2020},
  publisher={Springer}
}

@book{nesterov2018lectures,
  title={Lectures on convex optimization},
  author={Nesterov, Yurii and others},
  volume={137},
  year={2018},
  publisher={Springer}
}

@article{gower2020variance,
  title={Variance-reduced methods for machine learning},
  author={Gower, Robert M and Schmidt, Mark and Bach, Francis and Richt{\'a}rik, Peter},
  journal={Proceedings of the IEEE},
  volume={108},
  number={11},
  pages={1968--1983},
  year={2020},
  publisher={IEEE}
}

@article{johnson2013accelerating,
  title={Accelerating stochastic gradient descent using predictive variance reduction},
  author={Johnson, Rie and Zhang, Tong},
  journal={Advances in neural information processing systems},
  volume={26},
  year={2013}
}

@article{roux2012stochastic,
  title={A stochastic gradient method with an exponential convergence \_rate for finite training sets},
  author={Roux, Nicolas and Schmidt, Mark and Bach, Francis},
  journal={Advances in neural information processing systems},
  volume={25},
  year={2012}
}

@article{defazio2014saga,
  title={SAGA: A fast incremental gradient method with support for non-strongly convex composite objectives},
  author={Defazio, Aaron and Bach, Francis and Lacoste-Julien, Simon},
  journal={Advances in neural information processing systems},
  volume={27},
  year={2014}
}

@inproceedings{defazio2014finito,
  title={Finito: A faster, permutable incremental gradient method for big data problems},
  author={Defazio, Aaron and Domke, Justin and others},
  booktitle={International Conference on Machine Learning},
  pages={1125--1133},
  year={2014},
  organization={PMLR}
}

@article{robbins1951stochastic,
  title={A stochastic approximation method},
  author={Robbins, Herbert and Monro, Sutton},
  journal={The annals of mathematical statistics},
  pages={400--407},
  year={1951},
  publisher={JSTOR}
}

@book{lan2020first,
  title={First-order and stochastic optimization methods for machine learning},
  author={Lan, Guanghui},
  volume={1},
  year={2020},
  publisher={Springer}
}

@misc{koloskova2024convergence,
      title={On Convergence of Incremental Gradient for Non-Convex Smooth Functions}, 
      author={Anastasia Koloskova and Nikita Doikov and Sebastian U. Stich and Martin Jaggi},
      year={2024},
      eprint={2305.19259},
      archivePrefix={arXiv},
      primaryClass={cs.LG}
}

@article{moulines2011non,
  title={Non-asymptotic analysis of stochastic approximation algorithms for machine learning},
  author={Moulines, Eric and Bach, Francis},
  journal={Advances in neural information processing systems},
  volume={24},
  year={2011}
}

@inproceedings{nguyen2017sarah,
  title={SARAH: A novel method for machine learning problems using stochastic recursive gradient},
  author={Nguyen, Lam M and Liu, Jie and Scheinberg, Katya and Tak{\'a}{\v{c}}, Martin},
  booktitle={International conference on machine learning},
  pages={2613--2621},
  year={2017},
  organization={PMLR}
}

@article{hu2019efficient,
  title={Efficient smooth non-convex stochastic compositional optimization via stochastic recursive gradient descent},
  author={Hu, Wenqing and Li, Chris Junchi and Lian, Xiangru and Liu, Ji and Yuan, Huizhuo},
  journal={Advances in Neural Information Processing Systems},
  volume={32},
  year={2019}
}

@article{fang2018spider,
  title={Spider: Near-optimal non-convex optimization via stochastic path-integrated differential estimator},
  author={Fang, Cong and Li, Chris Junchi and Lin, Zhouchen and Zhang, Tong},
  journal={Advances in neural information processing systems},
  volume={31},
  year={2018}
}

@inproceedings{safran2020good,
  title={How good is SGD with random shuffling?},
  author={Safran, Itay and Shamir, Ohad},
  booktitle={Conference on Learning Theory},
  pages={3250--3284},
  year={2020},
  organization={PMLR}
}

@article{ghadimi2013stochastic,
  title={Stochastic first-and zeroth-order methods for nonconvex stochastic programming},
  author={Ghadimi, Saeed and Lan, Guanghui},
  journal={SIAM journal on optimization},
  volume={23},
  number={4},
  pages={2341--2368},
  year={2013},
  publisher={SIAM}
}

@inproceedings{allen2016variance,
  title={Variance reduction for faster non-convex optimization},
  author={Allen-Zhu, Zeyuan and Hazan, Elad},
  booktitle={International conference on machine learning},
  pages={699--707},
  year={2016},
  organization={PMLR}
}

@inproceedings{kovalev2020don,
  title={Don’t jump through hoops and remove those loops: SVRG and Katyusha are better without the outer loop},
  author={Kovalev, Dmitry and Horv{\'a}th, Samuel and Richt{\'a}rik, Peter},
  booktitle={Algorithmic Learning Theory},
  pages={451--467},
  year={2020},
  organization={PMLR}
}

@article{allen2018katyusha,
  title={Katyusha: The first direct acceleration of stochastic gradient methods},
  author={Allen-Zhu, Zeyuan},
  journal={Journal of Machine Learning Research},
  volume={18},
  number={221},
  pages={1--51},
  year={2018}
}

@inproceedings{li2020convergence,
  title={On the convergence of SARAH and beyond},
  author={Li, Bingcong and Ma, Meng and Giannakis, Georgios B},
  booktitle={International Conference on Artificial Intelligence and Statistics},
  pages={223--233},
  year={2020},
  organization={PMLR}
}

@article{li2021zerosarah,
  title={ZeroSARAH: Efficient nonconvex finite-sum optimization with zero full gradient computation},
  author={Li, Zhize and Hanzely, Slavom{\'\i}r and Richt{\'a}rik, Peter},
  journal={arXiv preprint arXiv:2103.01447},
  year={2021}
}

@article{nguyen2021inexact,
  title={Inexact SARAH algorithm for stochastic optimization},
  author={Nguyen, Lam M and Scheinberg, Katya and Tak{\'a}{\v{c}}, Martin},
  journal={Optimization Methods and Software},
  volume={36},
  number={1},
  pages={237--258},
  year={2021},
  publisher={Taylor \& Francis}
}

@article{liu2020optimal,
  title={An optimal hybrid variance-reduced algorithm for stochastic composite nonconvex optimization},
  author={Liu, Deyi and Nguyen, Lam M and Tran-Dinh, Quoc},
  journal={arXiv preprint arXiv:2008.09055},
  year={2020}
}

@article{cutkosky2019momentum,
  title={Momentum-based variance reduction in non-convex sgd},
  author={Cutkosky, Ashok and Orabona, Francesco},
  journal={Advances in neural information processing systems},
  volume={32},
  year={2019}
}

@article{arjevani2023lower,
  title={Lower bounds for non-convex stochastic optimization},
  author={Arjevani, Yossi and Carmon, Yair and Duchi, John C and Foster, Dylan J and Srebro, Nathan and Woodworth, Blake},
  journal={Mathematical Programming},
  volume={199},
  number={1},
  pages={165--214},
  year={2023},
  publisher={Springer}
}

@article{metelev2024decentralized,
  title={Decentralized finite-sum optimization over time-varying networks},
  author={Metelev, Dmitry and Chezhegov, Savelii and Rogozin, Alexander and Beznosikov, Aleksandr and Sholokhov, Alexander and Gasnikov, Alexander and Kovalev, Dmitry},
  journal={arXiv preprint arXiv:2402.02490},
  year={2024}
}

@inproceedings{zhou2019lower,
  title={Lower bounds for smooth nonconvex finite-sum optimization},
  author={Zhou, Dongruo and Gu, Quanquan},
  booktitle={International Conference on Machine Learning},
  pages={7574--7583},
  year={2019},
  organization={PMLR}
}

@article{krizhevsky2009learning,
  title={Learning multiple layers of features from tiny images},
  author={Krizhevsky, Alex and Hinton, Geoffrey and others},
  year={2009},
  publisher={Toronto, ON, Canada}
}

@inproceedings{he2016deep,
  title={Deep residual learning for image recognition},
  author={He, Kaiming and Zhang, Xiangyu and Ren, Shaoqing and Sun, Jian},
  booktitle={Proceedings of the IEEE conference on computer vision and pattern recognition},
  pages={770--778},
  year={2016}
}

@article{paszke2019pytorch,
  title={Pytorch: An imperative style, high-performance deep learning library},
  author={Paszke, Adam and Gross, Sam and Massa, Francisco and Lerer, Adam and Bradbury, James and Chanan, Gregory and Killeen, Trevor and Lin, Zeming and Gimelshein, Natalia and Antiga, Luca and others},
  journal={Advances in neural information processing systems},
  volume={32},
  year={2019}
}

@inproceedings{medyakov2024shuffling,
  title={Shuffling Heuristic in Variational Inequalities: Establishing New Convergence Guarantees},
  author={Medyakov, Daniil and Molodtsov, Gleb and Grigoriy, Evseev and Petrov, Egor and Beznosikov, Aleksandr},
  booktitle={International Conference on Computational Optimization},
  year={2024}
}

@article{beznosikov2023smooth,
  title={Smooth monotone stochastic variational inequalities and saddle point problems: A survey},
  author={Beznosikov, Aleksandr and Polyak, Boris and Gorbunov, Eduard and Kovalev, Dmitry and Gasnikov, Alexander},
  journal={European Mathematical Society Magazine},
  number={127},
  pages={15--28},
  year={2023}
}

@article{ghadimi2016mini,
  title={Mini-batch stochastic approximation methods for nonconvex stochastic composite optimization},
  author={Ghadimi, Saeed and Lan, Guanghui and Zhang, Hongchao},
  journal={Mathematical Programming},
  volume={155},
  number={1},
  pages={267--305},
  year={2016},
  publisher={Springer}
}

@inproceedings{swin,
  title={Swin transformer: Hierarchical vision transformer using shifted windows},
  author={Liu, Ze and Lin, Yutong and Cao, Yue and Hu, Han and Wei, Yixuan and Zhang, Zheng and Lin, Stephen and Guo, Baining},
  booktitle={Proceedings of the IEEE/CVF international conference on computer vision},
  pages={10012--10022},
  year={2021}
}

@inproceedings{imagenet,
  title={Imagenet: A large-scale hierarchical image database},
  author={Deng, Jia and Dong, Wei and Socher, Richard and Li, Li-Jia and Li, Kai and Fei-Fei, Li},
  booktitle={2009 IEEE conference on computer vision and pattern recognition},
  pages={248--255},
  year={2009},
  organization={Ieee}
}

@article{tiny_imagenet,
  title={Tiny imagenet visual recognition challenge},
  author={Le, Yann and Yang, Xuan},
  journal={CS 231N},
  volume={7},
  number={7},
  pages={3},
  year={2015}
}
\bibliographystyle{plainnat}

\clearpage
\appendix
\thispagestyle{empty}

% Supplementary material: To improve readability, you must use a single-column format for the supplementary material.
\onecolumn
\aistatstitle{Variance Reduction Methods Do Not Need to Compute Full Gradients: Improved Efficiency through Shuffling}

\allowdisplaybreaks
%\tableofcontents

\section{ADDITIONAL EXPERIMENTS}\label{sec:additionalexp}

\subsection{Least squares regression.}
We consider the non-linear least squares loss problem:
%\vspace{-3mm}
\begin{equation} \label{eq:9}
   \textstyle{ f(x) = \frac{1}{n} \sum_{i=1}^{n} (y_i - h_i)^2,}
\end{equation} 
%\vspace{-5mm}
where $n$ is the number of samples, $y_i$ is the true value for sample $i$, $h_i$ is value for sample $i$, calculated as $ h_i = \frac{1}{1 + \exp(-z_i)}$, with $ z_i = A_i \cdot x $, addressing problem \eqref{eq:9}. Based on our theoretical estimates, which suggest inferior performance compared to standard \textsc{SVRG} and \textsc{SARAH}, we expect less favorable convergence. To address this limitation, we expand our investigation to examine the convergence of this method by tuning the stepsize, a topic that falls outside the scope of our current theoretical framework. The plots are shown in Figures \ref{fig:nfglog}-\ref{fig:nfglog2}.

\begin{figure}[H]
\centering
%\vspace{-0.4cm}
\begin{minipage}[][][b]{\columnwidth}
\centering
\resizebox{\columnwidth}{!}{%
\includegraphics[width=0.5\columnwidth]{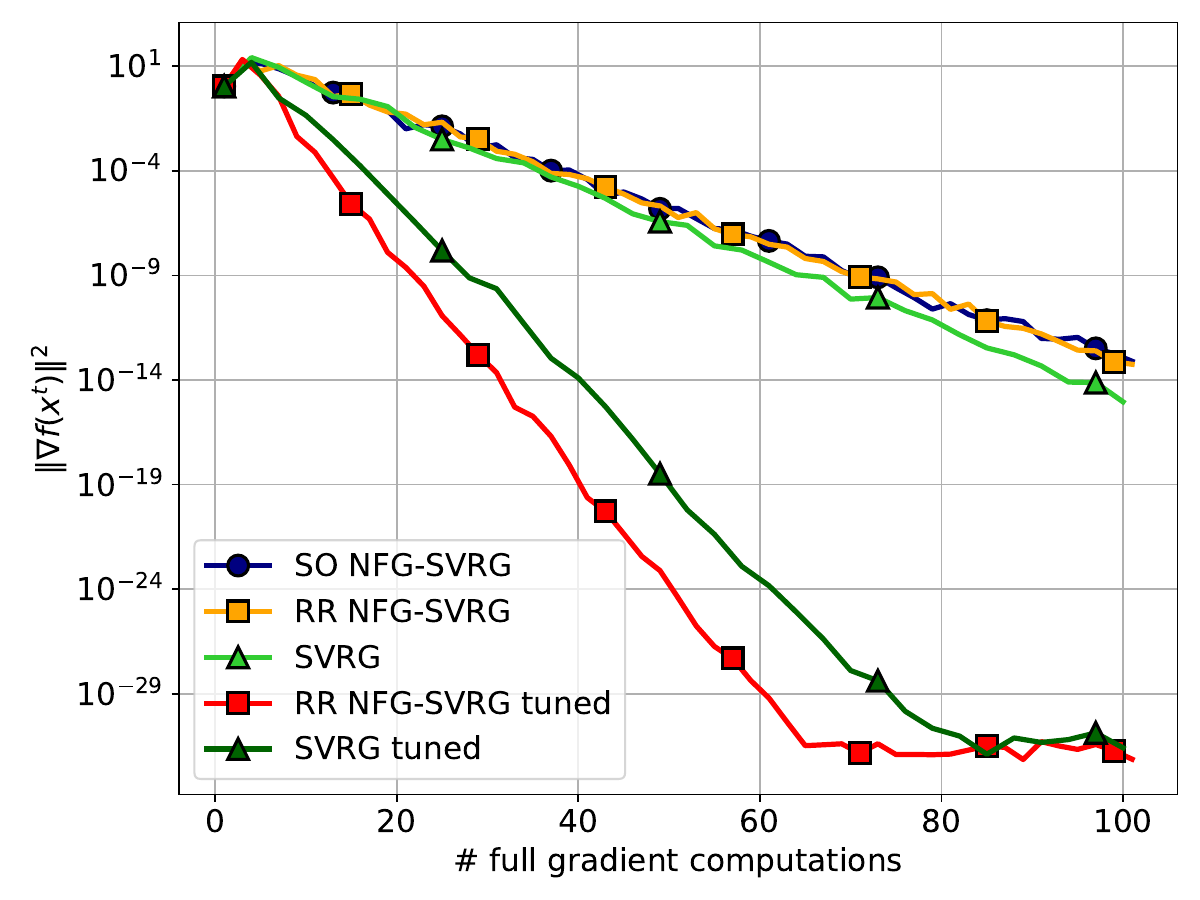}
\includegraphics[width=0.5\columnwidth]{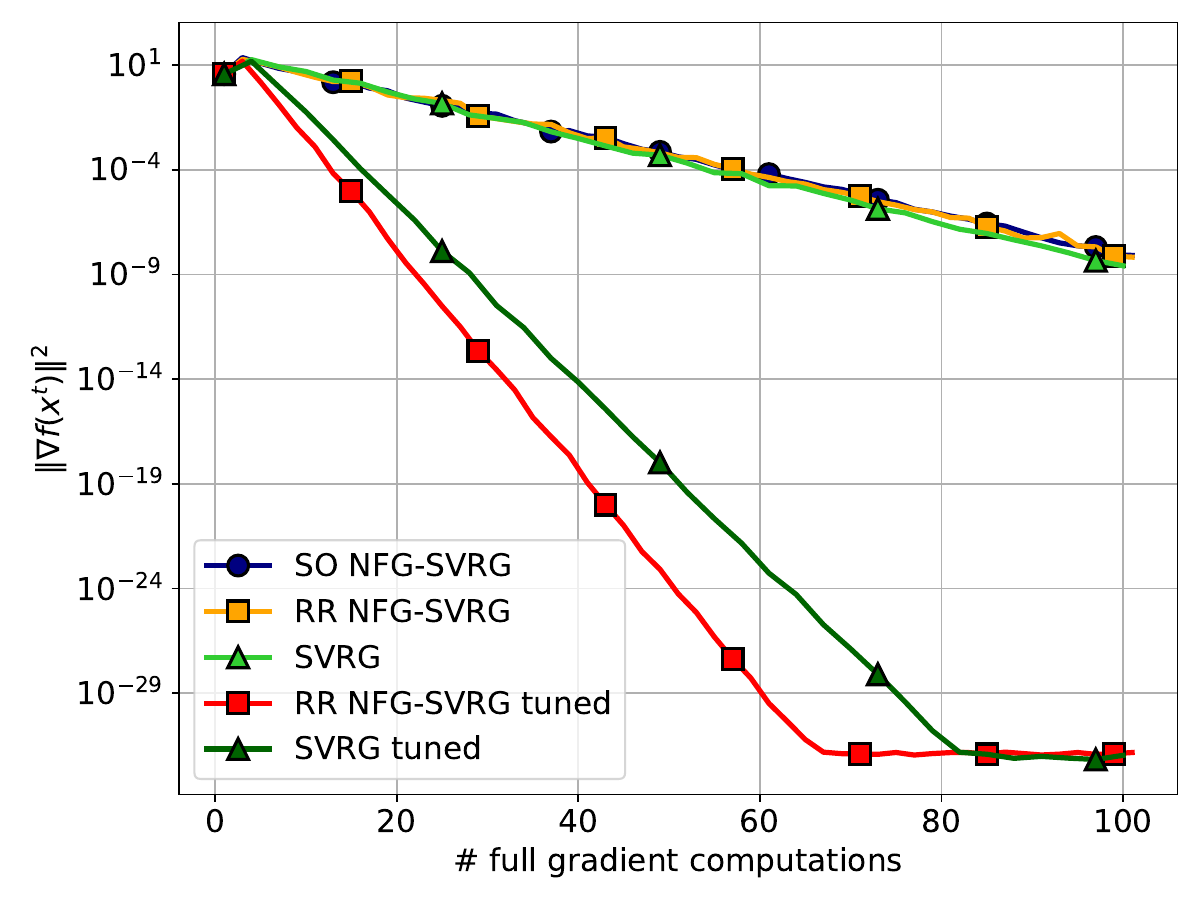}
}
\end{minipage}
\caption{\textsc{No Full Grad SVRG} and \textsc{SVRG} convergence with theoretical and tuned step sizes on problem \eqref{eq:9} on the \texttt{ijcnn1} (left) and \texttt{a9a} (right) datasets.}
\label{fig:nfglog}
\end{figure}

Upon examining the plots, we notice that although \textsc{No Full Grad} versions may converge slightly slower although comparable than its regular counterpart when using the theoretical step size, it significantly outperforms \textsc{SVRG} and \textsc{SARAH}, respectively, when the step size is optimally tuned. This highlights the potential of our method to achieve superior convergence rates with proper parameter adjustments, providing a robust alternative for large-scale optimization.

\begin{figure}[H]
\centering
%\vspace{-0.4cm}
\begin{minipage}[][][b]{\columnwidth}
\centering
\resizebox{\columnwidth}{!}{%
\includegraphics[width=0.5\columnwidth]{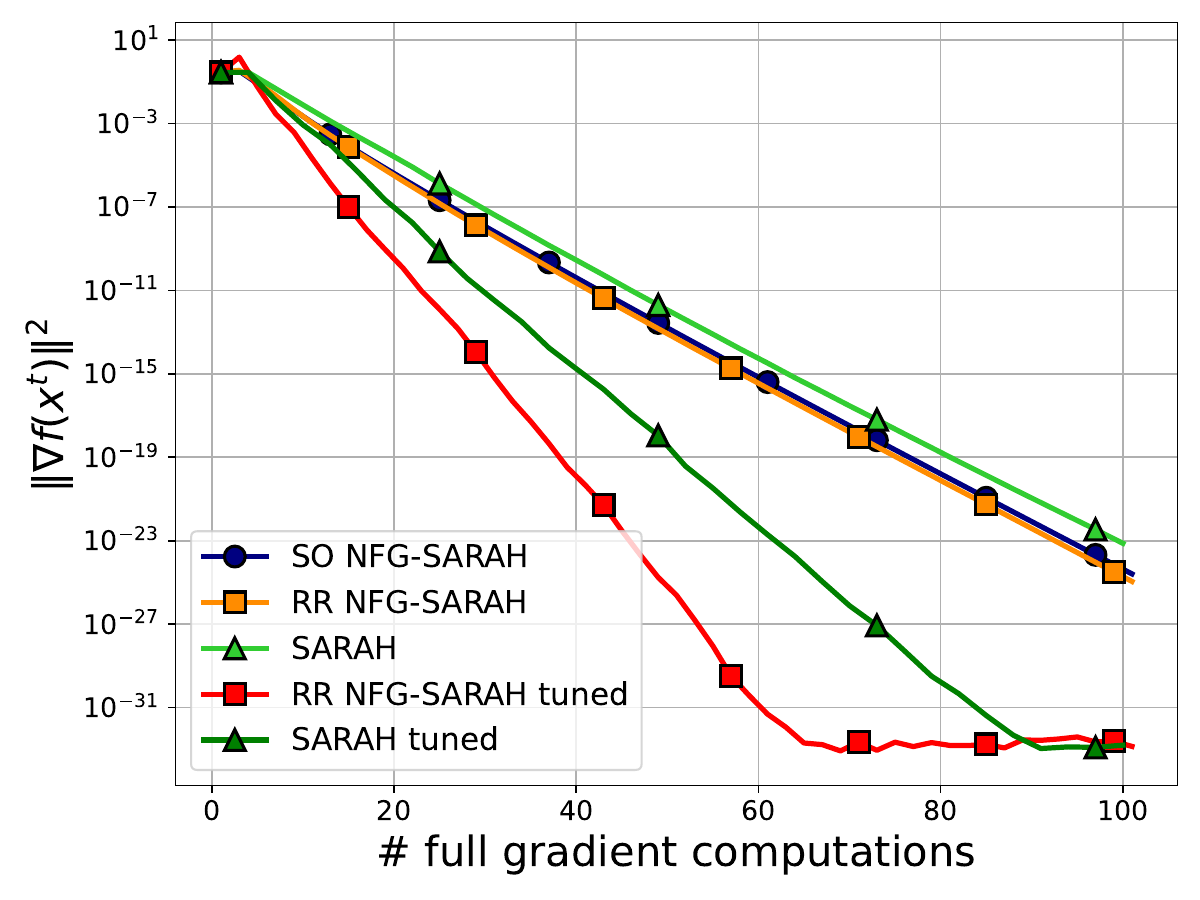}
\includegraphics[width=0.5\columnwidth]{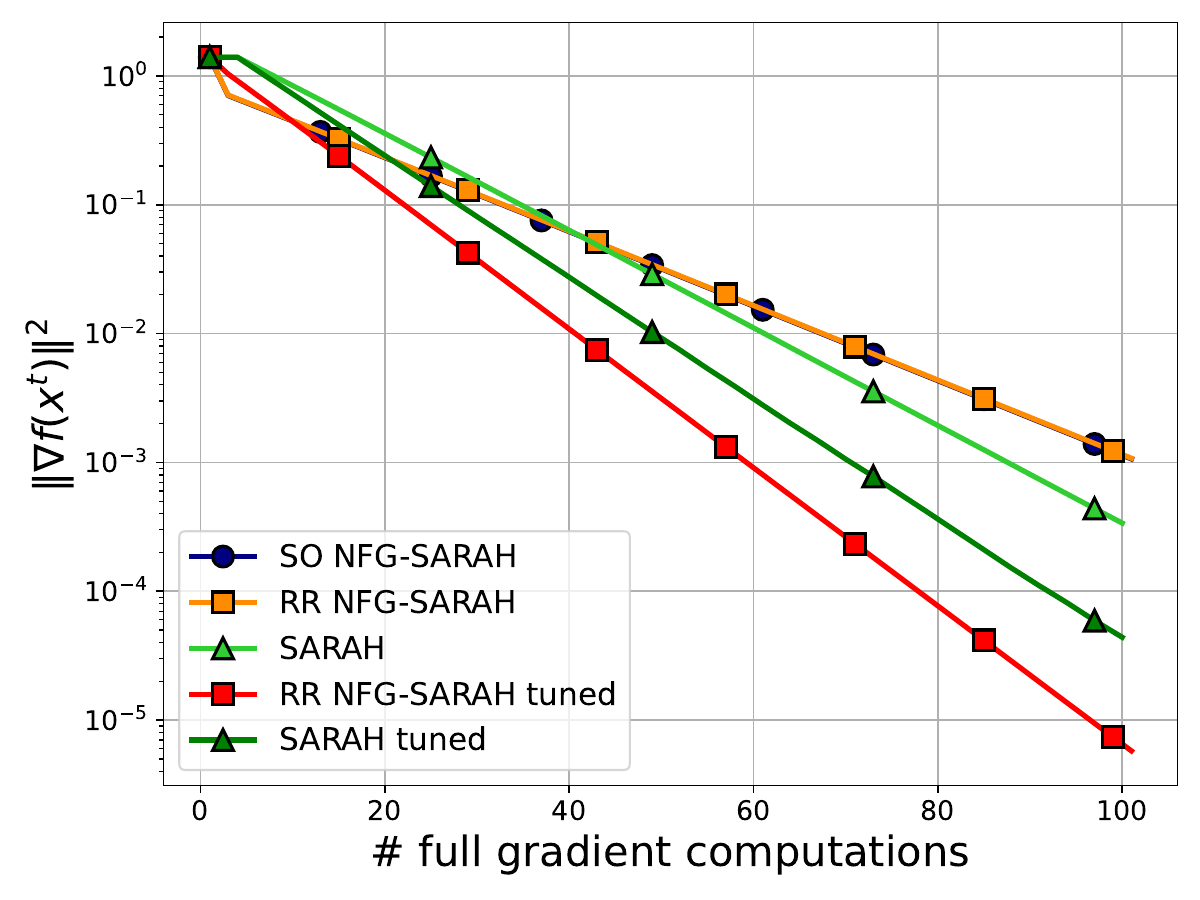}
}
\end{minipage}
\caption{\textsc{No Full Grad SARAH} and \textsc{SARAH} convergence with theoretical and tuned step sizes on problem \eqref{eq:9} on the \texttt{ijcnn1} (left) and \texttt{a9a} (right) datasets.}
\label{fig:nfglog2}
\end{figure}

\subsection{ResNet-18 on CIFAR-10/CIFAR-100 classification.}

\subsubsection*{Experiments on CIFAR-100}

We provide the results for image classification on the CIFAR-100 dataset. We keep the same experimental setup as for classification on the CIFAR-10 dataset (see Section \ref{sec:experiments}). The plots are provided in Figures \ref{fig:svrg_100}-\ref{fig:sarah_100}.

\begin{figure}[H]
\centering
\vspace{-0.4cm}
\begin{minipage}[][][b]{0.8\textwidth}
\centering
\resizebox{\columnwidth}{!}{%
\includegraphics[width=0.8\textwidth]{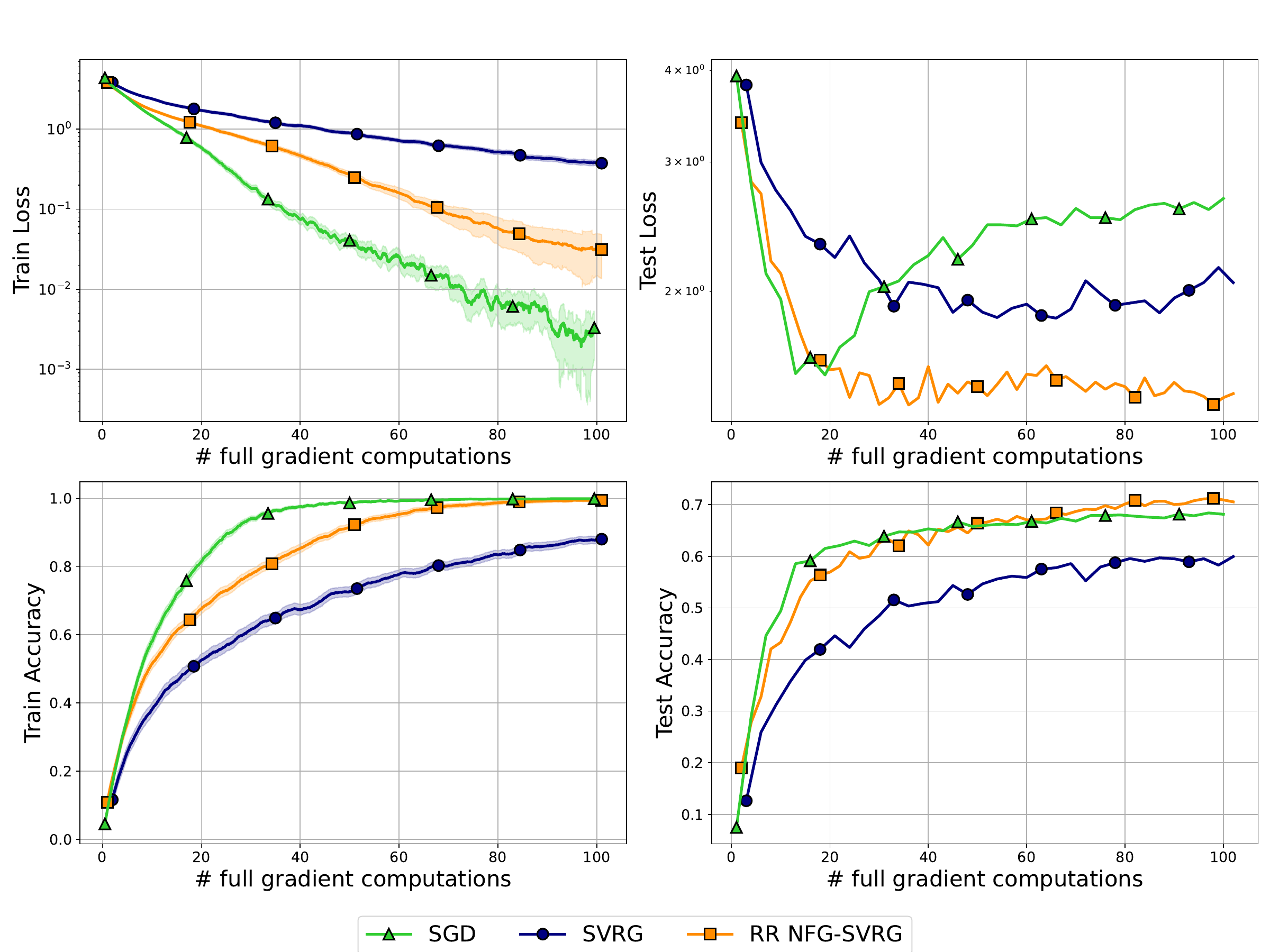}
}
\end{minipage}
% \vspace{-0.4cm}
\caption{\textsc{No Full Grad SVRG} and \textsc{SVRG}  on CIFAR-100 convergence.}
\label{fig:svrg_100}
\end{figure}

The SVRG algorithm follows a similar trend, with test loss stabilizing instead of increasing, unlike SGD. While SGD rebounds, SVRG maintains a plateau before further improvement. Test accuracy surpasses SGD from epoch 50 onward.

\begin{figure}[H]
\centering
\vspace{-0.4cm}
\begin{minipage}[][][b]{0.8\textwidth}
\centering
\resizebox{\columnwidth}{!}{%
\includegraphics[width=0.8\textwidth]{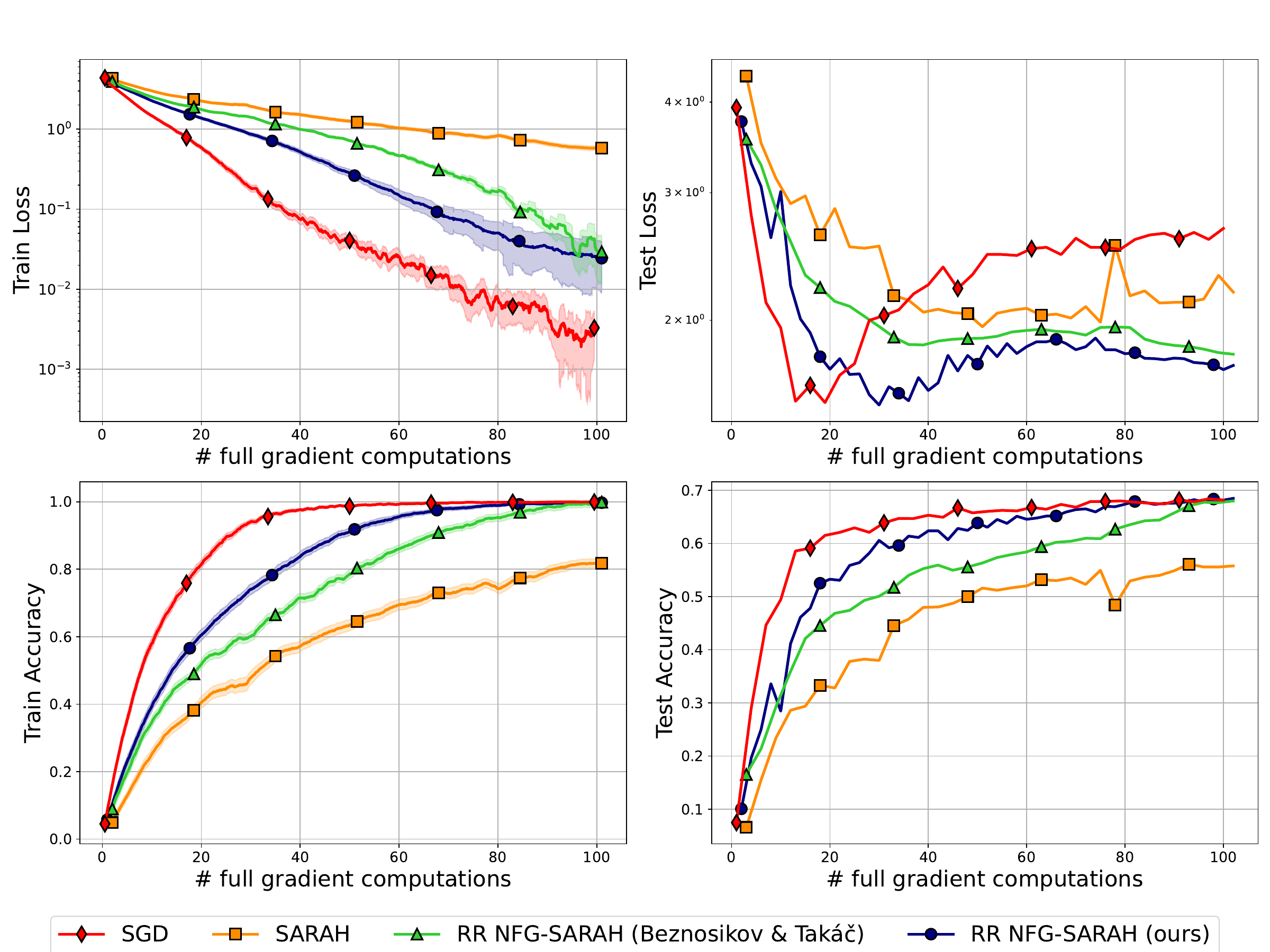}
}
\end{minipage}
% \vspace{-0.4cm}
\caption{\textsc{No Full Grad SARAH} and \textsc{SARAH} on CIFAR-100 convergence.}
\label{fig:sarah_100}
\end{figure}

The SARAH algorithm stabilizes training loss convergence and outpaces standard SARAH. Test loss decreases beyond SGD’s minimum, leading to a lower final loss. Test accuracy initially rises slowly but later accelerates, reducing overfitting.

\subsubsection*{Experimental Design}
 The experiments were implemented in Python using the PyTorch library \citep{paszke2019pytorch}, leveraging both a single CPU (Intel Xeon 2.20 GHz) and a single GPU (NVIDIA Tesla P100) for computation. To emulate a distributed environment, we split batches across multiple workers, simulating a decentralized optimization setting.

Our algorithms are evaluated in terms of accuracy and the number of full gradient computations. The experiments are conducted with the following setup:
\begin{itemize}
    \item number of workers $M = 5$;
    \item learning rate $\gamma = 0.1$ for both optimizers decaying to $10^{-3}$;
    \item regularization parameter $\lambda_1 = 0.0005$.
\end{itemize}

\subsection{Tiny ImageNet Classification with Swin Transformer fine-tuning}

\subsection*{Experimental Protocol}
Our image classification experiments on the Tiny ImageNet dataset \citep{tiny_imagenet} employed the Tiny Swin Transformer architecture \citep{swin}. This lightweight variant of the Swin Transformer is characterized by its hierarchical design and the use of shifted windows for efficient self-attention computation. The specific configuration utilized involved non-overlapping $4 \times 4$ input patches and a $7 \times 7$ window size for local self-attention.

We initialized the model using pretrained weights from ImageNet-1K \citep{imagenet}, specifically the \texttt{swin\_T\_patch4\_window7\_224} checkpoint provided in the official Swin Transformer repository\footnote{\url{https://github.com/microsoft/Swin-Transformer/blob/main/MODELHUB.md}}. The model was then fine-tuned on Tiny ImageNet.

The Tiny ImageNet dataset comprises 200 classes with images of $64 \times 64$ resolution. To meet the model's input requirements, all images were upsampled to $224 \times 224$. A standard ImageNet-style data augmentation pipeline was implemented, including random resized cropping and horizontal flipping.

Training spanned approximately 30 full gradient computations, with a batch size of 256. A cosine learning rate schedule was adopted, featuring a linear warm-up phase for the initial 10\% of total training steps, followed by decay to 10\% of the peak learning rate.
Weight decay was selected from $\{0, 0.01, 0.1\}$ based on validation performance. All optimization methods incorporated gradient clipping with a threshold of 1.0. 

\subsection*{Performance on Image Classification}
Further results and training curves for the Tiny Swin Transformer on the Tiny ImageNet classification task are presented in Figure \ref{fig:swin_plots}.

\begin{figure}[H]
\centering
\vspace{-0.4cm}
\begin{minipage}[][][b]{\textwidth}
\centering
\resizebox{\columnwidth}{!}{%
\includegraphics[width=\textwidth]{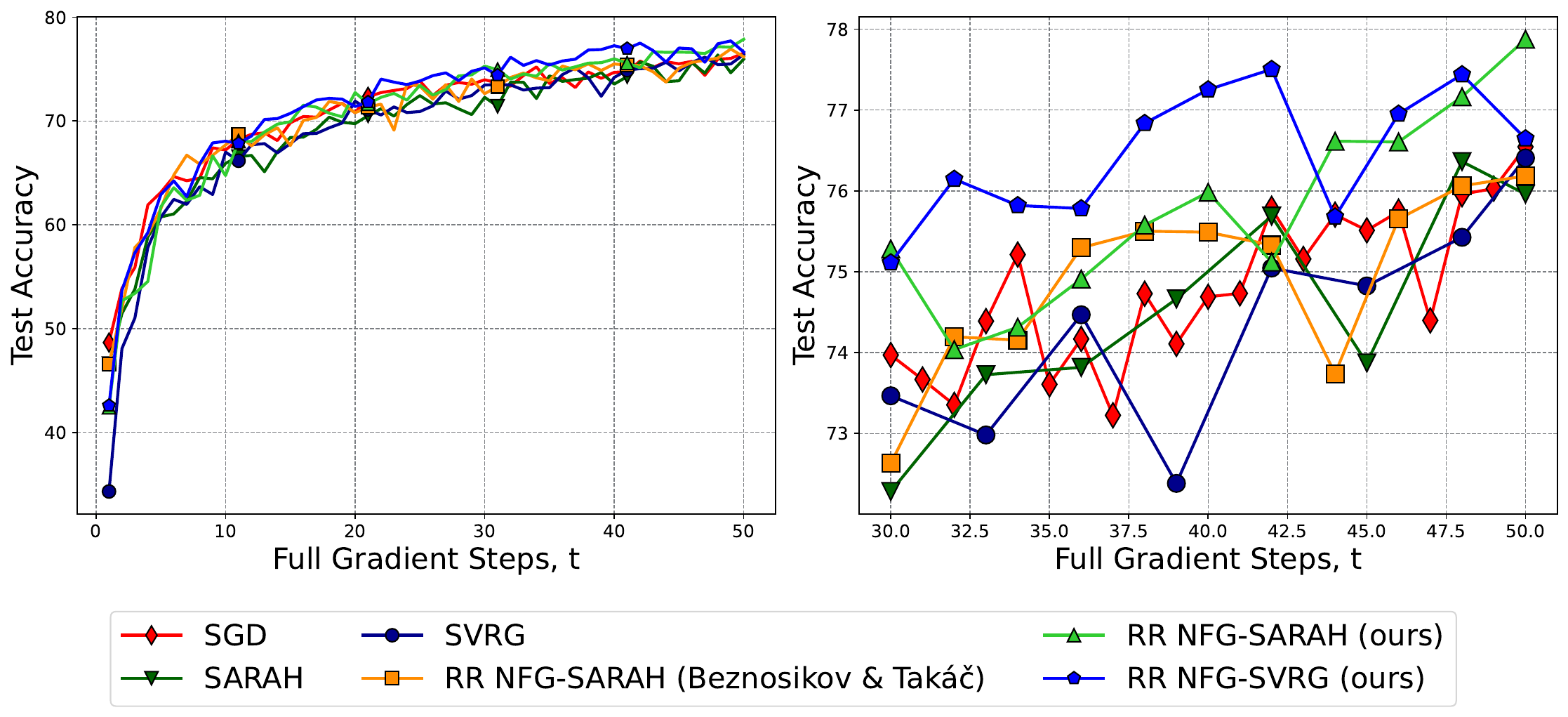}
}
\end{minipage}
% \vspace{-0.4cm}
\caption{\textsc{No Full Grad SARAH} and \textsc{SVRG} on Tiny ImageNet convergence.}
\label{fig:swin_plots}
\end{figure}

\begin{table}[ht]
\vspace{-8mm}
\centering
\caption{Final Accuracy of Variance Reduction Methods on Tiny ImageNet Convergence.}
\label{tab:vit}
\begin{tabular}{l|c}
\toprule
Algorithm & Final accuracy ($\uparrow$)\\
\midrule
\textsc{SGD} & 76.545  \\
\hline
\textsc{SARAH} & 75.961 \\
\hline
\textsc{SVRG} & 76.407 \\
\hline
\textsc{RR NFG-SARAH} \citep{beznosikov2023random} & 76.186 \\
\hline
\textsc{RR NFG-SARAH} (ours) & \textbf{77.875}\\
\hline
\textsc{RR NFG-SVRG} (ours) & 76.646 \\
\bottomrule
\end{tabular}
\end{table}

The results demonstrate the superior performance of our methods compared to classical variance reduction methods and modified version of \textsc{SARAH} \citep{beznosikov2023random}. The advantage is evident for both low-dimensional problems and complex networks with a large number of parameters.

\section{GENERAL INEQUALITIES}\label{sec:basicineq}
We introduce important inequalities that are used in further proofs. Let $f$ adhere to \text{Assumption~\ref{as1}}, $g$ adhere to \text{Assumption~\ref{as2stronglyconvex}}. Then for any real number $i$ and for all vectors $x, y, \{x_i\}\in\mathbb{R}^d$ with a positive scalars $\alpha, \beta$, the following inequalities hold:
\begin{align}
\label{ineq3} \tag{Scalar} 2\langle x, y \rangle & \leqslant \frac{\|x\|^2}{\alpha} + \alpha \|y\|^2, \\
\label{ineq:norm} \tag{Norm} 2\langle x, y \rangle & = \|x + y\|^2 - \|x\|^2 - \|y\|^, \\
\label{ineq:square} \tag{Quad} \|x + y\|^2 & \leqslant (1 + \beta)\|x\|^2 + (1 + \frac{1}{\beta})\|y\|^2, \\
\label{ineq4} \tag{Lip} f(x) & \leqslant f(y) + \langle \nabla f(y), x-y \rangle + \frac{L}{2} \|x-y\|^2,\\
\label{ineq1} \tag{CS}  \left\|\sum_{i=1}^{n} x_i\right\|^2 & \leqslant  n \sum_{i=1}^{n} \|x_i\|^2  \quad \quad (\text{Cauchy-Schwarz}),\\
\label{PL} \tag{PL}  g(x) - \inf g  &\leqslant \frac{1}{2\mu} \|\nabla g(x)\|^2 \quad \quad (\text{Polyak-Lojasiewicz}).
\end{align}
Here, \eqref{ineq4} was derived in \citep{nesterov2018lectures} in Theorem 2.1.5.

\section{NO FULL GRAD SVRG}\label{nfgsvrg_appendix}

For the convenience of the reader, we provide here the short description of Algorithm \ref{alg2}. If we consider it in epoch $s \neq 0$, one can note that the update rule is nothing but
\begin{align}
\label{svrg:update}
\begin{split}
\begin{cases}
    &\text{initial initialization:}\\
    &\quad\omega_s = x_s^0 = x_{s-1}^n\\
    &\quad v_s = \frac{1}{n}\sum\limits_{t = 0}^{n-1} \nabla f_{\pi_{s-1}^t} (x_{s-1}^t)\\
    &\text{for all iterations during the epoch}:\\
    &\quad v_s^t = \nabla f_{\pi_{s}^t} (x_s^t) - \nabla f_{\pi_{s}^t}(\omega_s) + v_s\\
    &\quad x_s^{t+1} = x_s^t - \gamma v_s^t
\end{cases}
\end{split}
\end{align}
\subsection{Non-convex setting}
\begin{lemma}\label{lemma1}
    Suppose that Assumptions \ref{as1}, \ref{as2} hold. Let the stepsize $\gamma \leqslant \frac{1}{Ln}$. Then for Algorithm \ref{alg2} it holds
    \begin{equation*}
        f(\omega_{s+1}) \leqslant f(\omega_s) - \frac{\gamma n}{2}\|\nabla f(\omega_s)\|^2 + \frac{\gamma n}{2}\left\|\nabla f(\omega_s) - \frac{1}{n}\sum\limits_{t=0}^{n-1} v_s^t\right\|^2.
    \end{equation*}
    \begin{proof}
    Using the iteration of Algorithm \ref{alg2} \eqref{svrg:update}, we have
        \begin{eqnarray*}
            f(\omega_{s+1}) &=& f(\omega_s - (\omega_s - \omega_{s+1})) \\
            &\overset{\eqref{ineq4}}{\leqslant}& f(\omega_s) + \langle \nabla f(\omega_s), \omega_{s+1} - \omega_s\rangle + \frac{L}{2}\|\omega_{s+1} - \omega_s\|^2 \\
            &=& f(\omega_s) - \gamma n\left\langle \nabla f(\omega_s), \frac{1} n\sum\limits_{t=0}^{n-1} v_s^t\right\rangle  + \frac{\gamma^2n^2L}{2}\left\|\frac{1}{n}\sum\limits_{t=0}^{n-1} v_s^t\right\|^2 \\
            &\overset{\eqref{ineq:norm}}{=}& f(\omega_s) - \frac{\gamma n}{2}\left[\|\nabla f(\omega_s)\|^2 + \left\|\frac{1}{n}\sum\limits_{t=0}^{n-1} v_s^t\right\|^2 - \left\|\nabla f(\omega_s) - \frac{1}{n}\sum\limits_{t=0}^{n-1} v_s^t\right\|^2 \right] \\
            & &+ \frac{\gamma^2n^2L}{2}\left\|\frac{1}{n}\sum\limits_{t=0}^{n-1} v_s^t\right\|^2  \\
            &=& f(\omega_s) - \frac{\gamma n}{2}\left[\|\nabla f(\omega_s)\|^2 - \left\|\nabla f(\omega_s) - \frac{1}{n}\sum\limits_{t=0}^{n-1} v_s^t\right\|^2 \right] \\
            & & - \frac{\gamma n}{2} \cdot \left(1 - \gamma nL \right)\left\|\frac{1}{n}\sum\limits_{t=0}^{n-1} v_s^t\right\|^2,
        \end{eqnarray*}
    Choosing \(\gamma : \frac{\gamma n}{2}\left(1 - \gamma nL\right) > 0 \) and note that such a choice is followed by \( \gamma\leqslant \frac{1}{Ln}\). In that way, we make the last term is negative and obtain the result of the lemma.  
    \end{proof}
\end{lemma}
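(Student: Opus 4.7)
The plan is to treat a single epoch as one gradient-type step and apply the smoothness descent inequality once, with a careful splitting of the resulting cross term. From the update rule \eqref{svrg:update} we have $\omega_s = x_s^0$ and $\omega_{s+1} = x_s^n$, and telescoping $x_s^{t+1} = x_s^t - \gamma v_s^t$ over $t = 0, \dots, n-1$ gives
$$\omega_{s+1} - \omega_s = -\gamma \sum_{t=0}^{n-1} v_s^t = -\gamma n \, \bar v_s, \qquad \bar v_s := \tfrac{1}{n}\sum_{t=0}^{n-1} v_s^t,$$
so the whole epoch is equivalent to a step of effective length $\gamma n$ in direction $\bar v_s$. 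The smoothness descent lemma \eqref{ineq4} applied at the pair $(\omega_s, \omega_{s+1})$ then yields
$$f(\omega_{s+1}) \le f(\omega_s) - \gamma n \langle \nabla f(\omega_s), \bar v_s \rangle + \tfrac{L \gamma^2 n^2}{2} \|\bar v_s\|^2.$$

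To produce the two norms on the right-hand side of the claim, I would rewrite the inner product via the polarization identity \eqref{ineq:norm}, namely $-2\langle a,b\rangle = \|a-b\|^2 - \|a\|^2 - \|b\|^2$ with $a = \nabla f(\omega_s)$ and $b = \bar v_s$. Substituting and collecting terms gives
$$f(\omega_{s+1}) \le f(\omega_s) - \tfrac{\gamma n}{2}\|\nabla f(\omega_s)\|^2 + \tfrac{\gamma n}{2}\bigl\|\nabla f(\omega_s) - \bar v_s\bigr\|^2 - \tfrac{\gamma n}{2}(1 - L\gamma n)\|\bar v_s\|^2.$$
The coefficient $1 - L\gamma n$ is nonnegative precisely under the stepsize condition $\gamma \le \tfrac{1}{Ln}$ assumed by the lemma, so the last term can be discarded and the stated inequality follows.

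There is no real obstacle here; the one substantive choice is to use polarization instead of a Young/Cauchy--Schwarz split for the cross term, since that is what produces the clean coefficient $\tfrac{\gamma n}{2}$ on the approximation-error term $\|\nabla f(\omega_s) - \bar v_s\|^2$ rather than burying it in a loose bound. This shape is exactly what the downstream analysis needs: the gradient-norm term drives non-convex progress, while the error term will be controlled in subsequent lemmas (Lemma \ref{ngl2} and Lemma \ref{ngl3}) using the fact that $\omega_s$ and $v_s$ come from iterates and stochastic gradients of the previous epoch.
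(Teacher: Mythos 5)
Your proof is correct and follows exactly the same route as the paper: telescope the epoch to get $\omega_{s+1}-\omega_s = -\gamma n\,\bar v_s$, apply the $L$-smoothness descent inequality \eqref{ineq4}, expand the cross term via the polarization identity \eqref{ineq:norm}, and discard the nonnegative $\tfrac{\gamma n}{2}(1-L\gamma n)\|\bar v_s\|^2$ term under $\gamma \le \tfrac{1}{Ln}$. There is nothing to add or correct.
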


Now we want to address the last term in the inequality of Lemma \ref{lemma1}. We prove the following lemma.

\begin{lemma}[\textbf{Lemma \ref{ngl2}}]\label{lemma2}
Suppose that Assumptions \ref{as1}, \ref{as2} hold. Then for Algorithm \ref{alg2} a valid estimate is
    \begin{align*}
    \left\| \nabla f(\omega_s) - \frac{1}{n}\sum\limits_{t=0}^{n-1} v_s^t\right\|^2 &\leqslant 2\|\nabla f(\omega_{s}) - v_s \|^2 + \frac{2L^2}{n}\sum\limits_{t=0}^{n-1} \|x_s^t - \omega_s\|^2.
    \end{align*}

\begin{proof}
We straightforwardly move to estimate of the desired norm:
\begin{align}
    \notag\left\| \nabla f(\omega_s) - \frac{1}{n}\sum\limits_{t=0}^{n-1} v_s^t\right\|^2 &\overset{\eqref{svrg:update}}{=} \left\|\nabla f(\omega_s) - \frac{1}{n}\left(nv_s + \sum\limits_{t=0}^{n-1}\left(\nabla f_{\pi_s^t}(x_s^t) - \nabla f_{\pi_s^t}(\omega_s)\right)\right)\right\|^2\\
    \notag & \overset{\eqref{ineq1}}{\leqslant} 2\|\nabla f(\omega_s) - v_s\|^2 + \frac{2}{n^2}\left\|\sum\limits_{t=0}^{n-1}\left(\nabla f_{\pi_s^t}(x_s^t) - \nabla f_{\pi_s^t}(\omega_s)\right)\right\|^2\\
    \notag & \overset{\eqref{ineq1}}{\leqslant} 2\|\nabla f(\omega_s) - v_s\|^2 + \frac{2}{n} \sum\limits_{t=0}^{n-1}\left\|\nabla f_{\pi_s^t}(x_s^t) - \nabla f_{\pi_s^t}(\omega_s)\right\|^2\\
    & \overset{\text{Ass. \ref{as1}}}{\leqslant} 2\|\nabla f(\omega_s) - v_s\|^2 + \frac{2L^2}{n} \sum\limits_{t=0}^{n-1}\left\|x_s^t - \omega_s\right\|^2,
\end{align}
which ends the proof.
\end{proof}    
\end{lemma}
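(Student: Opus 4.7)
The plan is to start from the algorithm's update rule in \eqref{svrg:update}, which gives $v_s^t = \nabla f_{\pi_s^t}(x_s^t) - \nabla f_{\pi_s^t}(\omega_s) + v_s$. Averaging this identity over $t = 0, \ldots, n-1$ yields
\[
\frac{1}{n}\sum_{t=0}^{n-1} v_s^t = v_s + \frac{1}{n}\sum_{t=0}^{n-1}\bigl(\nabla f_{\pi_s^t}(x_s^t) - \nabla f_{\pi_s^t}(\omega_s)\bigr),
\]
so the quantity I want to bound naturally splits as
\[
\nabla f(\omega_s) - \frac{1}{n}\sum_{t=0}^{n-1} v_s^t = \bigl(\nabla f(\omega_s) - v_s\bigr) - \frac{1}{n}\sum_{t=0}^{n-1}\bigl(\nabla f_{\pi_s^t}(x_s^t) - \nabla f_{\pi_s^t}(\omega_s)\bigr).
\]

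Next, I would take squared norms and apply \eqref{ineq1} with two terms to obtain the factor $2$ in front of each piece. The first piece $\|\nabla f(\omega_s) - v_s\|^2$ is already in the desired form and is kept as is -- it encodes the approximation error of $v_s$ relative to the true gradient at the reference point $\omega_s$, which is the unavoidable price for not computing full gradients.

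For the second piece, I would apply \eqref{ineq1} once more to push the norm inside the sum (turning the $\frac{1}{n^2}$ into $\frac{1}{n}$), giving
\[
\left\|\frac{1}{n}\sum_{t=0}^{n-1}\bigl(\nabla f_{\pi_s^t}(x_s^t) - \nabla f_{\pi_s^t}(\omega_s)\bigr)\right\|^2 \leq \frac{1}{n}\sum_{t=0}^{n-1} \|\nabla f_{\pi_s^t}(x_s^t) - \nabla f_{\pi_s^t}(\omega_s)\|^2,
\]
and then invoke Assumption~\ref{as1} term-by-term to replace each gradient difference by $L\|x_s^t - \omega_s\|$. Combining the two pieces gives exactly the claimed bound.

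There is no real obstacle here: the argument is essentially two applications of Cauchy--Schwarz followed by per-component smoothness. The only subtlety worth flagging is conceptual rather than technical -- the substitution step crucially uses that $v_s$ is held fixed over the epoch (so it factors cleanly out of the average of $v_s^t$), and it is precisely because $v_s \neq \nabla f(\omega_s)$ in our full-gradient-free scheme that we pay the extra term $2\|\nabla f(\omega_s) - v_s\|^2$ compared to the classical \textsc{SVRG} analysis. This observation also motivates the subsequent Lemma~\ref{ngl3}, which must then control $\|\nabla f(\omega_s) - v_s\|^2$.
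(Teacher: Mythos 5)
Your proof is correct and follows exactly the same route as the paper: expand the average of $v_s^t$ via the update rule, split off $\nabla f(\omega_s) - v_s$, apply \eqref{ineq1} twice to move from $\frac{1}{n^2}$ of a squared sum to $\frac{1}{n}$ of a sum of squares, and then invoke Assumption~\ref{as1} term by term. No gaps; your closing remark about why the extra error term appears relative to classical \textsc{SVRG} mirrors the discussion the paper gives after the lemma.
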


\begin{lemma}[\textbf{Lemma \ref{ngl3}}]\label{lemma3}
Suppose that Assumptions \ref{as1}, \ref{as2} hold. Let the stepsize $\gamma \leqslant \frac{1}{2Ln}$. Then for Algorithm \ref{alg2} a valid estimate is
\begin{align*}
        \left\| \nabla f(\omega_s) - \frac{1}{n}\sum\limits_{t=0}^{n-1} v_s^t\right\|^2 & \leqslant 8\gamma^2L^2n^2\|v_{s}\|^2 + 32\gamma^2L^2n^2\|v_{s-1}\|^2.
\end{align*}
\begin{proof}
        To begin with, in Lemma \ref{lemma2}, we obtain
        \begin{equation}
        \label{l3:ineq1}
            \left\| \nabla f(\omega_s) - \frac{1}{n}\sum\limits_{t=0}^{n-1} v_s^t\right\|^2 \leqslant 2\|\nabla f(\omega_{s}) - v_s \|^2 + \frac{2L^2}{n}\sum\limits_{t=0}^{n-1} \|x_s^t - \omega_s\|^2.
        \end{equation}
        Let us show what $v_s$ is (here we use Line  \ref{alg2:line6} of Algorithm \ref{alg2}):
        \begin{align}
            \notag v_s & = \widetilde{v}_{s-1}^{n} = \frac{n-1}{n} \widetilde{v}_{s-1}^{n-1} + \frac{1}{n}\nabla f_{\pi_{s-1}^{n-1}} (x_{s-1}^{n-1}) \\
            \notag & = \frac{n-1}{n}\cdot\frac{n-2}{n-1} \widetilde{v}_{s-1}^{n-2} + \frac{n-1}{n}\cdot\frac{1}{n-1}\nabla f_{\pi_{s-1}^{n-2}}(x_{s-1}^{n-2})
            + \frac{1}{n}\nabla f_{\pi_{s-1}^{n-1}} (x_{s-1}^{n-1})\\
            \notag& = \frac{n-1}{n}\cdot\frac{n-2}{n-1}\cdot\ldots\cdot 0\cdot \widetilde{v}_{s-1}^0 + \frac{1}{n}\sum\limits_{t = 0}^{n-1}\nabla f_{\pi_{s-1}^t}(x_{s-1}^t)\\
            \label{l3:ineq2}& \overset{(i)}{=} \frac{1}{n}\sum\limits_{t = 0}^{n-1}\nabla f_{\pi_{s-1}^t}(x_{s-1}^t),
        \end{align}
        where equation (\textit{i}) is correct due to initialization $\widetilde{v}_{s-1}^0 = 0$ (Line \ref{alg2:line12} of Algorithm \ref{alg2}). In that way, using \eqref{l3:ineq1} and \eqref{l3:ineq2},
        \begin{eqnarray*}
            \left\| \nabla f(\omega_s) - \frac{1}{n}\sum\limits_{t=0}^{n-1} v_s^t\right\|^2 &\leqslant& 2\left\|\nabla f(\omega_s) - \frac{1}{n}\sum\limits_{t = 0}^{n-1}\nabla f_{\pi_{s-1}^t}(x_{s-1}^t) \right\|^2 \\
            & & + \frac{2L^2}{n}\sum\limits_{t=0}^{n-1} \|x_s^t - \omega_s\|^2.
        \end{eqnarray*}
        Then, using \eqref{eq:finite-sum},
        \begin{eqnarray}
            \notag \left\| \nabla f(\omega_s) - \frac{1}{n}\sum\limits_{t=0}^{n-1} v_s^t\right\|^2 &\leqslant& 2\left\|\frac{1}{n}\sum\limits_{t = 0}^{n-1}\left(\nabla f_{\pi_{s-1}^t}(\omega_{s}) - \nabla f_{\pi_{s-1}^t}(x_{s-1}^t) \right)\right\|^2 \\
            \notag& & + \frac{2L^2}{n}\sum\limits_{t=0}^{n-1} \|x_s^t - \omega_s\|^2 \\
            \notag &\overset{\eqref{ineq1}}{\leqslant}& \frac{2}{n}\sum\limits_{t=0}^{n-1}\|\nabla f_{\pi_{s-1}^t}(\omega_{s}) - \nabla f_{\pi_{s-1}^t}(x_{s-1}^t)\|^2 \\
            \notag & & + \frac{2L^2}{n}\sum\limits_{t=0}^{n-1} \|x_s^t - \omega_s\|^2 \\
            \notag &\overset{\text{Ass. \ref{as1}}}{\leqslant}& \frac{2L^2}{n}\sum\limits_{t=0}^{n-1}\|x_{s-1}^t - \omega_s\|^2 + \frac{2L^2}{n}\sum\limits_{t=0}^{n-1} \|x_s^t - \omega_s\|^2 \\
            \notag &\overset{\eqref{ineq:square}}{\leqslant}& \frac{4L^2}{n}\sum\limits_{t=0}^{n-1}\|x_{s-1}^t - \omega_{s-1}\|^2 + \frac{4L^2}{n}\sum\limits_{t=0}^{n-1}\|\omega_s - \omega_{s-1}\|^2  \\
            \label{l3:ineq3}& & + \frac{2L^2}{n}\sum\limits_{t=0}^{n-1} \|x_s^t - \omega_s\|^2.
        \end{eqnarray}
        Now we have to bound these three terms. Let us begin with $\sum\limits_{t=0}^{n-1} \|x_s^t - \omega_s\|^2$.
        \begin{eqnarray*}
            \sum\limits_{t=0}^{n-1} \|x_s^t - \omega_s\|^2 &=& \gamma^2\sum\limits_{t=0}^{n-1} \left\|\sum\limits_{k = 0}^{t-1} v_s^k\right\|^2 \overset{\eqref{svrg:update}}{=}\gamma^2\sum\limits_{t=0}^{n-1} \left\|tv_s + \sum\limits_{k = 0}^{t-1} \left(\nabla f_{\pi_s^k}(x_s^k) - \nabla f_{\pi_s^k}(\omega_s)\right)\right\|^2\\
            &\overset{\eqref{ineq1}}{\leqslant}& 2\gamma^2\sum\limits_{t=0}^{n-1}t^2\|v_s\|^2 + 2\gamma^2\sum\limits_{t=0}^{n-1}t\sum\limits_{k = 0}^{t-1}\|\nabla f_{\pi_s^k}(x_s^k) - \nabla f_{\pi_s^k}(\omega_s)\|^2\\
            &\overset{\text{Ass. \ref{as1}}}{\leqslant}& 2\gamma^2n^3\|v_s\|^2 + 2\gamma^2L^2n\sum\limits_{t=0}^{n-1}\sum\limits_{k=0}^{t-1}\|x_s^k - \omega_s\|^2\\
            &\leqslant& 2\gamma^2n^3\|v_s\|^2 + 2\gamma^2L^2n^2\sum\limits_{t=0}^{n-2}\|x_s^t - \omega_s\|^2\\
            &\leqslant& 2\gamma^2n^3\|v_s\|^2 + 2\gamma^2L^2n^2\sum\limits_{t=0}^{n-1}\|x_s^t - \omega_s\|^2.
        \end{eqnarray*}
        Expressing $\sum\limits_{t = 0}^{n-1}\|x_s^t - \omega_s\|^2$ from here, we get
        \begin{equation*}
            \sum\limits_{t = 0}^{n-1}\|x_s^t - \omega_s\|^2 \leqslant \frac{2\gamma^2 n^3\|v_s\|^2}{1 - 2\gamma^2L^2n^2}.
        \end{equation*}
         To finish this part of proof it remains for us to choose appropriate $\gamma$. In Lemma \ref{lemma1} we require $\gamma \leqslant\frac{1}{Ln}$. There we choose smaller values of $\gamma: \gamma\leqslant\frac{1}{2Ln}$ (with that values all previous transitions is correct). Now we provide final estimation of this norm:
        \begin{align}
        \label{l3:ineq4}
            \sum\limits_{t=0}^{n-1}\|x_s^t - \omega_s\|^2 \leqslant 4\gamma^2n^3\|v_s\|^2.
        \end{align}
        One can note the boundary of the $\sum\limits_{t = 0}^{n-1}\|x_{s-1}^t - \omega_{s-1}\|^2$ term is similar because it involves the same sum of norms from the previous epoch.
        \begin{align}
        \label{l3:ineq5}
            \sum\limits_{t = 0}^n\|x_{s-1}^t - \omega_{s-1}\|^2 \leqslant 4\gamma^2n^3\|v_{s-1}\|^2.
        \end{align}
        It remains for us to estimate the $\sum\limits_{t=0}^{n-1}\|\omega_s - \omega_{s-1}\|^2$ term.
        \begin{eqnarray*}
            \sum\limits_{t=0}^{n-1} \|\omega_s - \omega_{s-1}\|^2 &=& \gamma^2\sum\limits_{t=0}^{n-1} \left\|\sum\limits_{k = 0}^{n-1} v_{s-1}^k\right\|^2 \\
            &\overset{\eqref{svrg:update}}{=}& \gamma^2\sum\limits_{t=0}^{n-1} \left\|nv_{s-1} + \sum\limits_{k = 0}^{n-1} \left(\nabla f_{\pi_{s-1}^k}(x_{s-1}^k) - \nabla f_{\pi_{s-1}^k}(\omega_{s-1})\right)\right\|^2\\
            &\overset{\eqref{ineq1}}{\leqslant}& 2\gamma^2\sum\limits_{t=0}^{n-1}n^2\|v_{s-1}\|^2 \\
            & & + 2\gamma^2\sum\limits_{t=0}^{n-1}n\sum\limits_{k = 0}^{n-1}\|\nabla f_{\pi_{s-1}^k}(x_{s-1}^k) - \nabla f_{\pi_{s-1}^k}(\omega_{s-1})\|^2\\
            &\overset{\text{Ass. \ref{as1}}}{\leqslant}& 2\gamma^2n^3\|v_{s-1}\|^2 + 2\gamma^2L^2n\sum\limits_{t=0}^{n-1}\sum\limits_{k=0}^{t-1}\|x_{s-1}^k - \omega_{s-1}\|^2\\
            &\leqslant& 2\gamma^2n^3\|v_{s-1}\|^2 + 2\gamma^2L^2n^2\sum\limits_{t=0}^{n-2}\|x_{s-1}^t - \omega_{s-1}\|^2\\
            &\leqslant& 2\gamma^2n^3\|v_{s-1}\|^2 + 2\gamma^2L^2n^2\sum\limits_{t=0}^{n-1}\|x_{s-1}^t - \omega_{s-1}\|^2\\
            &\overset{\eqref{l3:ineq5}}{\leqslant}& 2\gamma^2n^3\|v_{s-1}\|^2 + 8\gamma^4L^2n^5\sum\limits_{t=0}^{n-1}\|x_{s-1}^t - \omega_{s-1}\|^2.
        \end{eqnarray*}
        Using our choice $\gamma\leqslant\frac{1}{2Ln}$, we derive the estimate of the last term:
        \begin{align}\label{l3:ineq6}
            \sum\limits_{t=0}^{n-1} \|\omega_s - \omega_{s-1}\|^2 \leqslant 2\gamma^2n^3\|v_{s-1}\|^2 + 2\gamma^2n^3\|v_{s-1}\|^2 = 4\gamma^2n^3\|v_{s-1}\|^2.
        \end{align}
        Now we can apply the upper bounds obtained in \eqref{l3:ineq4} -- \eqref{l3:ineq6} to \eqref{l3:ineq3} and have
        \begin{align*}
            \left\| \nabla f(\omega_s) - \frac{1}{n} \sum\limits_{t=0}^{n-1} v_s^t\right\|^2 & \leqslant 8\gamma^2L^2n^2\|v_{s}\|^2 + 16\gamma^2L^2n^2\|v_{s-1}\|^2 + 16\gamma^2L^2n^2\|v_{s-1}\|^2 \\
            & = 8\gamma^2L^2n^2\|v_{s}\|^2 + 32\gamma^2L^2n^2\|v_{s-1}\|^2,
        \end{align*}
        which ends the proof.
        \end{proof}
\end{lemma}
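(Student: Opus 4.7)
The plan is to start from Lemma~\ref{ngl2} and then reduce its two right-hand terms---the approximation error $\|\nabla f(\omega_s) - v_s\|^2$ and the intra-epoch drift $\tfrac{1}{n}\sum_{t=0}^{n-1}\|x_s^t - \omega_s\|^2$---to multiples of $\|v_s\|^2$ and $\|v_{s-1}\|^2$.

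First, I would unroll the moving-average recursion (Line~\ref{alg2:line6} of Algorithm~\ref{alg2}) with the epoch-start reset $\widetilde v_{s-1}^0 = 0$, which yields the clean identity $v_s = \tfrac{1}{n}\sum_{t=0}^{n-1}\nabla f_{\pi_{s-1}^t}(x_{s-1}^t)$. Because the permutation visits each index exactly once, $\nabla f(\omega_s) = \tfrac{1}{n}\sum_{t=0}^{n-1}\nabla f_{\pi_{s-1}^t}(\omega_s)$, so one application of \eqref{ineq1} followed by Assumption~\ref{as1} gives
\[
\|\nabla f(\omega_s) - v_s\|^2 \leqslant \frac{L^2}{n}\sum_{t=0}^{n-1}\|x_{s-1}^t - \omega_s\|^2.
\]
Splitting $x_{s-1}^t - \omega_s = (x_{s-1}^t - \omega_{s-1}) + (\omega_{s-1} - \omega_s)$ via \eqref{ineq:square} then isolates the previous-epoch drift and the reference-point shift.

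Second, I would bound the current-epoch drift $\sum_{t=0}^{n-1}\|x_s^t - \omega_s\|^2$. Since $\omega_s = x_s^0$, one has $x_s^t - \omega_s = -\gamma\sum_{k=0}^{t-1}v_s^k$; expanding each $v_s^k = \nabla f_{\pi_s^k}(x_s^k) - \nabla f_{\pi_s^k}(\omega_s) + v_s$ and applying \eqref{ineq1} twice plus Assumption~\ref{as1} leads, after summing $t = 0,\dots,n-1$, to a self-referential inequality of the form
\[
\sum_{t=0}^{n-1}\|x_s^t - \omega_s\|^2 \leqslant 2\gamma^2 n^3 \|v_s\|^2 + 2\gamma^2 L^2 n^2 \sum_{t=0}^{n-1}\|x_s^t - \omega_s\|^2.
\]
The stepsize restriction $\gamma \leqslant \tfrac{1}{2Ln}$ makes the self-coupling coefficient $\leqslant \tfrac{1}{2}$, so the recurring sum can be absorbed into the left-hand side and one ends up with $\sum_{t=0}^{n-1}\|x_s^t - \omega_s\|^2 \leqslant 4\gamma^2 n^3 \|v_s\|^2$. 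The identical chain run on epoch $s-1$ gives $\sum_{t=0}^{n-1}\|x_{s-1}^t - \omega_{s-1}\|^2 \leqslant 4\gamma^2 n^3 \|v_{s-1}\|^2$.

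Third, the reference-point shift $\sum_{t=0}^{n-1}\|\omega_s - \omega_{s-1}\|^2 = n\|\omega_s - \omega_{s-1}\|^2$ is controlled by writing $\omega_s - \omega_{s-1} = -\gamma\sum_{k=0}^{n-1}v_{s-1}^k$ and repeating the expansion from the second step, plugging in the drift bound already derived for epoch $s-1$; this yields $\leqslant 4\gamma^2 n^3 \|v_{s-1}\|^2$. Assembling these three ingredients inside the bound of Lemma~\ref{ngl2} produces $8\gamma^2 L^2 n^2 \|v_s\|^2 + 32\gamma^2 L^2 n^2 \|v_{s-1}\|^2$, as claimed.

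The main obstacle I expect is precisely the self-referential step: the prefactor of the recurring sum is what forces the stepsize restriction $\gamma \leqslant \tfrac{1}{2Ln}$, so one must be careful not to discard factors of $n$ while applying \eqref{ineq1} (the natural, naive bound $\sum t \leqslant n \cdot n$ is already tight here), otherwise the admissible stepsize shrinks further and the final constants degrade. A secondary, bookkeeping-heavy obstacle is tracking the factors of $2$ through three nested uses of \eqref{ineq1} and \eqref{ineq:square} so that the final coefficients $8$ and $32$ come out correctly rather than as larger multiples.
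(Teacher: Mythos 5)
Your proposal is correct and follows essentially the same route as the paper: start from Lemma~\ref{ngl2}, unroll the moving-average to identify $v_s$ as the average of previous-epoch stochastic gradients, split $x_{s-1}^t - \omega_s$ via \eqref{ineq:square} into previous-epoch drift and the shift $\omega_s - \omega_{s-1}$, derive the self-referential drift bound $\sum_{t=0}^{n-1}\|x_s^t-\omega_s\|^2 \leqslant 4\gamma^2 n^3\|v_s\|^2$ using $\gamma \leqslant \tfrac{1}{2Ln}$, and reuse it for epoch $s-1$ and for $\|\omega_s-\omega_{s-1}\|^2$. The constants $8$ and $32$ come out exactly as in the paper, so there is nothing to add.
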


\begin{theorem}[\textbf{Theorem \ref{nfgt1}}]
Suppose Assumptions \ref{as1}, \ref{as2nonconvex} hold. Then Algorithm \ref{alg2} with $\gamma\leqslant\frac{1}{20L n}$ to reach $\varepsilon$-accuracy, where $\varepsilon^2 = \frac{1}{S}\sum\limits_{s=1}^{S} \|\nabla f(\omega_s)\|^2$, needs
    \begin{equation*}
        \mathcal{O} \left(\frac{nL}{\varepsilon^2}\right)~~ \text{iterations and oracle calls.}
    \end{equation*}
               \begin{proof}
        We combine the result of Lemma \ref{lemma1} with the result of Lemma \ref{lemma3} and obtain
        \begin{align*}
            f(\omega_{s+1}) &\leqslant f(\omega_s) - \frac{\gamma n}{2}\|\nabla f(\omega_s)\|^2 \\
            & \quad + \frac{\gamma n}{2}\left(8\gamma^2L^2n^2\|v_{s}\|^2 + 32\gamma^2L^2n^2\|v_{s-1}\|^2\right).
        \end{align*}
        We subtract $f(x^*)$ from both parts:
        \begin{align*}
            f(\omega_{s+1}) - f(x^*) &\leqslant f(\omega_s) - f(x^*) - \frac{\gamma n}{2}\|\nabla f(\omega_s)\|^2 \\
            & \quad + \frac{\gamma n}{2}\left(8\gamma^2L^2n^2\|v_{s}\|^2 + 32\gamma^2L^2n^2\|v_{s-1}\|^2\right)\\
            & = f(\omega_s) - f(x^*) - \frac{\gamma n}{4}\|\nabla f(\omega_s)\|^2 \\
            & \quad + \frac{\gamma n}{2}\left(8\gamma^2L^2n^2\|v_{s}\|^2 + 32\gamma^2L^2n^2\|v_{s-1}\|^2\right) \\
            & \quad - \frac{\gamma n}{4}\|\nabla f(\omega_s)\|^2.
            \end{align*}
       Then, transforming the last term by using \eqref{ineq:square} with $\beta=1$, we get
        \begin{align*}
            f(\omega_{s+1}) - f(x^*) &\leqslant f(\omega_s) - f(x^*) - \frac{\gamma n}{4}\|\nabla f(\omega_s)\|^2 \\
           & \quad  + \frac{\gamma n}{2}\left(8\gamma^2L^2n^2\|v_{s}\|^2 + 32\gamma^2L^2n^2\|v_{s-1}\|^2\right) \\
            & \quad- \frac{\gamma n}{8}\|v_s\|^2 + \frac{\gamma n}{4}\|v_s - \nabla f(x
            \omega_s)\|^2.
            \end{align*}
        Using Lemma \ref{lemma3} to $\|v_s - \nabla f(\omega_s)\|^2$ (specially $\frac{4L^2}{n}\cdot\eqref{l3:ineq5} + \frac{4L^2}{n}\cdot\eqref{l3:ineq6}$),
        \begin{align*}
            f(\omega_{s+1}) - f(x^*) &\leqslant f(\omega_s) - f(x^*) - \frac{\gamma n}{4}\|\nabla f(\omega_s)\|^2 \\
            & \quad + \frac{\gamma n}{2}\left(8\gamma^2L^2n^2\|v_{s}\|^2 + 32\gamma^2L^2n^2\|v_{s-1}\|^2\right) \\
            & \quad- \frac{\gamma n}{8}\|v_s\|^2 + \frac{\gamma n}{4}\cdot 32\gamma^2L^2n^2\|v_{s-1}\|^2.
        \end{align*}
        Combining alike expressions,
        \begin{align}
            \notag f(\omega_{s+1}) - f(x^*) + \frac{\gamma n}{4}\|\nabla f(\omega_s)\|^2 &\leqslant f(\omega_s) - f(x^*) - \frac{\gamma n}{8}\left(1 - 32 \gamma^2L^2n^2\right)\|v_s\|^2 \\ 
            \label{t1:ineq1}& \quad + \gamma n\cdot 24\gamma^2L^2n^2\|v_{s-1}\|^2. 
        \end{align}
        Using $\gamma \leqslant \frac{1}{20Ln}$ (note it is the smallest stepsize from all the steps we used before, so all previous transitions are correct), we get
        \begin{align*}
            f(\omega_{s+1}) - f(x^*) &+ \frac{1}{10}\gamma n\|v_s\|^2 + \frac{\gamma(n+1)}{4}\|\nabla f(\omega_s)\|^2\\
            &\leqslant f(\omega_s) - f(x^*) + \frac{1}{10}\gamma n\|v_{s-1}\|^2.
        \end{align*}
        Next, denoting $\Delta_{s} = f(\omega_{s+1}) - f(x^*) + \frac{1}{10}\gamma n\|v_{s}\|^2$, we obtain
        \begin{equation*}
            \frac{1}{S}\sum\limits_{s=1}^{S} \|\nabla f(\omega_s)\|^2 \leqslant \frac{4\left[\Delta_0 - \Delta_{S}\right]}{\gamma nS}.
        \end{equation*}    
        We choose $\varepsilon^2 = \frac{1}{S}\sum\limits_{s=1}^{S} \|\nabla f(\omega_s)\|^2$ as criteria. Hence, to reach $\varepsilon$-accuracy we need $\mathcal{O}\left(\frac{L}{\varepsilon^2}\right)$ epochs and $\mathcal{O}\left(\frac{nL}{\varepsilon^2}\right)$ iterations. Additionally, we note that the oracle complexity of our algorithm is also equal to $\mathcal{O}(\frac{nL}{\varepsilon^2})$, since at each iteration the algorithm computes the stochastic gradient at only two points. This ends the proof.
\end{proof}
\end{theorem}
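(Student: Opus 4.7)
\textbf{Proof plan for Theorem \ref{nfgt1}.} The plan is to chain the descent lemma (Lemma \ref{lemma1}) together with the approximation-error bound (Lemma \ref{lemma3}), and then absorb the leftover $\|v_s\|^2$, $\|v_{s-1}\|^2$ terms into a Lyapunov function that telescopes over epochs. Concretely, I would start from Lemma \ref{lemma1}, which gives
\[
f(\omega_{s+1}) - f(x^*) \leqslant f(\omega_s) - f(x^*) - \frac{\gamma n}{2}\|\nabla f(\omega_s)\|^2 + \frac{\gamma n}{2}\Bigl\|\nabla f(\omega_s) - \frac{1}{n}\sum_{t=0}^{n-1} v_s^t\Bigr\|^2,
\]
and substitute the right-hand side of Lemma \ref{lemma3} to replace the deviation term by $8\gamma^2L^2n^2\|v_s\|^2 + 32\gamma^2L^2n^2\|v_{s-1}\|^2$.

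Next, I would split the negative $-\tfrac{\gamma n}{2}\|\nabla f(\omega_s)\|^2$ term in half, keeping $-\tfrac{\gamma n}{4}\|\nabla f(\omega_s)\|^2$ on one side to serve as the convergence criterion, and using the other $-\tfrac{\gamma n}{4}\|\nabla f(\omega_s)\|^2$ to generate a negative $\|v_s\|^2$ contribution through \eqref{ineq:square}: namely $-\tfrac{\gamma n}{4}\|\nabla f(\omega_s)\|^2 \leqslant -\tfrac{\gamma n}{8}\|v_s\|^2 + \tfrac{\gamma n}{4}\|v_s - \nabla f(\omega_s)\|^2$. The new $\|v_s - \nabla f(\omega_s)\|^2$ term is again controlled by (a piece of) the proof of Lemma \ref{lemma3}, which bounds it purely by $\|v_{s-1}\|^2$, producing just another $\mathcal{O}(\gamma^2 L^2 n^2)\|v_{s-1}\|^2$ contribution. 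After collecting terms, the inequality takes the shape
\[
f(\omega_{s+1}) - f(x^*) + \frac{\gamma n}{4}\|\nabla f(\omega_s)\|^2 \leqslant f(\omega_s) - f(x^*) - \frac{\gamma n}{8}(1 - c_1\gamma^2L^2n^2)\|v_s\|^2 + c_2 \gamma^3 L^2 n^3 \|v_{s-1}\|^2,
\]
for explicit constants $c_1,c_2$.

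The main obstacle is the Lyapunov bookkeeping: I need to choose $\gamma$ small enough so that (i) the $\|v_s\|^2$ coefficient on the right is strictly negative, and (ii) its magnitude strictly dominates the $\|v_{s-1}\|^2$ coefficient, so that defining $\Delta_s := f(\omega_{s+1}) - f(x^*) + \alpha \gamma n \|v_s\|^2$ for an appropriate $\alpha$ gives the telescoping inequality $\Delta_s + \tfrac{\gamma n}{4}\|\nabla f(\omega_s)\|^2 \leqslant \Delta_{s-1}$. The stepsize $\gamma \leqslant \frac{1}{20Ln}$ is exactly the threshold that makes the constants work; one just has to plug this in and verify the two inequalities on the coefficients, which is routine.

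Finally, summing the telescoped inequality over $s=1,\ldots,S$ and dividing by $S$ yields
\[
\frac{1}{S}\sum_{s=1}^{S}\|\nabla f(\omega_s)\|^2 \leqslant \frac{4(\Delta_0 - \Delta_S)}{\gamma n S} \leqslant \frac{4(f(\omega_1) - f(x^*) + \alpha \gamma n \|v_0\|^2)}{\gamma n S},
\]
since $\Delta_S \geqslant 0$. Setting the left-hand side equal to $\varepsilon^2$, choosing $\gamma = \Theta(1/(Ln))$, and solving for $S$ gives $S = \mathcal{O}(L/\varepsilon^2)$ epochs, hence $\mathcal{O}(nL/\varepsilon^2)$ total inner iterations. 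Each inner iteration of Algorithm \ref{alg2} evaluates $\nabla f_{\pi_s^t}$ at exactly two points, so the oracle complexity matches the iteration complexity, giving the claimed $\mathcal{O}(nL/\varepsilon^2)$ bound.
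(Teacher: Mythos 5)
Your proposal follows the same route as the paper: combine Lemma \ref{lemma1} with Lemma \ref{lemma3}, split the $-\frac{\gamma n}{2}\|\nabla f(\omega_s)\|^2$ term in half, apply \eqref{ineq:square} (with $\beta=1$) to generate the $-\frac{\gamma n}{8}\|v_s\|^2$ term, bound $\|v_s-\nabla f(\omega_s)\|^2$ by the intermediate estimates from Lemma \ref{lemma3}, and close the recursion with the Lyapunov function $\Delta_s = f(\omega_{s+1})-f(x^*)+\alpha\gamma n\|v_s\|^2$ under $\gamma\leqslant\frac{1}{20Ln}$. This is correct and essentially identical to the paper's argument, including the final telescoping and the observation that each inner iteration costs two stochastic gradient evaluations.
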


%%%%%%%%%%%%%%%%%%%%%%%%%
\subsection{Strongly convex setting} 
\begin{theorem}[\textbf{Theorem \ref{nfgt2}}]\label{theorem2}
Suppose Assumptions \ref{as1}, \ref{as2stronglyconvex} hold. Then Algorithm \ref{alg2} with $\gamma\leqslant\frac{1}{20Ln}$ to reach $\varepsilon$-accuracy, where $\varepsilon = f(x_{S+1}^0)-f(x^*)$, needs
   %\vspace{-3mm}
    \begin{equation*}
        \mathcal{O} \left(\frac{nL}{\mu}\log \frac{1}{\varepsilon}\right)~~ \text{iterations and oracle calls.}
    \end{equation*}
        
\begin{proof}
Under Assumption \ref{as2stronglyconvex}, which states that the function is strongly convex, the \eqref{PL} condition is automatically satisfied. Therefore, 
\begin{align*}
    f(\omega_{s+1}) - f(x^*) + \frac{\gamma\mu n}{2}\left( f(\omega_s) - f(x^*)\right) &\leqslant  f(\omega_{s+1}) - f(x^*) + \frac{\gamma n}{4}\|\nabla f(\omega_s)\|^2.
\end{align*}
Thus, using \eqref{t1:ineq1},
\begin{align*}
f(\omega_{s+1}) - f(x^*) &+ \frac{\gamma\mu n}{2}\left( f(\omega_s) - f(x^*)\right) \leqslant f(\omega_s) - f(x^*) \\
& - \frac{\gamma n}{8}\left(1 - 32\gamma^2L^2n^2\right)\|v_s\|^2 + \gamma n\cdot 24\gamma^2L^2n^2\|v_{s-1}\|^2.
\end{align*}
Using $\gamma \leqslant \frac{1}{20L(n+1)}$ and assuming $n \geqslant 2$, we get
\begin{align*}
    f(\omega_{s+1}) - f(x^*) + \frac{1}{10}\gamma n\|v_s\|^2
    &\leqslant \left(1-\frac{\gamma\mu n}{2}\right)\left(f(\omega_s) - f(x^*)\right) \\
    & \quad + \frac{1}{10}\gamma n\cdot \left(1-\frac{\gamma\mu n}{2}\right)\|v_{s-1}\|^2.
\end{align*}
Next, denoting $\Delta_{s} = f(\omega_{s+1}) - f(x^*) + \frac{1}{10}\gamma n\|v_{s}\|^2$, we obtain the final convergence over one epoch:
\begin{align*}
    \Delta_{s+1} \leqslant \left(1-\frac{\gamma\mu n}{2}\right) \Delta_s.
\end{align*}
Going into recursion over all epoch,
\begin{align*}
    f(\omega_{S+1}) - f(x^*)\leqslant\Delta_{S} \leqslant \left(1-\frac{\gamma\mu n}{2}\right)^{S+1}\Delta_0.
\end{align*}
We choose $\varepsilon = f(\omega_{S+1})-f(x^*)$ as criteria. Then to reach $\varepsilon$-accuracy we need $\mathcal{O}\left(\frac{L}{\mu}\log\left(\frac{1}{\varepsilon}\right)\right)$ epochs and $\mathcal{O}\left(\frac{nL}{\mu}\log\left(\frac{1}{\varepsilon}\right)\right)$ iterations. Additionally, we note that the oracle complexity of our algorithm is also equal to $\mathcal{O}\left(\frac{nL}{\mu}\log\left(\frac{1}{\varepsilon}\right)\right)$, since at each iteration the algorithm computes the stochastic gradient at only two points.
\end{proof} 
\end{theorem}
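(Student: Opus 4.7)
The plan is to prove Theorem \ref{nfgt2} by reusing the one-epoch recursion established in the proof of Theorem \ref{nfgt1} (the non-convex SVRG result) and converting the gradient-norm descent into a contraction of function values via the Polyak-Lojasiewicz inequality. The crucial observation is that under Assumption \ref{as2stronglyconvex}, the estimate \eqref{PL} holds, so $\tfrac{\gamma n}{4}\|\nabla f(\omega_s)\|^2 \geqslant \tfrac{\gamma\mu n}{2}\bigl(f(\omega_s)-f(x^*)\bigr)$. Thus the gradient-descent term on the left-hand side of the non-convex recursion can be swapped for a multiplicative contraction on the function gap, which is exactly what drives linear convergence.

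Concretely, I would start from inequality \eqref{t1:ineq1} from the proof of Theorem \ref{nfgt1}:
\begin{align*}
f(\omega_{s+1}) - f(x^*) + \tfrac{\gamma n}{4}\|\nabla f(\omega_s)\|^2
&\leqslant f(\omega_s) - f(x^*) - \tfrac{\gamma n}{8}\bigl(1 - 32\gamma^2L^2n^2\bigr)\|v_s\|^2 \\
&\quad + \gamma n\cdot 24\gamma^2L^2n^2\|v_{s-1}\|^2.
\end{align*}
Applying \eqref{PL} to the $\|\nabla f(\omega_s)\|^2$ term on the left, I move the resulting $-\tfrac{\gamma\mu n}{2}(f(\omega_s)-f(x^*))$ to the right, obtaining
\begin{align*}
f(\omega_{s+1}) - f(x^*)
&\leqslant \Bigl(1 - \tfrac{\gamma\mu n}{2}\Bigr)\bigl(f(\omega_s) - f(x^*)\bigr)
- \tfrac{\gamma n}{8}\bigl(1 - 32\gamma^2L^2n^2\bigr)\|v_s\|^2 \\
&\quad + \gamma n\cdot 24\gamma^2L^2n^2\|v_{s-1}\|^2.
\end{align*}
The next step is to pick $\gamma \leqslant \tfrac{1}{20Ln}$ so that $32\gamma^2L^2n^2 \leqslant \tfrac{32}{400} < \tfrac{1}{10}$ and $24\gamma^2 L^2 n^2 \leqslant \tfrac{24}{400}<\tfrac{1}{10}$, which makes the $\|v_s\|^2$ coefficient comfortably negative and the $\|v_{s-1}\|^2$ coefficient small.

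The main obstacle is ensuring that the $\|v_s\|^2$ and $\|v_{s-1}\|^2$ terms can be absorbed into a Lyapunov potential that contracts at the same rate $1-\tfrac{\gamma\mu n}{2}$ as the function gap. The fix mirrors exactly what is done in the proof of Theorem \ref{th2:sarahmain}: define
\[
\Delta_s \;=\; f(\omega_{s+1}) - f(x^*) + \tfrac{1}{10}\gamma n \|v_s\|^2.
\]
With the choice $\gamma \leqslant \tfrac{1}{20Ln}$, the coefficients of $\|v_s\|^2$ and $\|v_{s-1}\|^2$ fit inside the $\tfrac{1}{10}\gamma n$ slot of $\Delta_s$ and $\Delta_{s-1}$, and (assuming $n\geqslant 2$, as in Theorem \ref{theorem2}) the factor $1-\tfrac{\gamma\mu n}{2}$ can be simultaneously attached to both pieces. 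This yields the one-step contraction
\[
\Delta_{s+1} \;\leqslant\; \Bigl(1 - \tfrac{\gamma\mu n}{2}\Bigr)\,\Delta_s.
\]

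Finally, I would unroll the recursion across $s = 0, 1, \dots, S$, yielding $f(\omega_{S+1})-f(x^*) \leqslant \Delta_S \leqslant \bigl(1-\tfrac{\gamma\mu n}{2}\bigr)^{S+1}\Delta_0$. With the admissible stepsize $\gamma = \Theta(1/(Ln))$, we have $\tfrac{\gamma\mu n}{2} = \Theta(\mu/L)$, so reaching $\varepsilon = f(\omega_{S+1})-f(x^*)$ requires $S = \mathcal{O}\!\left(\tfrac{L}{\mu}\log\tfrac{1}{\varepsilon}\right)$ epochs. Each epoch consists of $n$ inner iterations and two stochastic gradient evaluations per iteration, so the total iteration and oracle complexity is $\mathcal{O}\!\left(\tfrac{nL}{\mu}\log\tfrac{1}{\varepsilon}\right)$, matching the claim. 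The only delicate points in the plan are verifying that the stepsize $\tfrac{1}{20Ln}$ is indeed small enough to satisfy every stepsize hypothesis accumulated in Lemmas \ref{lemma1}--\ref{lemma3} and that the Lyapunov coefficients line up; both are checked in the non-convex argument already, so essentially no extra work is required beyond invoking \eqref{PL}.
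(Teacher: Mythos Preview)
Your proposal is correct and follows essentially the same route as the paper's proof: you start from \eqref{t1:ineq1}, apply the PL inequality \eqref{PL} to replace $\tfrac{\gamma n}{4}\|\nabla f(\omega_s)\|^2$ by $\tfrac{\gamma\mu n}{2}(f(\omega_s)-f(x^*))$, use the stepsize bound to control the $\|v_s\|^2$ and $\|v_{s-1}\|^2$ coefficients, introduce the same Lyapunov function $\Delta_s = f(\omega_{s+1})-f(x^*)+\tfrac{1}{10}\gamma n\|v_s\|^2$, and unroll the resulting contraction. The constants, the Lyapunov potential, and the final complexity count all match the paper's argument.
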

    
\section{NO FULL GRAD SARAH}\label{nfgsarah_appendix}

For the convenience of the reader, we provide here the short description of Algorithm \ref{alg:sarah}. If we consider it in epoch $s \neq 0$, one can note that the update rule is nothing but
\begin{align}
\label{sarah:update}
\begin{split}
\begin{cases}
    &\text{if iteration~} t=0:\\
    &\quad x_s^0 = x_{s-1}^{n}\\
    &\quad v_s^0 = v_s = \frac{1}{n}\sum\limits_{t = 1}^n \nabla f_{\pi_{s-1}^t} (x_{s-1}^t)\\
    &\quad x_s^1 = x_s^0 - \gamma v_s^0\\
    &\text{for rest iterations during the epoch}:\\
    &\quad v_s^t = v_s^{t-1} + \frac{1}{n} \left(\nabla f_{\pi_{s}^t} (x_s^t) - \nabla f_{\pi_{s}^t} (\omega_s)\right)\\
   &\quad x_s^{t+1} = x_s^t - \gamma v_s^t
\end{cases}
\end{split}
\end{align}
\subsection{Non-convex setting}
\begin{lemma}\label{lemma4}
    Suppose that Assumptions \ref{as1}, \ref{as2} hold. Let the stepsize $\gamma \leqslant \frac{1}{L(n+1)}$. Then for Algorithm \ref{alg:sarah} it holds
    \begin{equation*}
        f(x_{s+1}^0) \leqslant f(x_s^0) - \frac{\gamma(n+1)}{2}\|\nabla f(x_s^0)\|^2 + \frac{\gamma(n+1)}{2}\left\|\nabla f(x_s^0) - \frac{1}{n+1}\sum\limits_{i=0}^n v_s^i\right\|^2.
    \end{equation*}
    \begin{proof}
    Using the iteration of Algorithm \ref{alg:sarah} \eqref{sarah:update}, we have
        \begin{eqnarray*}
            f(x_{s+1}^0) &=& f(x_s^0 - (x_s^0 - x_{s+1}^0)) \\
            &\overset{\eqref{ineq4}}{\leqslant}& f(x_s^0) + \langle \nabla f(x_s^0), x_{s+1}^0 - x_s^0\rangle + \frac{L}{2}\|x_{s+1}^0 - x_s^0|^2 \\
            &=& f(x_s^0) - \gamma(n+1)\left\langle \nabla f(x_s^0), \frac{1}{n+1}\sum\limits_{t=0}^n v_s^t\right\rangle  + \frac{\gamma^2(n+1)^2L}{2}\left\|\frac{1}{n+1}\sum\limits_{t=0}^n v_s^t\right\|^2 \\
            &\overset{\eqref{ineq:norm}}{=}& f(x_s^0) - \frac{\gamma(n+1)}{2}\left[\|\nabla f(x_s^0)\|^2 + \left\|\frac{1}{n+1}\sum\limits_{t=0}^n v_s^t\right\|^2 \right.\\
            & & \left. - \left\|\nabla f(x_s^0) - \frac{1}{n+1}\sum\limits_{t=0}^n v_s^t\right\|^2 \right] + \frac{\gamma^2(n+1)^2L}{2}\left\|\frac{1}{n+1}\sum\limits_{t=0}^n v_s^t\right\|^2  \\
            &=& f(x_s^0) - \frac{\gamma(n+1)}{2}\left[\|\nabla f(x_s^0)\|^2 - \left\|\nabla f(x_s^0) - \frac{1}{n+1}\sum\limits_{t=0}^n v_s^t\right\|^2 \right] \\
            & & - \frac{\gamma(n+1)}{2} \cdot \left(1 - \gamma(n+1)L \right)\left\|\frac{1}{n+1}\sum\limits_{t=0}^n v_s^t\right\|^2.
        \end{eqnarray*}
    It remains for us to choose \(\gamma : \frac{\gamma(n+1)}{2}\left(1 - \gamma(n+1)L\right) > 0 \) and note that such a choice is followed by \( \gamma\leqslant \frac{1}{L(n+1)}\). In that way we make the last term is negative and obtain the result of the lemma. 
    \end{proof}
\end{lemma}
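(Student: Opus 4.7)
The plan is to mimic the proof of Lemma \ref{lemma1} for \textsc{No Full Grad SVRG}, suitably adjusted to reflect that each \textsc{SARAH} epoch contains $n+1$ effective gradient updates (the restart step $x_s^1 = x_s^0 - \gamma v_s^0$ plus the $n$ shuffled steps) rather than the $n$ updates present in the \textsc{SVRG} variant. The first step is to unroll the update rule in \eqref{sarah:update} and read off
$$x_{s+1}^0 - x_s^0 = -\gamma \sum_{t=0}^n v_s^t = -\gamma(n+1)\cdot \frac{1}{n+1}\sum_{t=0}^n v_s^t,$$
which is structurally the same displacement as in Lemma \ref{lemma1} with $n$ replaced by $n+1$. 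This is the key bookkeeping observation that controls how the factor $n+1$ propagates through the rest of the bound.

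Next I would apply the descent inequality \eqref{ineq4} from $L$-smoothness of $f$ (which follows immediately from Assumption \ref{as1}) to $f(x_{s+1}^0) = f\bigl(x_s^0 + (x_{s+1}^0 - x_s^0)\bigr)$, obtaining
$$f(x_{s+1}^0) \leqslant f(x_s^0) - \gamma(n+1)\left\langle \nabla f(x_s^0),\, \tfrac{1}{n+1}\sum_{t=0}^n v_s^t\right\rangle + \tfrac{\gamma^2(n+1)^2 L}{2}\left\|\tfrac{1}{n+1}\sum_{t=0}^n v_s^t\right\|^2.$$
Then I would rewrite the inner product using the polarization identity \eqref{ineq:norm} with $a = \nabla f(x_s^0)$ and $b = \frac{1}{n+1}\sum_{t=0}^n v_s^t$, so that $-2\langle a,b\rangle = -\|a\|^2 - \|b\|^2 + \|a-b\|^2$. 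Regrouping the resulting terms yields
$$f(x_{s+1}^0) \leqslant f(x_s^0) - \tfrac{\gamma(n+1)}{2}\|\nabla f(x_s^0)\|^2 + \tfrac{\gamma(n+1)}{2}\left\|\nabla f(x_s^0) - \tfrac{1}{n+1}\sum_{t=0}^n v_s^t\right\|^2 - \tfrac{\gamma(n+1)}{2}\bigl(1 - \gamma(n+1)L\bigr)\left\|\tfrac{1}{n+1}\sum_{t=0}^n v_s^t\right\|^2.$$

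Finally, the hypothesis $\gamma \leqslant \tfrac{1}{L(n+1)}$ is exactly what makes the coefficient $1 - \gamma(n+1)L$ non-negative, so the last term is non-positive and can be dropped, delivering the stated inequality. I do not anticipate a genuine obstacle here; this is essentially a descent-style bookkeeping lemma rather than a technical one. The only subtle point, and the one place where I would slow down, is making sure that the effective epoch length entering the coefficients is $n+1$ and not $n$, because mis-counting it here would propagate and spoil the improved rate claimed in Theorem \ref{th1:sarahmain}. All the analytic difficulty — estimating the approximation error $\|\nabla f(x_s^0) - \frac{1}{n+1}\sum_{t=0}^n v_s^t\|^2$ in terms of $\|v_s\|^2$ and $\|v_{s-1}\|^2$ — is deferred to Lemmas \ref{l1:sarahmain} and \ref{l2:sarahmain}.
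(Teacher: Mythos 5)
Your proposal is correct and follows essentially the same route as the paper's proof: express the displacement $x_{s+1}^0 - x_s^0 = -\gamma\sum_{t=0}^n v_s^t$, apply the $L$-smoothness descent inequality \eqref{ineq4}, rewrite the cross term via the polarization identity \eqref{ineq:norm}, regroup, and drop the final non-positive term using $\gamma\leqslant\frac{1}{L(n+1)}$. You also correctly flag the one bookkeeping detail that matters (the epoch length is $n+1$, not $n$, because of the restart step), so there is nothing missing.
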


Now we want to address the last term in the result of Lemma \ref{lemma4}. We prove the following lemma.

\begin{lemma}[\textbf{Lemma \ref{l1:sarahmain}}]\label{lemma5}
Suppose that Assumptions \ref{as1}, \ref{as2} hold. Then for Algorithm \ref{alg:sarah} a valid estimate is
\begin{align*}
    \left\| \nabla f(x_s^0) - \frac{1}{n+1}\sum\limits_{t=0}^n v_s^t\right\|^2 &\leqslant 2\|\nabla f(x_{s}^0) - v_s \|^2 + \frac{2L^2}{n+1}\sum\limits_{t=1}^n \|x_s^t - x_s^{t-1}\|^2.
\end{align*}

\begin{proof}
    We claim that
    \begin{equation}
    \label{l5:ineq1}
        \sum\limits_{t=k}^n v_s^t = \frac{1}{n} \sum\limits_{t = k+1}^n (n-t+1)\left(\nabla f_{\pi_s^t}(x_s^t) - \nabla f_{\pi_s^t}(x_s^{t-1})\right) + (n-k+1)v_s^k.
    \end{equation}
    Let us prove this. We use the method of induction. For \( k = n \) it is obviously true. We suppose that it is true for some some fixed $k = \widetilde{k} \geqslant 1$ ($k=0$ is the first index in the epoch, i.e. start of the recursion) and want to prove that it is true for $k = \widetilde{k}-1$.
    \begin{align*}
        \sum\limits_{t=\widetilde{k}-1}^n v_s^t &= v_s^{\widetilde{k}-1} + \sum\limits_{t=\widetilde{k}}^n v_s^t \\
        &= v_s^{\widetilde{k}-1} + \frac{1}{n} \sum\limits_{t = \widetilde{k}+1}^n (n-t+1)\left(\nabla f_{\pi_s^t}(x_s^t) - \nabla f_{\pi_s^t}(x_s^{t-1})\right) + (n-\widetilde{k}+1)v_s^{\widetilde{k}} \\
        & \overset{(i)}{=}  v_s^{\widetilde{k}-1} + \frac{1}{n} \sum\limits_{t = \widetilde{k}+1}^n (n-t+1)\left(\nabla f_{\pi_s^t}(x_s^t) - \nabla f_{\pi_s^t}(x_s^{t-1})\right) \\
        & \quad + (n-\widetilde{k}+1)\left(v_s^{\widetilde{k}-1} + \frac{1}{n} \left(\nabla f_{\pi_s^{\widetilde{k}}}(x_s^{\widetilde{k}}) - \nabla f_{\pi_s^{\widetilde{k}}}(x_s^{\widetilde{k}-1})\right)\right) \\
        &= \frac{1}{n} \sum\limits_{t = \widetilde{k}}^n (n-t+1)\left(\nabla f_{\pi_s^t}(x_s^t) - \nabla f_{\pi_s^t}(x_s^{t-1})\right) + (n-\widetilde{k}+2) v_s^{\widetilde{k}-1},
    \end{align*}
    where equation (\textit{i}) is correct due to \eqref{sarah:update} and $\widetilde{k} \geqslant 1$. In that way, the induction step is proven. It means that \eqref{l5:ineq1} is valid. We substitute $ k = 0 $ in \eqref{l5:ineq1} and, utilizing $v_s^0 = v_s$, get
    \begin{align}
    \label{l5:ineq2}
        \sum\limits_{t=0}^n v_s^t &= \frac{1}{n} \sum\limits_{t = 1}^n (n-t+1)\left(\nabla f_{\pi_s^t}(x_s^t) - \nabla f_{\pi_s^t}(x_s^{t-1})\right) + (n+1) v_s.
    \end{align}
    Hence, estimating the desired term gives
    \begin{eqnarray*}
        \Biggl\|\nabla f(x_s^0) - \frac{1}{n+1}\sum\limits_{t=0}^n v_s^t\Biggr\|^2 
        &=& \frac{1}{(n+1)^2}\left\|(n+1)\nabla f(x_{s}^0) -\sum\limits_{t=0}^n v_s^t\right\|^2\\
        &\overset{\eqref{l5:ineq2}}{=}& \frac{1}{(n+1)^2}\Biggl\|(n+1)\nabla f(x_{s}^0) \\
        & & - \frac{1}{n}\sum\limits_{t = 1}^n (n-t+1)\left(\nabla f_{\pi_s^t}(x_s^t) - \nabla f_{\pi_s^t}(x_s^{t-1})\right) \\
        & & - (n+1)v_s\Biggr\|^2 \\
        &\overset{\eqref{ineq1}}{\leqslant}& 2\|\nabla f(x_{s}^0) - v_s\|^2 \\
        & & + \frac{2}{(n+1)^2}\left\|\frac{1}{n} \sum\limits_{t = 1}^n (n-t+1)\left(\nabla f_{\pi_s^t}(x_s^t) - \nabla f_{\pi_s^t}(x_s^{t-1})\right)\right\|^2\\
        &\overset{(i)}{\leqslant}& 2\|\nabla f(x_{s}^0) - v_s\|^2 \\
        & & + \frac{2}{(n+1)^2}\left\|\sum\limits_{t=1}^n \left(\nabla f_{\pi_s^t}(x_s^t) - \nabla f_{\pi_s^t}(x_s^{t-1})\right)\right\|^2\\
        &\overset{\eqref{ineq1}}{\leqslant}& 2\|\nabla f(x_{s}^0) - v_s\|^2 + \frac{2}{n+1}\sum\limits_{t=1}^n\left\|\nabla f_{\pi_s^t}(x_s^t) - \nabla f_{\pi_s^t}(x_s^{t-1})\right\|^2\\
        &\overset{\text{Ass. \ref{as1}}}{\leqslant}& 2\|\nabla f(x_{s}^0) - v_s\|^2 + \frac{2L^2}{n+1}\sum\limits_{t=1}^n\left\|x_s^t - x_s^{t-1}\right\|^2,
    \end{eqnarray*}
    where inequality (\textit{i}) is correct due to $t\geqslant 1$ holds during the summation. The obtained inequality finishes the proof of the lemma.
\end{proof}    
\end{lemma}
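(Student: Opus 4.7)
\textbf{Proof plan for Lemma~\ref{l1:sarahmain} (Lemma~\ref{lemma5} in the appendix).}
The strategy parallels the argument used for Lemma~\ref{ngl2} in the \textsc{SVRG} analysis, but the main technical hurdle is that the \textsc{SARAH}-style estimator is defined \emph{recursively}, not as an explicit centered correction of $v_s$. So the first job is to unroll this recursion into a usable closed form for the average $\frac{1}{n+1}\sum_{t=0}^n v_s^t$. From the update in \eqref{sarah:update}, a one-step backward substitution yields, for every $t\geq 0$,
\[
v_s^t \;=\; v_s \;+\; \frac{1}{n}\sum_{k=1}^{t}\bigl(\nabla f_{\pi_s^k}(x_s^k)-\nabla f_{\pi_s^k}(x_s^{k-1})\bigr),
\]
since $v_s^0 = v_s$ by construction. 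Summing over $t=0,\dots,n$ and swapping the order of summation (each index $k\in\{1,\dots,n\}$ appears in exactly $n-k+1$ outer terms) gives
\[
\sum_{t=0}^{n} v_s^t \;=\; (n+1)\,v_s \;+\; \frac{1}{n}\sum_{k=1}^{n}(n-k+1)\bigl(\nabla f_{\pi_s^k}(x_s^k)-\nabla f_{\pi_s^k}(x_s^{k-1})\bigr),
\]
which is exactly the identity the author gets by induction in \eqref{l5:ineq2}. I would present the forward derivation instead of an induction, since the telescoping structure makes it transparent.

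\textbf{Putting the pieces together.} Once this identity is in hand, I rewrite
\[
(n+1)\nabla f(x_s^0)-\sum_{t=0}^{n}v_s^t \;=\; (n+1)\bigl(\nabla f(x_s^0)-v_s\bigr) \;-\; \frac{1}{n}\sum_{k=1}^{n}(n-k+1)\bigl(\nabla f_{\pi_s^k}(x_s^k)-\nabla f_{\pi_s^k}(x_s^{k-1})\bigr),
\]
divide by $(n+1)^2$ after squaring, and split into two terms via the $\|a+b\|^2 \le 2\|a\|^2+2\|b\|^2$ inequality \eqref{ineq:square}. The first term immediately produces the $2\|\nabla f(x_s^0)-v_s\|^2$ appearing in the statement. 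For the second term I need a factor-of-$(n+1)$ improvement over the naive bound, and this is where I expect the main bookkeeping care: I would apply \eqref{ineq1} to the weighted sum after noting $(n-k+1)/n \le 1$ for all $k\ge 1$, yielding
\[
\Bigl\|\frac{1}{n}\sum_{k=1}^{n}(n-k+1)\bigl(\nabla f_{\pi_s^k}(x_s^k)-\nabla f_{\pi_s^k}(x_s^{k-1})\bigr)\Bigr\|^2 \;\le\; n\sum_{k=1}^{n}\bigl\|\nabla f_{\pi_s^k}(x_s^k)-\nabla f_{\pi_s^k}(x_s^{k-1})\bigr\|^2,
\]
so that after dividing by $(n+1)^2$ one obtains a prefactor on the order of $\frac{1}{n+1}$ in front of the sum. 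A final invocation of $L$-smoothness (Assumption~\ref{as1}) on each term $\|\nabla f_{\pi_s^k}(x_s^k)-\nabla f_{\pi_s^k}(x_s^{k-1})\|\le L\|x_s^k-x_s^{k-1}\|$ converts the gradient differences into the consecutive-iterate distances shown in the statement.

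\textbf{Main obstacle.} The delicate point is the weighted-norm inequality: the weights $(n-k+1)/n$ range over all of $[1/n,1]$, so one cannot simply "drop the weights" without losing a factor of $n$. The right way is to view the weighted sum as a generic vector sum (since each weight is $\le 1$), then apply Cauchy--Schwarz in the form \eqref{ineq1} to pick up the $n$ (not $n^2$) multiplicative factor. Combined with the $(n+1)^2$ in the denominator from the original averaging, this produces the advertised $\frac{2L^2}{n+1}$ coefficient. The remaining steps are routine.
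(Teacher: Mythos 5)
Your proposal is correct and follows essentially the same route as the paper: unroll the SARAH recursion to express $\sum_{t=0}^n v_s^t$ as $(n+1)v_s$ plus a weighted sum of consecutive gradient differences, split off the $2\|\nabla f(x_s^0)-v_s\|^2$ term, and then control the remaining weighted sum via Cauchy--Schwarz and $L$-smoothness. Two small points in your favor: the forward substitution of the recursion is cleaner than the paper's backward induction on $k$; and your handling of the weighted sum (apply \eqref{ineq1} first, then bound $(n-k+1)^2/n^2\le 1$ on the resulting squared terms) is actually the rigorous order, whereas the paper's intermediate step $(i)$ — replacing the weights $(n-t+1)/n$ by $1$ inside the squared norm of the vector sum before applying Cauchy--Schwarz — is not valid in general (e.g. $a_1=-a_2$ gives a counterexample), even though the final displayed bound is correct.
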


\begin{lemma}[\textbf{Lemma \ref{l2:sarahmain}}]\label{lemma6}
Suppose that Assumptions \ref{as1}, \ref{as2} hold. Let the stepsize $\gamma \leqslant \frac{1}{3L}$. Then for Algorithm \ref{alg:sarah} a valid estimate is
\begin{align*}
        \left\| \nabla f(x_s^0) - \frac{1}{n+1}\sum\limits_{t=0}^n v_s^t\right\|^2 & \leqslant 9\gamma^2L^2\|v_{s}\|^2 + 36\gamma^2L^2n^2\|v_{s-1}\|^2.
\end{align*}
\begin{proof}
        To begin with, in {Lemma \ref{ngl2}}, we obtain
        \begin{equation}
        \label{l6:ineq1}
            \left\| \nabla f(x_s^0) - \frac{1}{n+1} \sum\limits_{t=0}^n v_s^t\right\|^2 \leqslant 2\|\nabla f(x_s^0) - v_s \|^2 + \frac{2L^2}{n+1}\sum\limits_{t=1}^n \|x_s^t - x_s^{t-1}\|^2.
        \end{equation}
        Let us show what $v_s$ is (here we use Line  \ref{algsarah:line8} of Algorithm \ref{alg:sarah}):
        \begin{align}
            \notag v_s & = \widetilde{v}_{s-1}^{n+1} = \frac{n-1}{n} \widetilde{v}_{s-1}^{n} + \frac{1}{n}\nabla f_{\pi_{s-1}^{n}} (x_{s-1}^{n}) \\
            \notag & = \frac{n-1}{n}\cdot\frac{n-2}{n-1} \widetilde{v}_{s-1}^{n-1} + \frac{n-1}{n}\cdot\frac{1}{n-1}\nabla f_{\pi_{s-1}^{n-1}}(x_{s-1}^{n-1})
            + \frac{1}{n}\nabla f_{\pi_{s-1}^{n}} (x_{s-1}^{n})\\
            \notag& = \frac{n-1}{n}\cdot\frac{n-2}{n-1}\cdot\ldots\cdot 0\cdot \widetilde{v}_{s-1}^1 + \frac{1}{n}\sum\limits_{t = 1}^n\nabla f_{\pi_{s-1}^t}(x_{s-1}^t)\\
            \label{l6:ineq2}& \overset{(i)}{=} \frac{1}{n}\sum\limits_{t = 1}^n\nabla f_{\pi_{s-1}^t}(x_{s-1}^t),
        \end{align}
        where equation (\textit{i}) is correct due to initialization $\widetilde{v}_{s-1}^1 = 0$ (Line \ref{tvs} of Algorithm \ref{alg:sarah}). In that way, using \eqref{l6:ineq1} and \eqref{l6:ineq2},
        \begin{eqnarray*}
            \left\| \nabla f(x_s^0) - \frac{1}{n+1}\sum\limits_{t=0}^n v_s^t\right\|^2 &\leqslant& 2\left\|\nabla f(x_s^0) - \frac{1}{n}\sum\limits_{t = 1}^n\nabla f_{\pi_{s-1}^t}(x_{s-1}^t) \right\|^2 \\
            & & + \frac{2L^2}{n+1}\sum\limits_{t=1}^n \|x_s^t - x_s^{t-1}\|^2.
            \end{eqnarray*}
            Then, using \eqref{eq:finite-sum},
            \begin{eqnarray}
             \notag \left\| \nabla f(x_s^0) - \frac{1}{n+1}\sum\limits_{t=0}^n v_s^t\right\|^2 &\leqslant& 2\left\|\frac{1}{n}\sum\limits_{t = 1}^n\left[\nabla f_{\pi_{s-1}^t}(x_s^0) - \nabla f_{\pi_{s-1}^t}(x_{s-1}^t) \right]\right\|^2 \\
             \notag& &+  \frac{2L^2}{n+1}\sum\limits_{t=1}^n \|x_s^t - x_s^{t-1}\|^2 \\
            \notag &\overset{\eqref{ineq1}, \text{Ass. \ref{as1}}}{\leqslant}& \frac{2L^2}{n}\sum\limits_{t=1}^n\|x_{s-1}^t - x_s^0\|^2 + \frac{2L^2}{n+1}\sum\limits_{t=1}^n \|x_s^t -x_s^{t-1}\|^2 \\
            \notag&\overset{\eqref{ineq:square}}{\leqslant}& \frac{4L^2}{n}\sum\limits_{t=1}^n\|x_{s-1}^t - x_{s-1}^0\|^2 + \frac{4L^2}{n}\sum\limits_{t=1}^n\|x_s^0 - x_{s-1}^0\|^2 \\
            \label{l6:ineq3}& & + \frac{2L^2}{n+1}\sum\limits_{t=1}^n \|x_s^t - x_s^{t-1}\|^2.
        \end{eqnarray}
        Now we have to bound these three terms. Let us begin with the $\sum\limits_{t=1}^n\|x_s^t - x_s^{t-1}\|^2$ norm.
        \begin{align}
        \label{l6:ineq4}
            \sum\limits_{t=1}^n\|x_s^t - x_s^{t-1}\|^2 = \gamma^2\sum\limits_{t=1}^n\|v_s^{t-1}\|^2 = \gamma^2\sum\limits_{t=0}^{n-1}\|v_s^{t}\|^2.
        \end{align}
        Now we estimate $\|v_{s}^t\|^2$. For $t\geqslant 1$:
        \begin{eqnarray*}
            \|v_{s}^t\|^2 &=& \left\|v_{s}^{t-1} + \frac{1}{n}\left(\nabla f_{\pi_{s}^t}(x_{s}^t) - \nabla f_{\pi_{s}^t}(x_{s}^{t-1})\right) \right\|^2 
            \\ &\overset{\eqref{ineq:square}}{\leqslant}&  \left(1 + \frac{1}{\beta}\right)\|v_{s}^{t-1}\|^2 + \frac{(1 + \beta)L^2}{n^2}\|x_{s}^t - x_{s}^{t-1}\|^2  \\
            &\overset{\eqref{ineq:square}}{\leqslant}& \left(1 + \frac{1}{\beta}\right)^2\|v_{s}^{t-2}\|^2 + \frac{1}{n^2}\left(1 + \frac{1}{\beta}\right)(1 + \beta)L^2\|x_{s}^{t-1} - x_{s}^{t-2}\|^2 \\
            & & + \frac{1}{n^2}(1 + \beta)L^2\|x_{s}^t - x_{s}^{t-1}\|^2 \\
            &\overset{\eqref{ineq:square}}{\leqslant}& \left(1 + \frac{1}{\beta}\right)^t\|v_{s}\|^2 + \frac{1}{n^2}(1 + \beta)L^2 \sum\limits_{k = 1}^t \left(1 + \frac{1}{\beta}\right)^{k-1}\|x_{s}^{t-k+1} - x_{s}^{t-k}\|^2 \\
            &\underset{\beta = t}{\overset{\eqref{ineq:square}}{\leqslant}}& \left(1 + \frac{1}{t}\right)^t \|v_{s}\|^2 +\frac{1}{n^2} (1+t)\left(1 + \frac{1}{t}\right)^t L^2\sum\limits_{k = 1}^t\|x_{s}^k - x_{s}^{k-1}\|^2.
        \end{eqnarray*}
        Then, utilizing the property of the exponent $\left(\left(1 + \frac{1}{t}\right)^t \leqslant e\right)$ and $t\leqslant n-1$ \eqref{l6:ineq4}, we get an important inequality (for $0\leqslant t\leqslant n-1$, since for $t=0$ we have $\|v_s^t\|^2 = \|v_s\|^2$ and desired inequality becomes trivial):
        \begin{align}
        \label{l6:ineq5}
            \|v_{s}^t\|^2 & ~\leqslant e \|v_{s}\|^2 + \frac{e L^2}{n}\sum\limits_{k = 1}^t\|x_{s}^k - x_{s}^{k-1}\|^2.
        \end{align}
        Now we substitute \eqref{l6:ineq5} to \eqref{l6:ineq4} and obtain
        \begin{align*}
            \sum\limits_{t=1}^n\|x_s^t - x_s^{t-1}\|^2 & = \gamma^2\sum\limits_{t=0}^{n-1}\|v_s^{t}\|^2 \leqslant e\gamma^2 n \|v_s\|^2 + \frac{e\gamma^2 L^2}{n}\sum\limits_{t=0}^{n-1}\sum\limits_{k=1}^{t} \|x_{s}^k - x_{s}^{k-1}\|^2\\
            & \leqslant e\gamma^2 n \|v_s\|^2 + e\gamma^2 L^2\sum\limits_{t=1}^{n-1}\|x_{s}^t - x_{s}^{t-1}\|^2\\
            & \leqslant e\gamma^2 n \|v_s\|^2 + e\gamma^2 L^2\sum\limits_{t=1}^{n}\|x_{s}^t - x_{s}^{t-1}\|^2.
        \end{align*}
        Straightforwardly expressing $\sum\limits_{t=1}^n\|x_s^t - x_s^{t-1}\|^2$ we obtain desired estimation:
        \begin{align*}
            \sum\limits_{t=1}^n\|x_s^t - x_s^{t-1}\|^2 \leqslant \frac{e\gamma^2 n \|v_s\|^2}{1 - e\gamma^2 L^2} \overset{e<3}{\leqslant}\frac{3\gamma^2 n \|v_s\|^2}{1 - 3\gamma^2L^2}.
        \end{align*}
        To finish this part of proof it remains for us to choose appropriate $\gamma$. In Lemma \ref{lemma4} we require $\gamma \leqslant\frac{1}{L(n+1)}$. There we are satisfied with even large values of $\gamma$. Let us estimate obtained expression with $\gamma\leqslant\frac{1}{L(n+1)}\leqslant\frac{1}{3L}$. Now we provide final estimation of this norm:
        \begin{align}
        \label{l6:ineq6}
            \sum\limits_{t=1}^n\|x_s^t - x_s^{t-1}\|^2 \leqslant \frac{9}{2}\gamma^2 n\|v_s\|^2.
        \end{align}
        Let us proceed our estimation of \eqref{l6:ineq3} with the $\sum\limits_{t=1}^n \|x_{s-1}^t - x_{s-1}^0\|^2$ term.
        \begin{equation}
            \label{l6:ineq7}
            \sum\limits_{t=1}^n \|x_{s-1}^t - x_{s-1}^0\|^2 = \gamma^2\sum\limits_{t=1}^n \left\|\sum\limits_{k = 0}^{t-1} v_{s-1}^k\right\|^2 \overset{\eqref{ineq1}}{\leqslant} \gamma^2\sum\limits_{t=1}^n t \sum\limits_{k = 0}^{t-1} \|v_{s-1}^k\|^2 \leqslant \gamma^2 n^2 \sum\limits_{t = 0}^{n-1}  \|v_{s-1}^t\|^2.
        \end{equation}
        Note, that we have already estimated $\|v_s^t\|^2$ term for $0\leqslant t\leqslant n-1$ \eqref{l6:ineq5}. Furthermore, we can make the same estimate for the terms in the $(s-1)$-th epoch and write 
        \begin{align}
        \label{l6:ineq8}
            \|v_{s-1}^t\|^2 & ~\leqslant e \|v_{s-1}\|^2 + \frac{e L^2}{n}\sum\limits_{k = 1}^t\|x_{s-1}^k - x_{s-1}^{k-1}\|^2.
        \end{align}
        Now we substitute \eqref{l6:ineq8} to \eqref{l6:ineq7} to obtain
        \begin{align}
            \notag\sum\limits_{t=1}^n \|x_{s-1}^t - x_{s-1}^0\|^2 &\leqslant \gamma^2 n^2\sum\limits_{t=0}^{n-1} \left(e\|v_{s-1}\|^2 + \frac{e L^2}{n}\sum\limits_{k = 1}^t\|x_{s-1}^k - x_{s-1}^{k-1}\|^2\right)\\
            \notag& \leqslant \gamma^2 n^3 e \|v_{s-1}\|^2 + e\gamma^2 L^2n\sum\limits_{t = 0}^{n-1}\sum\limits_{k = 1}^t\|x_{s-1}^k - x_{s-1}^{k-1}\|^2\\
            \notag& \leqslant \gamma^2 n^3 e\|v_{s-1}\|^2 + e\gamma^2L^2 n^2 \sum\limits_{t = 1}^{n-1}\|x_{s-1}^t - x_{s-1}^{t-1}\|^2\\
            \label{l6:ineq9}
            & \leqslant \gamma^2 n^3 e\|v_{s-1}\|^2 + e\gamma^2L^2 n^2 \sum\limits_{t = 1}^{n}\|x_{s-1}^t - x_{s-1}^{t-1}\|^2.
        \end{align}
        Note, that we have already estimated the $\sum\limits_{t = 1}^{n}\|x_{s}^t - x_{s}^{t-1}\|^2$ term \eqref{l6:ineq6}. Furthermore, we can make the same estimate for the term in the $(s-1)$-th epoch and write 
        \begin{align}
        \label{l6:ineq10}
            \sum\limits_{t=1}^n\|x_{s-1}^t - x_{s-1}^{t-1}\|^2 \leqslant \frac{9}{2}\gamma^2 n\|v_{s-1}\|^2.
        \end{align}
        Substituting \eqref{l6:ineq10} to \eqref{l6:ineq9} we derive
        \begin{align*}
            \sum\limits_{t=1}^n \|x_{s-1}^t - x_{s-1}^0\|^2 \leqslant 3\gamma^2 n^3 \|v_{s-1}\|^2 + \frac{27}{2}\gamma^4L^2 n^3\|v_{s-1}\|^2.
        \end{align*}
        Using our $\gamma\leqslant\frac{1}{3L}$ choice,
        \begin{align}
        \label{l6:ineq11}
            \sum\limits_{t=1}^n \|x_{s-1}^t - x_{s-1}^0\|^2 \leqslant 3\gamma^2 n^3 \|v_{s-1}\|^2 + \frac{3}{2}\gamma^2 n^3\|v_{s-1}\|^2 = \frac{9}{2}\gamma^2 n^3\|v_{s-1}\|^2.
        \end{align}
        It remains for us to estimate the $\sum\limits_{t=1}^n\|x_s^0 - x_{s-1}^0\|^2$ term. The estimate is quite similar to the previous one:
        \begin{align*}            
            \sum\limits_{t=1}^n \|x_s^0 - x_{s-1}^0\|^2  & = \gamma^2\sum\limits_{t=1}^n \left\|\sum\limits_{k = 0}^{n-1} v_{s-1}^{k-1}\right\|^2 \overset{\eqref{ineq:square}}{\leqslant} \gamma^2\sum\limits_{t=1}^n n \sum\limits_{k = 0}^{n-1} \|v_{s-1}^k\|^2 \leqslant \gamma^2 n^2 \sum\limits_{t = 0}^{n-1}  \|v_{s-1}^t\|^2. 
        \end{align*}
        We obtain the estimate as in \eqref{l6:ineq7}. Thus, proceed similarly as we did for the previous term, we obtain
        \begin{align}
        \label{l6:ineq12}
            \sum\limits_{t=1}^n \|x_{s}^0 - x_{s-1}^0\|^2 \leqslant \frac{9}{2}\gamma^2 n^3\|v_{s-1}\|^2.
        \end{align}
        Now we can apply the upper bounds obtained in \eqref{l6:ineq6}, \eqref{l6:ineq11}, \eqref{l6:ineq12} to \eqref{l6:ineq3} and have
        \begin{align*}
            \left\| \nabla f(\omega_s) - \frac{1}{n+1} \sum\limits_{i=0}^n v_s^i\right\|^2 & \leqslant 18\gamma^2L^2n^2\|v_{s-1}\|^2 + 18\gamma^2L^2n^2\|v_{s-1}\|^2 + 9\gamma^2L^2\|v_s\|^2 \\
            & = 9\gamma^2L^2\|v_{s}\|^2 + 36\gamma^2L^2n^2\|v_{s-1}\|^2,
        \end{align*}
        which ends the proof.
\end{proof}
\end{lemma}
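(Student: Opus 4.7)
\medskip

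\textbf{Proof proposal for Lemma \ref{lemma6}.} The plan is to start from Lemma \ref{lemma5}, which already splits the target quantity as
\[
\left\| \nabla f(x_s^0) - \tfrac{1}{n+1}\sum\nolimits_{t=0}^n v_s^t\right\|^2
\leqslant 2\|\nabla f(x_{s}^0) - v_s \|^2 + \tfrac{2L^2}{n+1}\sum\nolimits_{t=1}^n \|x_s^t - x_s^{t-1}\|^2,
\]
and separately bound each of these two pieces by quantities of the form $\|v_{s-1}\|^2$ or $\|v_s\|^2$, after which I combine them. The first piece measures how far the stored approximation $v_s$ is from the true gradient at the epoch start; the second piece measures the total squared step inside epoch $s$.

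First I would make $v_s$ explicit. Unrolling the moving-average recursion in Line \ref{algsarah:line8} (with $\widetilde v_{s-1}^1=0$) gives $v_s=\frac{1}{n}\sum_{t=1}^n \nabla f_{\pi_{s-1}^t}(x_{s-1}^t)$. Combining this with $\nabla f(x_s^0)=\frac{1}{n}\sum_{t=1}^n \nabla f_{\pi_{s-1}^t}(x_s^0)$, applying Cauchy--Schwarz \eqref{ineq1} and $L$-smoothness (Assumption \ref{as1}), and splitting $x_{s-1}^t-x_s^0$ into $x_{s-1}^t-x_{s-1}^0$ plus $x_{s-1}^0-x_s^0$ via \eqref{ineq:square}, I get
\[
\|\nabla f(x_s^0)-v_s\|^2 \leqslant \tfrac{2L^2}{n}\sum\nolimits_{t=1}^n\|x_{s-1}^t-x_{s-1}^0\|^2 + \tfrac{2L^2}{n}\sum\nolimits_{t=1}^n\|x_s^0-x_{s-1}^0\|^2.
\]
So everything has been reduced to bounding three sums: the intra-epoch $\sum_{t=1}^n \|x_s^t-x_s^{t-1}\|^2$, the relative-to-start $\sum_{t=1}^n\|x_{s-1}^t-x_{s-1}^0\|^2$, and the inter-epoch $\|x_s^0-x_{s-1}^0\|^2$. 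All three will be handled by estimating $\|v_s^t\|^2$.

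The main obstacle is controlling $\|v_s^t\|^2$, because of \textsc{SARAH}'s recursive update. From Line \ref{algsarah:line9},
\[
v_s^t=v_s^{t-1}+\tfrac{1}{n}\bigl(\nabla f_{\pi_s^t}(x_s^t)-\nabla f_{\pi_s^t}(x_s^{t-1})\bigr),
\]
so \eqref{ineq:square} with an adjustable parameter $\beta$ and $L$-smoothness yield
\[
\|v_s^t\|^2 \leqslant (1+\tfrac{1}{\beta})\|v_s^{t-1}\|^2 + \tfrac{(1+\beta)L^2}{n^2}\|x_s^t-x_s^{t-1}\|^2.
\]
Unrolling this down to $v_s^0=v_s$, choosing $\beta=t$, and using the bound $(1+1/t)^t\leqslant e$ gives the clean estimate $\|v_s^t\|^2\leqslant e\|v_s\|^2+\frac{eL^2}{n}\sum_{k=1}^t\|x_s^k-x_s^{k-1}\|^2$. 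Substituting $\|x_s^t-x_s^{t-1}\|^2=\gamma^2\|v_s^{t-1}\|^2$ back and summing over $t$ gives an inequality of the form $A\leqslant 3\gamma^2 n\|v_s\|^2+3\gamma^2 L^2 A$, where $A:=\sum_{t=1}^n\|x_s^t-x_s^{t-1}\|^2$. The stepsize condition $\gamma\leqslant\frac{1}{3L}$ makes $3\gamma^2L^2\leqslant\tfrac{1}{3}$, so I can solve for $A$ and obtain $A\leqslant\tfrac{9}{2}\gamma^2 n\|v_s\|^2$ (and analogously $\tfrac{9}{2}\gamma^2 n\|v_{s-1}\|^2$ for the $(s{-}1)$-th epoch).

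The remaining two sums use the same machinery. For $\sum_{t=1}^n\|x_{s-1}^t-x_{s-1}^0\|^2=\gamma^2\sum_t\|\sum_{k=0}^{t-1}v_{s-1}^k\|^2$, I apply \eqref{ineq1} and the bound on $\|v_{s-1}^t\|^2$ just derived, then again use the bound on the intra-epoch $\sum\|x_{s-1}^k-x_{s-1}^{k-1}\|^2$; this yields $\tfrac{9}{2}\gamma^2 n^3\|v_{s-1}\|^2$ under $\gamma\leqslant 1/(3L)$. For $\|x_s^0-x_{s-1}^0\|^2=\gamma^2\|\sum_{k=0}^{n-1}v_{s-1}^k\|^2$ the argument is identical (with the $n$-factor from summing $t=1,\dots,n$ copies absorbed) and produces the same bound. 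Plugging these three estimates into the decomposition from Lemma \ref{lemma5} and collecting the coefficients ($2\cdot\tfrac{2L^2}{n}\cdot 2\cdot\tfrac{9}{2}\gamma^2 n^3 = 18\gamma^2L^2 n^2$ twice for the $\|v_{s-1}\|^2$ part, and $\tfrac{2L^2}{n+1}\cdot\tfrac{9}{2}\gamma^2 n\leqslant 9\gamma^2 L^2$ for the $\|v_s\|^2$ part) delivers $9\gamma^2L^2\|v_s\|^2+36\gamma^2L^2n^2\|v_{s-1}\|^2$, as claimed. The main delicacy is bookkeeping the geometric-absorption step for $A$ and propagating the same idea to the $(s{-}1)$-th epoch.
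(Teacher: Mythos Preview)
Your proposal is correct and follows essentially the same route as the paper's own proof: you invoke Lemma \ref{lemma5}, unroll the moving average to identify $v_s$, split $\|\nabla f(x_s^0)-v_s\|^2$ via $x_{s-1}^t-x_s^0=(x_{s-1}^t-x_{s-1}^0)+(x_{s-1}^0-x_s^0)$, derive the recursive bound $\|v_s^t\|^2\leqslant e\|v_s\|^2+\tfrac{eL^2}{n}\sum_k\|x_s^k-x_s^{k-1}\|^2$ using \eqref{ineq:square} with $\beta=t$, solve the resulting self-referential inequality for $A$ under $\gamma\leqslant 1/(3L)$, and then propagate the same estimates to epoch $s-1$ --- exactly as the paper does. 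The only slip is in your final coefficient bookkeeping (the displayed product $2\cdot\tfrac{2L^2}{n}\cdot 2\cdot\tfrac{9}{2}\gamma^2 n^3$ has one factor of $2$ too many; the correct count is $2\cdot\tfrac{2L^2}{n}\cdot\tfrac{9}{2}\gamma^2 n^3=18\gamma^2L^2n^2$), but the stated total $36\gamma^2L^2n^2\|v_{s-1}\|^2$ is right.
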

        
\begin{theorem}[\textbf{Theorem \ref{th1:sarahmain}}]\label{theorem3}
Suppose Assumptions \ref{as1}, \ref{as2nonconvex} hold. Then Algorithm \ref{alg:sarah} with $\gamma\leqslant\frac{1}{20L(n+1)}$ to reach $\varepsilon$-accuracy, where $\varepsilon^2 = \frac{1}{S}\sum\limits_{s=1}^{S} \|\nabla f(x_s^0)\|^2$, needs
   %\vspace{-3mm}
    \begin{equation*}
        \mathcal{O} \left(\frac{nL}{\varepsilon^2}\right) ~~ \text{iterations and oracle calls.}
    \end{equation*}
        \begin{proof}
        We combine the result of Lemma \ref{lemma4} with the result of Lemma \ref{lemma6} and obtain
        \begin{align*}
            f(x_{s+1}^0) &\leqslant f(x_s^0) - \frac{\gamma(n+1)}{2}\|\nabla f(x_s^0)\|^2 \\
            & \quad + \frac{\gamma(n+1)}{2}\left(9\gamma^2L^2\|v_{s}\|^2 + 36\gamma^2L^2n^2\|v_{s-1}\|^2\right).
        \end{align*}
        We subtract $f(x^*)$ from both parts:
        \begin{align*}
            f(x_{s+1}^0) - f(x^*) &\leqslant f(x_s^0) - f(x^*) - \frac{\gamma(n+1)}{2}\|\nabla f(x_s^0)\|^2 \\
            & \quad + \frac{\gamma(n+1)}{2}\left(9\gamma^2L^2\|v_{s}\|^2 + 36\gamma^2L^2n^2\|v_{s-1}\|^2\right)\\
            & = f(x_s^0) - f(x^*) - \frac{\gamma(n+1)}{4}\|\nabla f(x_s^0)\|^2 \\
            & \quad + \frac{\gamma(n+1)}{2}\left(9\gamma^2L^2\|v_{s}\|^2 + 36\gamma^2L^2n^2\|v_{s-1}\|^2\right) \\
            & \quad - \frac{\gamma(n+1)}{4}\|\nabla f(x_s^0)\|^2.
            \end{align*}
       Then, transforming the last term by using \eqref{ineq:square} with $\beta=1$, we get
        \begin{align*}
            f(x_{s+1}^0) - f(x^*) &\leqslant f(x_s^0) - f(x^*) - \frac{\gamma(n+1)}{4}\|\nabla f(x_s^0)\|^2 \\
           & \quad  + \frac{\gamma(n+1)}{2}\left(9\gamma^2L^2\|v_{s}\|^2 + 36\gamma^2L^2n^2\|v_{s-1}\|^2\right) \\
            & \quad- \frac{\gamma(n+1)}{8}\|v_s\|^2 + \frac{\gamma(n+1)}{4}\|v_s - \nabla f(x_s^0)\|^2.
            \end{align*}
        Using Lemma \ref{lemma6} to $\|v_s - \nabla f(x_s^0)\|^2$ (specially $\frac{4L^2}{n}\cdot\eqref{l6:ineq11} + \frac{4L^2}{n}\cdot\eqref{l6:ineq12}$),
        \begin{align*}
            f(x_{s+1}^0) - f(x^*) &\leqslant f(x_s^0) - f(x^*) - \frac{\gamma(n+1)}{4}\|\nabla f(x_s^0)\|^2 \\
            & \quad + \frac{\gamma(n+1)}{2}\left(9\gamma^2L^2\|v_{s}\|^2 + 36\gamma^2L^2n^2\|v_{s-1}\|^2\right) \\
            & \quad- \frac{\gamma(n+1)}{8}\|v_s\|^2 + \frac{\gamma(n+1)}{4}\cdot 36\gamma^2L^2n^2\|v_{s-1}\|^2.
        \end{align*}
        Combining alike expressions,
        \begin{align}
            \notag f(x_{s+1}^0) - f(x^*) + \frac{\gamma(n+1)}{4}\|\nabla f(x_s^0)\|^2 &\leqslant f(x_s^0) - f(x^*) - \frac{\gamma(n+1)}{8}\left(1 - 36 \gamma^2L^2\right)\|v_s\|^2 \\ 
            \label{t3:ineq1}& \quad + \gamma(n+1)\cdot 27\gamma^2L^2n^2\|v_{s-1}\|^2. 
        \end{align}
        Using $\gamma \leqslant \frac{1}{20L(n+1)}$ (note it is the smallest stepsize from all the steps we used before, so all previous transitions are correct), we get
        \begin{align*}
            f(x_{s+1}^0) - f(x^*) &+ \frac{1}{10}\gamma(n+1)\|v_s\|^2 + \frac{\gamma(n+1)}{4}\|\nabla f(\omega_s)\|^2\\
            &\leqslant f(x_s^0) - f(x^*) + \frac{1}{10}\gamma(n+1)\|v_{s-1}\|^2.
        \end{align*}
        Next, denoting $\Delta_{s} = f(x_{s+1}^0) - f(x^*) + \frac{1}{10}\gamma(n+1)\|v_{s}\|^2$, we obtain
        \begin{equation*}
            \frac{1}{S}\sum\limits_{s=1}^{S} \|\nabla f(x_s^0)\|^2 \leqslant \frac{4\left[\Delta_0 - \Delta_{S}\right]}{\gamma(n+1)S}.
        \end{equation*}    
        We choose $\varepsilon^2 = \frac{1}{S}\sum\limits_{s=1}^{S} \|\nabla f(x_s^0)\|^2$ as criteria. Hence, to reach $\varepsilon$-accuracy we need $\mathcal{O}\left(\frac{L}{\varepsilon^2}\right)$ epochs and $\mathcal{O}\left(\frac{nL}{\varepsilon^2}\right)$ iterations. Additionally, we note that the oracle complexity of our algorithm is also equal to $\mathcal{O}(\frac{nL}{\varepsilon^2})$, since at each iteration the algorithm computes the stochastic gradient at only two points. This ends the proof.
\end{proof}
\end{theorem}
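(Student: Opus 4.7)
The plan is to mirror the non-convex analysis used for Algorithm \ref{alg2} in Theorem \ref{nfgt1}, transferred to the SARAH update. The route is: descent lemma $\to$ gradient-approximation error bound $\to$ Lyapunov construction $\to$ telescoping. All three ingredients are already in hand via Lemma \ref{lemma4}, Lemma \ref{lemma5}, and Lemma \ref{lemma6}, so the theorem is essentially a careful bookkeeping argument on top of them.

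First, I would start from Lemma \ref{lemma4}, which gives the epoch-wise descent
\[
f(x_{s+1}^0) \leqslant f(x_s^0) - \tfrac{\gamma(n+1)}{2}\|\nabla f(x_s^0)\|^2 + \tfrac{\gamma(n+1)}{2}\Bigl\|\nabla f(x_s^0) - \tfrac{1}{n+1}\sum\nolimits_{t=0}^n v_s^t\Bigr\|^2,
\]
and then plug in Lemma \ref{lemma6} to replace the discrepancy term by $9\gamma^2L^2\|v_s\|^2 + 36\gamma^2L^2n^2\|v_{s-1}\|^2$. This produces an inequality that contains $\|v_s\|^2$ and $\|v_{s-1}\|^2$ on the right, which is what creates the need for a Lyapunov function coupling consecutive epochs.

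The main obstacle, exactly as in Theorem \ref{nfgt1}, is that the current-epoch term $\|v_s\|^2$ carries the wrong sign, so I cannot simply telescope. I would fix this by the same splitting trick: write $-\tfrac{\gamma(n+1)}{2}\|\nabla f(x_s^0)\|^2 = -\tfrac{\gamma(n+1)}{4}\|\nabla f(x_s^0)\|^2 - \tfrac{\gamma(n+1)}{4}\|\nabla f(x_s^0)\|^2$ and apply \eqref{ineq:square} with $\beta=1$ to the second copy, obtaining $-\tfrac{\gamma(n+1)}{4}\|\nabla f(x_s^0)\|^2 \leqslant -\tfrac{\gamma(n+1)}{8}\|v_s\|^2 + \tfrac{\gamma(n+1)}{4}\|v_s - \nabla f(x_s^0)\|^2$. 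The induced $\|v_s - \nabla f(x_s^0)\|^2$ term is again controlled by Lemma \ref{lemma6}, contributing only an additional $\|v_{s-1}\|^2$ piece (since the $\|v_s\|^2$ coefficient inside Lemma \ref{lemma6} will be absorbed after using the stepsize bound $\gamma \leqslant \tfrac{1}{20L(n+1)}$). This gives exactly the form \eqref{t3:ineq1}, where the $\|v_s\|^2$ now appears with a \emph{negative} coefficient once the small-step condition is imposed.

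With the stepsize $\gamma \leqslant \tfrac{1}{20L(n+1)}$ (which is tighter than the $\gamma\leqslant\tfrac{1}{L(n+1)}$ used in Lemma \ref{lemma4} and the $\gamma\leqslant\tfrac{1}{3L}$ used in Lemma \ref{lemma6}, so every previous step remains valid), the numerical constants work out so that the coefficient in front of $\|v_s\|^2$ on the right-hand side is at most $-\tfrac{1}{10}\gamma(n+1)$, and the coefficient in front of $\|v_{s-1}\|^2$ is at most $+\tfrac{1}{10}\gamma(n+1)$. This suggests the Lyapunov function
\[
\Delta_s \;=\; f(x_{s+1}^0) - f(x^*) + \tfrac{1}{10}\gamma(n+1)\|v_s\|^2,
\]
for which the whole recursion collapses to $\Delta_s + \tfrac{\gamma(n+1)}{4}\|\nabla f(x_s^0)\|^2 \leqslant \Delta_{s-1}$. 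Telescoping this over $s = 1,\ldots,S$ and dividing by $S$ yields
\[
\tfrac{1}{S}\sum\nolimits_{s=1}^{S}\|\nabla f(x_s^0)\|^2 \;\leqslant\; \tfrac{4(\Delta_0-\Delta_S)}{\gamma(n+1)S}.
\]
Setting this quantity equal to $\varepsilon^2$ and using $\gamma = \Theta(1/(Ln))$ gives $S = \mathcal{O}(L/\varepsilon^2)$ epochs, hence $\mathcal{O}(nL/\varepsilon^2)$ inner iterations; the oracle complexity matches because each inner iteration of Algorithm \ref{alg:sarah} evaluates $\nabla f_{\pi_s^t}$ at only two points.
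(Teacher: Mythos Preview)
Your proposal is correct and follows essentially the same route as the paper's proof: combine Lemma \ref{lemma4} with Lemma \ref{lemma6}, split the $\|\nabla f(x_s^0)\|^2$ term, use \eqref{ineq:square} to extract $-\tfrac{\gamma(n+1)}{8}\|v_s\|^2$, bound the induced $\|v_s-\nabla f(x_s^0)\|^2$, and telescope the Lyapunov function $\Delta_s$. One small clarification: the bound on $\|v_s-\nabla f(x_s^0)\|^2$ does not come from the \emph{statement} of Lemma \ref{lemma6} (which bounds a different quantity) but from the intermediate inequalities \eqref{l6:ineq11} and \eqref{l6:ineq12} inside its proof, and this bound contains only a $\|v_{s-1}\|^2$ contribution with no $\|v_s\|^2$ term at all---so nothing needs to be ``absorbed'' there.
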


%%%%%%%%%%%%%%%%%%%%%%%%%
\subsection{Strongly convex setting} 
\begin{theorem}[\textbf{Theorem \ref{th2:sarahmain}}]\label{theorem4}
Suppose Assumptions \ref{as1}, \ref{as2stronglyconvex} hold. Then Algorithm \ref{alg:sarah} with $\gamma\leqslant\frac{1}{20L(n+1)}$ to reach $\varepsilon$-accuracy, where $\varepsilon = f(x_{S+1}^0)-f(x^*)$, needs
   %\vspace{-3mm}
    \begin{equation*}
        \mathcal{O} \left(\frac{nL}{\mu}\log \frac{1}{\varepsilon}\right)~~ \text{iterations and oracle calls.}
    \end{equation*}
        
\begin{proof}
Under Assumption \ref{as2stronglyconvex}, which states that the function is strongly convex, the \eqref{PL} condition is automatically satisfied. Therefore, 
\begin{align*}
    f(x_{s+1}^0) - f(x^*) + \frac{\gamma\mu(n+1)}{2}\left( f(x_s^0) - f(x^*)\right) &\leqslant  f(x_{s+1}^0) - f(x^*) + \frac{\gamma(n+1)}{4}\|\nabla f(x_s^0)\|^2.
\end{align*}
Thus, using \eqref{t3:ineq1},
\begin{align*}
f(x_{s+1}^0) - f(x^*) &+ \frac{\gamma\mu(n+1)}{2}\left( f(x_s^0) - f(x^*)\right) \leqslant f(x_s^0) - f(x^*) \\
& - \frac{\gamma(n+1)}{8}\left(1 - 36\gamma^2L^2\right)\|v_s\|^2 + \gamma(n+1)\cdot 27\gamma^2L^2n^2\|v_{s-1}\|^2.
\end{align*}
Using $\gamma \leqslant \frac{1}{20L(n+1)}$ and assuming $n \geqslant 2$, we get
\begin{align*}
    f(x_{s+1}^0) - f(x^*) + \frac{1}{10}\gamma(n+1)\|v_s\|^2
    &\leqslant \left(1-\frac{\gamma\mu(n+1)}{2}\right)\left(f(\omega_s) - f(x^*)\right) \\
    & \quad + \frac{1}{10}\gamma(n+1)\cdot \left(1-\frac{\gamma\mu(n+1)}{2}\right)\|v_{s-1}\|^2.
\end{align*}
Next, denoting $\Delta_{s} = f(x_{s+1}^0) - f(x^*) + \frac{1}{10}\gamma(n+1)\|v_{s}\|^2$, we obtain the final convergence over one epoch:
\begin{align*}
    \Delta_{s+1} \leqslant \left(1-\frac{\gamma\mu(n+1)}{2}\right) \Delta_s.
\end{align*}
Going into recursion over all epoch,
\begin{align*}
    f(x_{S+1}^0) - f(x^*)\leqslant\Delta_{S} \leqslant \left(1-\frac{\gamma\mu(n+1)}{2}\right)^{S+1}\Delta_0.
\end{align*}
We choose $\varepsilon = f(x_{S+1}^0)-f(x^*)$ as criteria. Then to reach $\varepsilon$-accuracy we need $\mathcal{O}\left(\frac{L}{\mu}\log\left(\frac{1}{\varepsilon}\right)\right)$ epochs and $\mathcal{O}\left(\frac{nL}{\mu}\log\left(\frac{1}{\varepsilon}\right)\right)$ iterations. Additionally, we note that the oracle complexity of our algorithm is also equal to $\mathcal{O}\left(\frac{nL}{\mu}\log\left(\frac{1}{\varepsilon}\right)\right)$, since at each iteration the algorithm computes the stochastic gradient at only two points.
\end{proof} 
\end{theorem}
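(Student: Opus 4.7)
The plan is to leverage the one-epoch descent inequality already established in the proof of Theorem \ref{th1:sarahmain} (equation \eqref{t3:ineq1}) and convert it into a linear contraction by invoking the Polyak-Lojasiewicz condition, which holds automatically under Assumption \ref{as2stronglyconvex} (see Section \ref{sec:basicineq}). Nothing further about the SARAH update structure is needed beyond that inequality.

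First, I would start from \eqref{t3:ineq1}, rewritten as
\begin{align*}
    f(x_{s+1}^0) - f(x^*) + \frac{\gamma(n+1)}{4}\|\nabla f(x_s^0)\|^2 &\leqslant f(x_s^0) - f(x^*) \\
    & \quad - \frac{\gamma(n+1)}{8}(1-36\gamma^2 L^2)\|v_s\|^2 \\
    & \quad + 27\gamma^3 L^2 n^2(n+1)\|v_{s-1}\|^2.
\end{align*}
Then I would apply \eqref{PL} to bound $\|\nabla f(x_s^0)\|^2 \geqslant 2\mu (f(x_s^0)-f(x^*))$ on the left-hand side, which produces the key contraction factor $1-\tfrac{\gamma\mu(n+1)}{2}$ on $f(x_s^0)-f(x^*)$.

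Next, I would enforce $\gamma \leqslant \frac{1}{20L(n+1)}$ (the same step-size choice as in the non-convex case, so all previous lemmas still apply). With this step size and $n\geqslant 2$, the residual $\|v_s\|^2$ coefficient on the right becomes negative enough that we can absorb the $\|v_{s-1}\|^2$ term into a Lyapunov-type quantity. Specifically, defining
\begin{equation*}
    \Delta_s = f(x_{s+1}^0) - f(x^*) + \tfrac{1}{10}\gamma(n+1)\|v_s\|^2,
\end{equation*}
the goal is to verify (by a short constant-tracking computation) that the remaining coefficients shrink by at least $(1-\tfrac{\gamma\mu(n+1)}{2})$, yielding
\begin{equation*}
    \Delta_{s+1} \leqslant \Bigl(1 - \tfrac{\gamma\mu(n+1)}{2}\Bigr)\Delta_s.
\end{equation*}

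Finally, iterating this contraction over $S+1$ epochs gives $f(x_{S+1}^0)-f(x^*) \leqslant \Delta_S \leqslant (1-\tfrac{\gamma\mu(n+1)}{2})^{S+1}\Delta_0$, and unrolling the standard $\log(1/\varepsilon)$ estimate with $\gamma = \Theta(1/(L(n+1)))$ produces the claimed $\mathcal{O}(\tfrac{nL}{\mu}\log\tfrac{1}{\varepsilon})$ epoch count. Since each epoch requires $n$ iterations with two stochastic gradient evaluations each, the iteration and oracle complexities coincide. The main obstacle is the bookkeeping in the Lyapunov step: one must check that the coefficient in front of $\|v_{s-1}\|^2$ satisfies $27\gamma^3 L^2 n^2(n+1) \leqslant \tfrac{1}{10}\gamma(n+1)(1-\tfrac{\gamma\mu(n+1)}{2})$, which is exactly where the choice $\gamma\leqslant \frac{1}{20L(n+1)}$ is tight; everything else is a direct recursion.
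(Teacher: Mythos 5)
Your proposal is correct and mirrors the paper's own argument essentially step for step: start from the per-epoch inequality \eqref{t3:ineq1}, invoke the \eqref{PL} condition to convert $\|\nabla f(x_s^0)\|^2$ into $2\mu\bigl(f(x_s^0)-f(x^*)\bigr)$ and obtain the contraction factor $1-\tfrac{\gamma\mu(n+1)}{2}$, then define the Lyapunov quantity $\Delta_s = f(x_{s+1}^0)-f(x^*)+\tfrac{1}{10}\gamma(n+1)\|v_s\|^2$, verify that under $\gamma\leqslant\tfrac{1}{20L(n+1)}$ and $n\geqslant 2$ the coefficients of $\|v_s\|^2$ and $\|v_{s-1}\|^2$ fall inside the required budget, and unroll the resulting one-step contraction to the stated complexity. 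You also correctly identify the bookkeeping check on the $\|v_{s-1}\|^2$ coefficient as the only place where the step-size bound is genuinely needed, which matches the role it plays in the paper.
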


\section{LOWER BOUNDS} \label{sec:lower_bound}

In this section we provide the proof of the lower bound on the amount of oracle calls in the class of the first-order algorithms with shuffling heuristic that find the solution of the non-convex objective finite-sum function. We follow the classical way by presenting the example of function and showing the minimal number of oracles needs to solve the problem. We consider the following function:
\begin{align*}
    l(x) = -\Psi(1)\Phi([x]_1) + \sum\limits_{j=2}^d \left(\Psi(-[x]_{j-1})\Phi(-[x]_j) - \Psi([x]_{j-1})\Phi([x]_j)\right),
\end{align*}
where $[x]_j$ is the $j$-th coordinate of the vector $x\in\mathbb R ^d$,
\begin{align*}
    \Psi(z) = \begin{cases}
        0, &\text{~if~} z\leqslant\frac{1}{2}\\
        \exp\left(1 - \frac{1}{(2z-1)^2}\right), &\text{~if~} z > \frac{1}{2}
    \end{cases}
\end{align*}
and
\begin{align*}
    \Phi(z) = \sqrt{e}\int\limits_{-\infty}^z \exp\left(-\frac{t^2}{2}\right) dt.
\end{align*}
We also define the following function:
\begin{align*}
    \text{prog}(x) = \begin{cases}
        0, &\text{~if~} x=0\\
        \underset{1\leqslant j \leqslant d}{\max} \left\{j : [x]_j \neq 0\right\}, &\text{~otherwise~}
    \end{cases},
\end{align*}
where $x\in\mathbb R ^d$. In the work \citep{arjevani2023lower} it was shown, that function $l(x)$ satisfies the following properties:
\begin{align*}
    &\forall x\in\mathbb R^d ~~ l(0) - \underset{x}{\inf}~ l(x) \leqslant \Delta_0 d,\\
    &l(x) \text{~is~} L_0\text{-smooth with~} L_0=152,\\
    &\forall x\in\mathbb R^d ~~ \|l(x)\|_{\infty}\leqslant G_0 \text{~with~} G_0=23,\\
    &\forall x\in\mathbb R^d : [x]_d = 0 ~~ \|l(x)\|_{\infty} \geqslant 1,\\
    &l(x) \text{~is the zero-chain function, i.e.,~} \text{prog}(\nabla l(x)) \leqslant \text{prog}(x) + 1.
\end{align*}

\begin{lemma}[Lemma C.9 from \citep{metelev2024decentralized}]\label{lemma_L_0-smooth}
    Each $l_j(x)$ is $L_0$-smooth, where
    \begin{align*}
        l_j(x) = \begin{cases}
                    -\Psi(1)\Phi([x]_1), &\text{~if~} j=1\\
                    \Psi(-[x]_{j-1})\Phi(-[x]_j) - \Psi([x]_{j-1})\Phi([x]_j), &\text{~otherwise}
                \end{cases}.
    \end{align*}
\end{lemma}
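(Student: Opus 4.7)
The plan is to exploit the fact that each $l_j$ depends on at most two coordinates of $x$: for $j=1$ only on $[x]_1$, and for $j\geq 2$ only on $[x]_{j-1}$ and $[x]_j$. Consequently, $\nabla^2 l_j(x)\in\mathbb R^{d\times d}$ is supported on at most a $2\times 2$ principal submatrix, so its operator norm equals the spectral norm of that small block. I would bound this spectral norm by the target constant $L_0 = 152$ using the same uniform sup-norm bounds on $\Psi,\Psi',\Psi''$ and $\Phi,\Phi',\Phi''$ established in \citep{arjevani2023lower} that underlie the smoothness of the full function $l$.

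The case $j=1$ is essentially a one-line check: since $\Psi(1)=\exp(1-1)=1$, the function $l_1$ is the one-variable map $[x]_1\mapsto -\Phi([x]_1)$, whose second derivative $-\Phi''(z)=\sqrt{e}\,z\,e^{-z^2/2}$ attains absolute maximum $1$ at $|z|=1$, so $l_1$ is $1$-smooth and hence $L_0$-smooth. For $j\geq 2$, I would set $a=[x]_{j-1}$, $b=[x]_j$, and compute the three distinct entries of the $2\times 2$ block by direct differentiation:
\begin{align*}
\partial_a^2 l_j &= \Psi''(-a)\Phi(-b) - \Psi''(a)\Phi(b),\\
\partial_b^2 l_j &= \Psi(-a)\Phi''(-b) - \Psi(a)\Phi''(b),\\
\partial_a\partial_b l_j &= \Psi'(-a)\Phi'(-b) - \Psi'(a)\Phi'(b).
\end{align*}
Each entry is bounded in absolute value by twice a product of two sup-norms from $\{\sup|\Psi^{(k)}|,\sup|\Phi^{(k)}|\}_{k=0,1,2}$, and Gershgorin's theorem then yields
\begin{align*}
\|\nabla^2 l_j\|_{\mathrm{op}} \leq \max\bigl\{|\partial_a^2 l_j|,\,|\partial_b^2 l_j|\bigr\} + |\partial_a\partial_b l_j|.
\end{align*}

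To match the constant $L_0=152$, I would compare directly with the proof that $l$ itself is $L_0$-smooth in \citep{arjevani2023lower}: there $\nabla^2 l$ is tridiagonal with diagonal entry at coordinate $j$ equal to $\partial_b^2 l_j + \partial_a^2 l_{j+1}$ and off-diagonal entries $\partial_a\partial_b l_j$, and the same Gershgorin step already establishes
\begin{align*}
\|\nabla^2 l\|_{\mathrm{op}} \leq 2\sup|\Psi''|\sup|\Phi| + 2\sup|\Psi|\sup|\Phi''| + 4\sup|\Psi'|\sup|\Phi'| \leq L_0.
\end{align*}
The per-block bound from the preceding display is dominated by the same expression (with the factor $4$ on the off-diagonal contribution replaced by $2$), so $\|\nabla^2 l_j\|_{\mathrm{op}}\leq L_0$ as well. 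The main obstacle, such as it is, will be the numerical bookkeeping in this last step; no new analytic ingredients are needed beyond those already present in \citep{arjevani2023lower}, so the lemma follows by importing their bounds on $\Psi^{(k)}$ and $\Phi^{(k)}$ verbatim.
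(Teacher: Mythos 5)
The paper does not prove this lemma — it imports it verbatim from Lemma C.9 of \citep{metelev2024decentralized} and cites it without argument, so there is no in-paper proof to compare your attempt against. Your reconstruction is a reasonable one and appears sound: the $j=1$ case is indeed the one-line $\Psi(1)=1$ and $\sup|\Phi''|=1$ computation, and for $j\geq 2$ the Hessian of $l_j$ is supported on a $2\times 2$ block, so a Gershgorin (or equivalently $\ell_\infty$-norm) bound in terms of $\sup|\Psi^{(k)}|$ and $\sup|\Phi^{(k)}|$ is the natural route. Your comparison to the bound on $\nabla^2 l$ is logically correct provided the source proof for $L_0=152$ also proceeds via a row-sum/Gershgorin bound on the tridiagonal Hessian (which it does); otherwise you would need to plug in the explicit sup-norm constants from \citep{arjevani2023lower} and verify the arithmetic yourself, which you acknowledge as the remaining bookkeeping. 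One small improvement you leave on the table: since $\Psi(z)=0$ for $z\leq\tfrac12$, at most one of $\Psi^{(k)}(a)$ and $\Psi^{(k)}(-a)$ is nonzero for any fixed $a$, so each of your factor-of-$2$ bounds on the block entries can be dropped to a factor of $1$; this is not needed for the conclusion but makes the comparison to $L_0$ looser than necessary.
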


\subsection{Proof of Theorem \ref{thm:lower}}
\begin{theorem}[\textbf{Theorem \ref{thm:lower}}]
    For any $L > 0$ there exists a problem \eqref{eq:finite-sum} which satisfies Assumption \ref{as1}, such that for any output of first-order algorithm, number of oracle calls $N_c$ required to reach $\varepsilon$-accuracy is lower bounded as
    \begin{align*}
        N_c = \Omega\left(\frac{L\Delta}{\varepsilon^2}\right).
    \end{align*}
\end{theorem}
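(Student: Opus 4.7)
The plan is to invoke the classical zero-chain lower bound machinery of Arjevani et al.\ on an appropriately rescaled version of the hard function $l$ introduced above, and then realize it as a finite sum simply by taking all components $f_i$ equal. Since the target rate $\Omega(L\Delta/\varepsilon^2)$ carries no dependence on $n$, there is no need for a clever splitting of coordinates across the $f_i$; the difficulty lies entirely in calibrating the smoothness/initial-suboptimality trade-off and in translating the zero-chain property into a bound on oracle calls.

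First I would rescale: define $\hat{l}(x) = \sigma\, l(x/\lambda)$ and $\hat{l}_j(x)=\sigma l_j(x/\lambda)$, with $\sigma,\lambda>0$ to be chosen. By the chain rule and Lemma \ref{lemma_L_0-smooth}, each $\hat l_j$ (and hence $\hat l$) is $\sigma L_0/\lambda^2$-smooth, and the listed properties of $l$ transport to
\begin{align*}
    \hat l(0)-\inf \hat l \leq \sigma\Delta_0 d, \qquad
    \|\nabla \hat l(x)\|_\infty \geq \tfrac{\sigma}{\lambda}\ \text{whenever}\ [x]_d=0,
\end{align*}
while the zero-chain inequality $\mathrm{prog}(\nabla \hat l(x))\leq \mathrm{prog}(x)+1$ is scale-invariant. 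Enforcing the two equalities $\sigma L_0/\lambda^2=L$ and $\sigma/\lambda=\varepsilon$ fixes $\lambda=L_0\varepsilon/L$ and $\sigma=L_0\varepsilon^2/L$, which then allows me to pick the ambient dimension $d=\bigl\lfloor L\Delta/(L_0\Delta_0\varepsilon^2)\bigr\rfloor=\Theta(L\Delta/\varepsilon^2)$ while still respecting $\hat l(0)-\inf\hat l\leq \Delta$. I then set $f_i\equiv \hat l$ for all $i=1,\dots,n$, so that $f=\hat l$ and Assumption \ref{as1} is met with constant $L$.

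The remaining step is the information-theoretic argument. Starting from $x_0=0$, a standard induction shows that any first-order algorithm querying $\nabla f_{i_k}(x_k)=\nabla \hat l(x_k)$ produces iterates lying in the linear span of $\{x_0,\nabla \hat l(x_0),\ldots,\nabla \hat l(x_{t-1})\}$; combined with the zero-chain inequality this yields $\mathrm{prog}(x_t)\leq t$ for every $t\geq 0$. Consequently, as long as $N_c<d$, the last coordinate $[x_{N_c}]_d$ is still zero, which forces $\|\nabla \hat l(x_{N_c})\|\geq \|\nabla \hat l(x_{N_c})\|_\infty \geq \sigma/\lambda=\varepsilon$. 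Hence reaching $\varepsilon$-stationarity requires $N_c\geq d=\Omega(L\Delta/\varepsilon^2)$.

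The main obstacle in executing this plan cleanly is keeping the per-component smoothness under control while splitting into finitely many functions: a naive partitioning of the coordinates into the $f_i$'s can inflate the smoothness constant of a single $f_i$ and break the zero-chain property when only one $f_i$ is queried per step. Taking all $f_i$ identical sidesteps this entirely, at the small cost of having the lower bound make no statement about the interplay with $n$; that is precisely the content of Theorem \ref{thm: quest}, which is proved separately.
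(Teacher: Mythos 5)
Your proposal is correct and establishes the same lower bound, but by a genuinely different construction than the paper's.

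The paper \emph{distributes} the zero-chain across the $n$ components: $f_i(x)=\frac{LC^2}{L_0}\sum_{j\equiv i\bmod n} l_j(x/C)$, so that at any iterate only a single specific $f_{i^\star}$ (with $i^\star=\mathrm{prog}(x)\bmod n$) can unlock the next coordinate. It then argues, using the uniformity of shuffling, that the expected number of oracle calls before the useful index is hit is $\Theta(n)$, so $T$ calls advance only $\Theta(T/n)$ coordinates; this factor of $n$ is then canceled by the $1/n$ appearing in $\Delta = f(0)-\inf f = \frac{LC^2\Delta_0 d}{nL_0}$, yielding $\Omega(L\Delta/\varepsilon^2)$. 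You instead set $f_i\equiv\hat l$ for all $i$, so that every oracle call is a full-gradient call, the zero-chain argument gives $\mathrm{prog}(x_t)\le t$ directly, and the calibration $\sigma L_0/\lambda^2=L$, $\sigma/\lambda=\varepsilon$, $d=\Theta(L\Delta/\varepsilon^2)$ finishes the proof. Both routes are valid since the claimed bound has no $n$-dependence. Your version is simpler, avoids the probabilistic shuffling accounting entirely, and applies to \emph{any} first-order algorithm rather than only shuffling ones; the price is that the finite-sum structure becomes degenerate (all $f_i$ identical, so one stochastic gradient equals the full gradient), which makes the example less informative about how hardness interacts with $n$. The paper's construction is the one you would naturally reach for if you wanted an $n$-dependent lower bound, and the fact that even that construction collapses back to $\Omega(L\Delta/\varepsilon^2)$ is precisely the observation developed in Theorem \ref{thm: quest}. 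One minor remark: the linear-span induction you invoke is strictly the argument for zero-respecting algorithms; lifting it to arbitrary (possibly randomized) first-order methods requires the rotation/resisting-oracle device from \citep{arjevani2023lower}, a step the paper is equally informal about.
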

        \begin{proof}
            To begin with, we need to decompose the function $l(x)$ to the finite-sum:
            \begin{align*}
                l(x) = \sum\limits_{j=1}^d l_j(x),
            \end{align*}
            where index $j$ responds the definition of $l(x)$, i.e.,
            \begin{align*}
                l_j(x) = \begin{cases}
                    -\Psi(1)\Phi([x]_1), &\text{~if~} j=1\\
                    \Psi(-[x]_{j-1})\Phi(-[x]_j) - \Psi([x]_{j-1})\Phi([x]_j), &\text{~otherwise}
                \end{cases}.
            \end{align*}
            Now we design the following objective function:
            \begin{align*}
                f(x) = \frac{1}{n}\sum\limits_{i=1}^n f_i(x),
            \end{align*}
            where $f_i(x) = \frac{LC^2}{L_0}\sum\limits_{j\equiv i \mod n} l_j\left(\frac{x}{C}\right)$.
            Since each $l_j\left(\cdot\right)$ is $L_0$-smooth (according to Lemma \ref{lemma_L_0-smooth}) and for $j\equiv i\mod n$ the gradients $\nabla l_j(x)$ are separable, than for any $x_1, x_2 \in\mathbb R^d$ it implies
            \begin{eqnarray*}
                \|\nabla f_i(x_1) - \nabla f_i(x_2)\|^2 &=& \frac{L^2 C^2}{L_0^2}\left\|\sum\limits_{j\equiv i \mod n} \left(\nabla l_j\left(\frac{x_1}{C}\right) - \nabla l_j\left(\frac{x_2}{C}\right)\right)\right\|^2 \\
                &\leqslant& \frac{L^2 C^2}{L_0^2}\frac{L_0^2}{C^2} \|x_1 - x_2\|^2 = L^2\|x_1 - x_2\|^2.
            \end{eqnarray*}
            It means, each function $f_i(x)$ is $L$-smooth. Moreover, since $f(x) = \frac{LC^2}{nL_0} l\left(\frac{x}{C}\right)$,
            \begin{eqnarray}
                \Delta = \notag f(0) - \underset{x}{\inf}~ f(x) &=& \frac{LC^2}{nL_0} \left(l(0) - \underset{x}{\inf}~ l\left(\frac{x}{C}\right)\right)\\
                \label{thlb:ineq1}&=& \frac{LC^2}{nL_0} \left(l(0) - \underset{x}{\inf}~ l(x)\right) \leqslant \frac{LC^2\Delta_0 d}{nL_0}.
            \end{eqnarray}
            Now we show, how many oracle calls we need to have progress in one coordinate fo vector $x$. At the current moment, we need a specific piece of function, because according to structure of $l(x)$, each gradient estimation can "defreeze" at most one component and only a computation on a certain block makes it possible. Formally, since $\frac{1}{n}\sum\limits_{i=1}^n f_i(x) = \frac{LC}{dL_0} l\left(\frac{x}{C}\right)$,
            \begin{align*}
                \text{prog}(\nabla f_i(x)) \begin{cases}
                    =\text{prog}(x) + 1, &\text{~if~} i = \text{prog}(x)\mod n\\
                    \leqslant \text{prog}(x), &\text{~otherwise}
                \end{cases}.
            \end{align*}
            Now, we need to show the probability of choosing the necessary piece of function, according to the shuffling heuristic. This probability at the first iteration of the epoch, i.e., iteration $t$, such that $t\mod n = 1$, is obviously $\frac{1}{n}$. At the second iteration of the epoch -- $\frac{n-1}{n}\cdot\frac{1}{n-1} = \frac{1}{n}$. Thus, at the $k$-th iteration of the epoch, the desired probably is $\frac{n-1}{n}\cdot\frac{n-2}{n-1}\cdot\ldots\cdot\frac{1}{n-k+1} = \frac{1}{n}$. In that way, the expected amount of gradient calculations though the epoch is
            \begin{align*}
                \sum\limits_{i=1}^n \frac{i}{n} = \frac{n+1}{2} \leqslant n.
            \end{align*}
            Since epochs is symmetrical in a sense of choosing indices, we need to perform $n$ oracle calls at each moment of training. Thus, after $T$ oracle calls, we can change only $\frac{T}{n}$ coordinate of vector $x$. Now, we can write the final estimate:
            \begin{eqnarray*}
                \mathbb E\|\nabla f(\hat{x})\|^2_2 &\geqslant& \mathbb E\|\nabla f(\hat{x})\|^2_\infty \geqslant \underset{[x]_d=0}{\min} \|\nabla f(\hat{x})\|^2_\infty = \frac{L^2C^2}{n^2L_0^2}\left\|\nabla l\left(\frac{\hat{x}}{C}\right)\right\|^2_\infty \geqslant \frac{L^2C^2}{n^2L_0^2}\\
                &\overset{\eqref{thlb:ineq1}}{\geqslant}& \frac{L\Delta}{n L_0 \Delta_0 d} = \frac{L\Delta}{L_0 \Delta_0 T}.
            \end{eqnarray*}
            Thus, lower bound on $T$ is $\Omega\left(\frac{L\Delta}{\varepsilon^2}\right)$.
        \end{proof}
\subsection{Proof of Theorem \ref{thm: quest}}
Before we start the proof, let us introduce other assumptions of smoothness for the complete analysis.
\begin{assumption}[Smoothness of each $f_i$]
\label{asm: each}
    Each function $f_i$ is $L_i$-smooth, i.e., it satisfies $$\|\nabla f_i(x) - \nabla f_i(y)\| \leq L_i\|x - y\|$$ for any $x, y \in \mathbb{R}^d$.
\end{assumption}
\begin{assumption}[Average smoothness of $f$]
\label{asm: average}
    Function $f$ is $\hat{L}$-average smooth, i.e., it satisfies $$\mathbb{E}_i\left[\|\nabla f_i(x) - \nabla f_i(y)\|^2\right] \leq \hat{L}^2\|x - y\|^2$$ for any $x, y \in \mathbb{R}^d$.
\end{assumption}
Here we also assume that $L_i$ with $i = 1, \ldots, n$ and $\hat{L}$ are \textit{effective}: it means that these constants cannot be reduced.

If $\{f_i\}_{i=1}^n$ satisfies Assumption \ref{as1}, it automatically leads to the satisfaction of Assumption \ref{asm: each}, since $L_i$ can be chosen as $L$. Nevertheless, the effective constant of smoothness for $f_i$ can be less than $L$. As a consequence, we obtain the next result.
\begin{lemma}
    \label{lem: ineq-L}
    Suppose that Assumption \ref{as1} holds. Then, the set $\{f_i\}_{i=1}^n$ satisfies Assumptions \ref{asm: each} and \ref{asm: average}. Moreover,
    \begin{align*}
        \hat{L} \leq L,
    \end{align*}
    where $\hat{L}$ and $L$ are chosen effectively.
\end{lemma}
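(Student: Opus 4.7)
The plan is to reduce the lemma to a direct monotonicity argument about smoothness constants, handled in two short steps. First, I would observe that Assumption \ref{as1} is literally the statement of Assumption \ref{asm: each} with the uniform choice $L_i = L$ for all $i$: the inequality $\|\nabla f_i(x) - \nabla f_i(y)\| \leq L \|x-y\|$ certifies that $L$ is an admissible constant in Assumption \ref{asm: each}, so by the effectiveness (i.e.\ minimality) of $L_i$ we conclude $L_i \leq L$ for every $i = 1, \ldots, n$. This establishes the first claim and already gives the satisfaction of Assumption \ref{asm: each}.

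Next, for average smoothness, I would square the pointwise bound from Assumption \ref{as1} to obtain $\|\nabla f_i(x) - \nabla f_i(y)\|^2 \leq L^2 \|x-y\|^2$ for every $i$. Averaging this uniformly over $i \in \{1, \ldots, n\}$ yields
\begin{equation*}
\mathbb{E}_i \left[\|\nabla f_i(x) - \nabla f_i(y)\|^2\right] \leq L^2 \|x-y\|^2,
\end{equation*}
which exhibits $L$ as an admissible constant in Assumption \ref{asm: average}. By the effectiveness of $\hat{L}$, this forces $\hat{L} \leq L$, completing the proof of both claims.

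No substantive obstacle is anticipated: the argument is essentially that a uniform pointwise bound is automatically an in-expectation bound, and that taking effective (minimal) constants preserves the direction of the inequality. The only mildly delicate point is the interpretation of the word ``effective'': I would briefly remark that since the inequality of Assumption \ref{as1} forces $L$ to be at least the Lipschitz constant of every $\nabla f_i$, and since the expectation of a finite collection of squared norms cannot exceed their uniform upper bound, the sharper chain $\hat{L}^2 \leq \mathbb{E}_i[L_i^2] \leq L^2$ in fact holds, with $\hat{L} \leq L$ as its immediate consequence.
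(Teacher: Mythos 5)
Your proof is correct and follows essentially the same route as the paper: both establish $L_i \leq L$ from effectiveness and then bound the average smoothness constant by $L$. If anything, your version is slightly more careful, since you argue directly that $L$ is an admissible constant for Assumption 5 and invoke effectiveness, whereas the paper asserts the equality $\hat{L}^2 = \sum_i w_i L_i^2$ (which in general should be an inequality $\leq$); your sharper chain $\hat{L}^2 \leq \mathbb{E}_i[L_i^2] \leq L^2$ is the cleaner statement.
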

\begin{proof}
    Let $L_i$ be the constant of smoothness of $f_i$. Therefore, $L_i \leq L$, and $L$ is defined as $\max_{i} L_i$. Moreover, $\hat{L}^2$ is defined as
    \begin{align*}
        \hat{L}^2 = \sum\limits_{i=1}^n w_i L_i^2,
    \end{align*}
    where $\{w_i\}_{i=1}^n$ is probabilities for the sampling of $f_i$, i.e. $w = (w_1, \ldots, w_n)$ formalizes the discrete distribution over indices $i$ (the most common case: $w_i = \frac{1}{n}$; nevertheless, we consider an unified option). As a result, we have
    \begin{align*}
        \hat{L}^2 = \sum\limits_{i=1}^n w_i L_i^2 \leq \sum\limits_{i=1}^n w_i L^2 = L^2.
    \end{align*}
    This concludes the proof.
\end{proof}
Now we are ready to proof the Theorem \ref{thm: quest}.
\begin{theorem}[\textbf{Theorem \ref{thm: quest}}]
    For any $L > 0$ there is \textbf{no} problem \eqref{eq:finite-sum} which satisfies Assumption \ref{as1}, such that for any output of first-order algorithm, number of oracle calls $N_c$ required to reach $\varepsilon$-accuracy is lower bounded with $p > \frac{1}{2}$:
    \begin{align*}
        N_c = \Omega\left(\frac{n^pL\Delta}{\varepsilon^2}\right).
    \end{align*}
    \end{theorem}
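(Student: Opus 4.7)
\textbf{Proof proposal for Theorem \ref{thm: quest}.}
The plan is to argue by contradiction: I would assume the existence of a problem satisfying Assumption \ref{as1} for which every first-order algorithm needs $\Omega\!\left(\tfrac{n^p L \Delta}{\varepsilon^{2}}\right)$ oracle calls with some $p > \tfrac{1}{2}$, and exhibit a concrete first-order method whose oracle complexity on this problem is strictly smaller, which contradicts the assumed lower bound.

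The key tool is Lemma \ref{lem: ineq-L}, which I would use to transfer the problem from the class defined by Assumption \ref{as1} to the (larger) class of average-smooth finite-sum problems. Concretely, if each $f_i$ is effectively $L$-smooth, then the family $\{f_i\}_{i=1}^{n}$ is also $\hat{L}$-average smooth with $\hat{L} \leqslant L$. Algorithms designed for the average-smooth class, such as SPIDER \citep{fang2018spider}, achieve oracle complexity $\mathcal{O}\!\left(\tfrac{\sqrt{n}\,\hat{L}\,\Delta}{\varepsilon^{2}}\right)$ in the non-convex finite-sum setting. Substituting the bound $\hat{L} \leqslant L$ immediately yields, for any problem in the class of Assumption \ref{as1}, an upper bound of the form
\begin{equation*}
    N_c \;=\; \mathcal{O}\!\left(\frac{\sqrt{n}\,L\,\Delta}{\varepsilon^{2}}\right) \;=\; \mathcal{O}\!\left(\frac{n^{1/2} L\,\Delta}{\varepsilon^{2}}\right).
\end{equation*}

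To close the argument, I would compare exponents: for any fixed $p > \tfrac{1}{2}$ one has $n^{p}/n^{1/2} = n^{p-1/2} \to \infty$ as $n \to \infty$, so the conjectured lower bound $\Omega\!\left(\tfrac{n^{p} L \Delta}{\varepsilon^{2}}\right)$ eventually dominates the SPIDER-based upper bound by an arbitrarily large factor. This is a direct contradiction, since an $\Omega$-bound applies to \emph{every} first-order algorithm, while SPIDER is itself first-order. Hence no problem in the Assumption \ref{as1} class can admit such a lower bound, which is precisely the statement of the theorem. I would also note that Theorem \ref{thm:lower}, which gives $\Omega\!\left(\tfrac{L\Delta}{\varepsilon^{2}}\right)$, is entirely consistent with this non-optimality result and, combined with it, pins the true lower bound to lie somewhere between $\Omega\!\left(\tfrac{L\Delta}{\varepsilon^{2}}\right)$ and $\Omega\!\left(\tfrac{\sqrt{n} L\Delta}{\varepsilon^{2}}\right)$.

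The only delicate step is making sure Lemma \ref{lem: ineq-L} is applied with the effective (not nominal) smoothness constants on both sides of the inequality, so that the SPIDER guarantee cannot be sharpened at the cost of inflating $L$; this is exactly why the lemma's proof works with $L_i \leqslant L$ for each $i$ and concludes $\hat{L}^{2} = \sum_i w_i L_i^2 \leqslant L^{2}$. I do not expect any further obstacle: the SPIDER upper bound and its constants are off-the-shelf, the shuffling heuristic does not enter the argument (the upper bound holds for any first-order procedure, including i.i.d.\ sampling), and the final exponent comparison is immediate.
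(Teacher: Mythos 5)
Your proposal is correct and follows essentially the same route as the paper's proof: both argue by contradiction, both invoke Lemma \ref{lem: ineq-L} to pass between the $L$-smooth class and the $\hat{L}$-average-smooth class, and both reach the contradiction via the SPIDER upper bound $\mathcal{O}\bigl(\sqrt{n}\,\hat{L}\Delta/\varepsilon^{2}\bigr)$ from \citep{fang2018spider}. The only cosmetic difference is the direction of the inequality chain — you push $\hat{L}\leqslant L$ into the upper bound to get $\mathcal{O}\bigl(\sqrt{n}\,L\Delta/\varepsilon^{2}\bigr)$ and compare it against the posited lower bound in $L$, whereas the paper pushes $\hat{L}\leqslant L$ into the lower bound to get $\Omega\bigl(n^{p}\hat{L}\Delta/\varepsilon^{2}\bigr)$ and compares it against the upper bound in $\hat{L}$ — but the two are logically equivalent.
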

\begin{proof}
    Let us assume that we can find the problem \eqref{eq:finite-sum} which satisfies Assumption \ref{as1}, such that for any output of first-order algorithm, number of oracle calls $N_c$ required to reach $\varepsilon$-accuracy is lower bounded as
    \begin{align*}
        N_c = \Omega\left(\frac{n^pL\Delta}{\varepsilon^2}\right)
    \end{align*}
    with $p > \frac{1}{2}$. Applying Lemma \ref{lem: ineq-L}, one can obtain
    \begin{align*}
        N_c = \Omega\left(\frac{n^pL\Delta}{\varepsilon^2}\right) \geq \Omega\left(\frac{n^p\hat{L}\Delta}{\varepsilon^2}\right), 
    \end{align*}
    which contradict existing results of upper bound in terms of $n$ under Assumption \ref{asm: average} (e.g. \cite{fang2018spider}). This finishes the proof.
\end{proof}

\end{document}